\newcommand{\half}{{\nicefrac{1}{2}}}
\newcommand{\R}{\mathbb{R}}
\newcommand{\by}{\times}
\newcommand{\norm}[2]{\lVert{#1}\rVert_{#2}}
\newcommand{\ind}[1]{[\![{#1}]\!]}
\newcommand{\diam}{\textrm{diam}}
\newcommand{\vect}[1]{\mathbf{#1}}
\newcommand{\vzero}{\vect{0}}
\newcommand{\vb}{\vect{b}}
\newcommand{\vn}{\vect{n}}
\newcommand{\vp}{\vect{p}}
\newcommand{\vq}{\vect{q}}
\newcommand{\vr}{\vect{r}}
\newcommand{\vv}{\vect{v}}
\newcommand{\vw}{\vect{w}}
\newcommand{\vx}{\vect{x}} 
\newcommand{\vy}{\vect{y}}
\newcommand{\vz}{\vect{z}}
\newcommand{\vt}{\vect{\boldsymbol{\theta}}}
\newcommand{\vmu}{\vect{\boldsymbol{\mu}}}
\newcommand{\vrho}{\vect{\boldsymbol{\rho}}}
\newcommand{\ncw}{p}      
\newcommand{\dd}{n}	  
\newcommand{\Dict}{B}
\newcommand{\CP}{\mathfrak{B}}  
\newcommand{\at}{\vb}
\newcommand{\atm}{\at_{\max}}
\newcommand{\tv}{\vy}
\newcommand{\wv}{\vw}
\newcommand{\we}{w}
\newcommand{\wvo}{\hat{\wv}}
\newcommand{\weo}{\hat{\we}}
\newcommand{\cp}{P} 
\newcommand{\ch}{H} 
\newcommand{\lm}{{\lambda_{\max}}}
\newcommand{\vto}{\hat{\vt}} 
\newcommand{\vtf}{{\color{black}{\vt_{\!F}}}} 
\newcommand{\vzo}{\hat{\vz}} 
\newcommand{\rh}{\hat{r}} 
\newcommand{\sh}{\hat{s}} 
\newcommand{\FS}{\mathcal F} 
\newcommand{\RR}{\mathcal{R}} 
\newcommand{\GR}{\mathcal{G}} 
\newcommand{\DM}{{\color{black}D}} 
\newcommand{\ADM}{{\color{black}\mathcal{R}}} 
\newcommand{\SPH}{{\color{black}S}} 
\newcommand{\vqb}{\vq_d} 
\newcommand{\rd}{r_d} 
\newcommand{\fr}{{\color{black}{\psi}}} 
\newcommand{\ddf}{H}
\newcommand{\DT}{{\rm DT}}
\newcommand{\ST}{{\rm ST}}
\newcommand{\nc}{\tau} 
\newcommand{\lat}{\lambda_{t}}
\newcommand{\IS}{{\cal I}} 
\newcommand{\upa}[1] {{\scriptstyle\uparrow}\hspace{-1.8pt}{\scriptscriptstyle #1}}      
\newcommand{\dna}[1] {{\scriptstyle\downarrow}\hspace{-0.8pt}{\scriptscriptstyle #1}} 
\newcommand{\tu}{V_u}
\newcommand{\tl}{V_l}
\newcommand{\maxu}{M}
\newcommand{\comp}{\preceq}
\newcommand{\disjct}{\vee}
\newcommand{\leqn}{\!\leq\!}
\newcommand{\lessn}{\!<\!}
\newcommand{\geqn}{\!\geq\!}
\newcommand{\fracn}[2]{{#1}/{#2}}
\newtheorem{proposition}{Proposition}
\newtheorem{theorem}{Theorem}
\newtheorem{lemma}{Lemma}
\newtheorem{corollary}{Corollary}
\DeclareMathOperator*{\argmax}{arg\,max}
\DeclareMathOperator*{\sgn}{sign}
\newcommand{\false}{\textsc{\it false}}
\newcommand{\true}{\textsc{\it true}}
\begin{document}

\title{Screening Tests for Lasso Problems}
%
%
%

\author{Zhen~James~Xiang,~\IEEEmembership{}
Yun~Wang,~\IEEEmembership{}
and~Peter~J.~Ramadge~\IEEEmembership{}%
\IEEEcompsocitemizethanks{\IEEEcompsocthanksitem At the time of writing, all authors were affiliated with the Department of Electrical Engineering, Princeton University, Princeton, NJ 08544, USA.\protect\\
E-mail:
\{zhenfavor, ywang721\}@gmail.com,
ramadge@princeton.edu\hfill}
\thanks{}}

%
%

\markboth{IEEE Transactions Journal,~Vol.~X, No.~Y, month~2016}%
{Shell \MakeLowercase{\textit{et al.}}: Accepted paper}
%


\IEEEcompsoctitleabstractindextext{%
\begin{abstract}
This paper is a survey of dictionary screening for the lasso problem.
The lasso problem seeks a sparse linear combination of the columns of a dictionary to best match a given target vector. This sparse representation has proven useful in a variety of subsequent processing and decision tasks.
For a given target vector, dictionary screening quickly identifies a subset of dictionary columns that will receive zero weight in a solution of the corresponding lasso problem.
These columns can be removed from the dictionary prior to solving the lasso problem without impacting the optimality of the solution obtained.
This has two potential advantages:
it reduces the size of the dictionary, allowing the lasso problem to be solved with less resources, and it may speed up obtaining a solution.
Using a geometrically intuitive framework, we provide basic insights for understanding useful lasso screening tests and their limitations.
We also provide illustrative numerical studies on several datasets.
\end{abstract}

\begin{keywords}
sparse representation, feature selection, lasso, dual lasso, dictionary screening.
\end{keywords}}

\maketitle

\IEEEdisplaynotcompsoctitleabstractindextext

%
\IEEEpeerreviewmaketitle

\section{Introduction}\label{sec:introduction}
The sparse representation of data with respect to a dictionary of features
has recently contributed to successful new methods in machine learning, pattern analysis, and signal/image processing.
At the heart of many sparse representation methods is the least squares problem with 
$\ell_{1}$ regularization, often called the lasso problem \cite{RobTib1996}:
\begin{equation} \label{eq:iterW}
	\min_{\wv \in \R^{\ncw}}
	\qquad 	
	\half \norm{\tv- \Dict \wv}{2}^2
	+ \lambda  \|\wv\|_{1},
\end{equation}
where $\lambda >0$ is a regularization parameter.
The matrix $\Dict\in\R^{\dd\times\ncw}$ is called the \emph{dictionary} and its columns $\{\at_i\}_{i=1}^{\ncw}$ are usually called \emph{features}.
Depending on the field, the terms \emph{codewords},
\emph{atoms}, \emph{filters}, and \emph{regressors}
are also used.
The lasso problem seeks a representation of the \emph{target vector}
$\tv\in\R^{\dd}$  as a linear combination $\sum_{i=1}^{\ncw} \we_{i}\at_{i}$
of the features with many $\we_i=0$ (sparse representation).
Equation \eqref{eq:iterW} also serves as the Lagrangian for the widely used constrained problems
$	\min_{\wv\in\R^{\ncw}} \
\norm{\tv-\Dict\wv }{2}^2$
subject to $\|\wv\|_{1} \leq \sigma$,	
and
$	\min_{\wv\in\R^{\ncw}} \ \|\wv\|_{1}$
subject to 	 $\norm{\tv- \Dict \wv}{2}^2  \leq \varepsilon$.
Many solvers of these problems
address the Lagrangian formulation \eqref{eq:iterW} directly \cite{Yang2010A-review}.

The above problems are studied extensively in the signal processing, computer vision, machine learning, and statistics literature.
See, for example, the general introduction to sparse dictionary representation methods in \cite{Elad2010Sparse} and \cite{Wright2010Sparse}.
Sparse representation has proven effective in applications ranging from image restoration \cite{Mairal2008Sparse,Mairal2009Non-local},
to face recognition \cite{Wright2009Robust,Wagner2011Towards},
object recognition \cite{Yu2009Nonlinear},
speech classification  \cite{Sainath2010Bayesian},
speech recognition \cite{Sainath2010Sparse},
music genre classification \cite{Chang2010Music},
and topic detection in text documents \cite{Prasad2011Emerging}.
In these applications, it is common to encounter a large dictionary (e.g.,~in face recognition),
data with large data dimension (e.g.,~in topic detection), and in dictionary learning, a large number of dictionary
iterations (e.g.,~in image restoration).
These factors can make solving problem \eqref{eq:iterW} a bottleneck in the computation.

Several approaches have been suggested for addressing this computational challenge.
In the context of classification,  Zhang et al.\cite{Zhang2011Collaborative} propose abandoning sparsity and using a fast collaborative linear representation scheme based on $\ell_2$ regularized least squares. This improves the speed of classification in face recognition applications. However, 
in general the (nonlinear) Sparse Representation Classifier (SRC) \cite{Wright2009Robust} achieves superior classification accuracy.
Another approach is to seek a sparse representation
using a fast greedy method to approximate the solution of \eqref{eq:iterW}.
There has been a considerable amount of work in this direction, see for example  \cite{Elad2010Sparse, Efron2004LARS,Trop2007OMP}.
However, this approach seems best when seeking very sparse solutions
and, in general, the solutions obtained can be challenging to analyze.

Recently an approach known as (dictionary) screening has been proposed.
For a given target vector $\tv$ and regularization parameter $\lambda$, screening quickly identifies a subset of features that is guaranteed to have zero weight in a solution $\wvo$ of \eqref{eq:iterW}.
These features can be removed (or ``rejected'') from the dictionary to form a smaller, more readily solved lasso problem.
By  padding its solution appropriately with zeros, one
obtains a solution of the original problem.
This approach is the focus of the paper.

Screening has two potential benefits.
First, it can be run in an on-line mode with very few features loaded into memory at a given time. By this means, screening can significantly reduce the size of the dictionary that needs to be loaded into memory in order to solve the lasso problem.
Second,
by quickly reducing the number of features we can often solve problems faster.  Even small gains can become very significant when many lasso problems must be solved.
Moreover, since screening is transparent to the lasso solver, it can be used in conjunction with many existing solvers.

The idea of screening can be traced back to various feature selection heuristics in which selected features $\{\at_i\}$ are used to fit a response vector $\tv$.
This is usually done by selecting features
based on an empirical measure of relevance to $\tv$,  such as the correlation of $\tv$ and $\at_i$.
This is used, for example, in univariate voxel selection based on t-statistics in the fMRI literature \cite{Smith2004Overview}.
Fan and Lv \cite{Fan2008Sure} give an excellent review of recent results on correlation based feature selection and formalize the approach in a probabilistic setting as a correlation based algorithm called Sure Independence Screening (SIS).
In a similar spirit, Tibshirani et al. \cite{Tibshirani2010Strong} report Strong Rules for screening the lasso, the elastic net and logistic regression. These rules are also based on thresholding correlations.
With small probability, SIS and the Strong Rules can yield ``false'' rejections.

A second approach to screening seeks to remove dictionary columns while avoiding any false rejections.
In spirit, this harks back to the problem of removing ``non-binding'' constraints in linear programs \cite{GLThompsonetal1966}.
For the lasso problem, the first line of recent work in this direction is due to El Ghaoui et al.\cite{Ghaoui2012},
where such screening tests are called  ``SAFE'' tests.
In addition to the lasso, this work examined screening for a variety of related sparse regularization problems.
Recent work (e.g., \cite{Xiang2011Learning_b,Xiang2012Fast,
ellpscr2012,dpp2015}) has focused mainly on the lasso problem and close variants.

The basic approach in the above papers
is to bound the solution of the dual problem of \eqref{eq:iterW} within a compact region $\RR$ and find
$\mu_{\RR}(\at)=\max_{\vt\in \RR} \vt^T\at$.
For simple regions $\RR$, $\mu_{\RR}$ is readily computed and yields a screening test for removing a subset of unneeded features.
This approach has resulted in tests based on spherical bounds
\cite{Ghaoui2012, Xiang2011Learning_b},
the intersection of spheres and half spaces (domes)
\cite{Ghaoui2012,Xiang2012Fast},
elliptical bounds \cite{ellpscr2012}
and novel approaches for selecting the parameters of these regions to best bound the dual solution of \eqref{eq:iterW} \cite{dpp2015}.
These screening tests can execute quickly, either serially or in parallel, and require very few features to be loaded into memory at once. If one seeks a strongly to moderately sparse solution,  the tests can significantly reduce dictionary size and speed up the solution of lasso problems.

To keep our survey focused, we concentrate on screening
for the lasso problem.
However, the methods discussed apply to any problem that can be efficiently transformed into a lasso problem.
For example, the elastic net \cite{Zou2005} and full rank generalized lasso problems \cite{TT2011}.
Moreover, the basic ideas and methods discussed are a good foundation for applying screening to other sparse regularization problems.
We will situate our exposition within the context of prior work as the development proceeds.

The main features of our survey include:

\noindent
(a) Our exposition uses a geometric framework which  unifies many lasso screening tests and provides basic tools and geometric insights useful for developing new tests. In particular, we emphasize the separation of the structure or ``architecture'' of the test from the design problem of selecting its parameters.

\noindent
(b) We examine whether more complex screening tests  are worthwhile.
For each $m\geq 0$, there is a family of tests based on the intersection of a spherical bound and $m$ half spaces.
As $m$ increases these tests can reject more features but are also more time consuming to execute. To examine if more complex tests are worthwhile, we derive the region screening test for the intersection of a sphere and two half spaces, and use this to examine where current region screening tests stand in the trade-off between rejection rate and computational efficiency.

\noindent
(c) We show how composite tests can be formed from existing tests.  In particular, we describe a composite test based on carefully  selected dome regions that
performs competitively in numerical studies.
We also point out a fundamental limitation of this approach.

\noindent
(d) We review sequential screening schemes that make headway on the problem of screening for small normalized values of $\lambda$.
When used in an ``on-line'' mode with realistic values of  the regularization parameter, these methods can successfully reduce the size of large dictionaries to a manageable size, allowing larger problems to be solved, and can result in a faster overall computation.

\subsection{Outline of the Paper}
We begin in \S\ref{sec:prelim} with a review of basic tools, especially the dual of the lasso problem and its geometric interpretation.
\S\ref{sec:screen} introduces screening in greater detail and \S\ref{sec:region} introduces region tests.
After these preparations, we discuss several important forms of region tests:
sphere tests (\S\ref{sec:st}), sphere plus hyperplane tests (\S\ref{sec:dt})
and sphere plus two hyperplane tests (\S\ref{sec:THT}).
We show how spherical bounds can be iteratively refined using features (\S\ref{sec:refine}), and examine ways to combine basic tests.
\S\ref{sec:sarp} gives a brief overview of sequential screening.
We give a practical summary of screening algorithms in \S\ref{sec:alg}
and illustrate the results of screening via numerical studies in \S\ref{sec:exp}.
We conclude in \S\ref{sec:discuss}.
Proofs of new or key results are given in the Appendices, organized by the section in which the result is discussed.

\section{Preliminaries}\label{sec:prelim}
We focus on the lasso \eqref{eq:iterW}, but
it will be convenient to also consider the
\emph{nonnegative} lasso:
 \begin{equation}
	\label{eq:iterW_nn}
	\begin{split}
	\min_{\wv\in\R^{\ncw}} & \qquad \half
	\norm{\tv- \Dict \wv}{2}^2
	+ \lambda \|\wv\|_{1}, \\
	\text{s.t.} \quad & \qquad \wv \geq 0. \\
	\end{split}
\end{equation}
The analysis and algorithms in the paper apply (with minor changes) to both problems.

Throughout the paper we assume that a fixed dictionary $\Dict$ is used to solve various instances of \eqref{eq:iterW} or \eqref{eq:iterW_nn}. We assume that all features are nonzero and say that the dictionary is \emph{normalized} if all features
have unit norm.
Each instance is specified by a pair $(\tv,\lambda)$ consisting of a target vector $\tv$ and a value $\lambda$ of the regularization parameter.

Multiplying the objective of \eqref{eq:iterW} by $\alpha^2$, with $\alpha>0$,
yields the equivalent problem:
$$
	\min_{\wv \in \R^{\ncw}} \
	\half \norm{ \bar{\tv}- \bar{\Dict} \wv}{2}^2
	+ \bar \lambda  \|\wv\|_{1},
$$
where $\bar{\tv}=\alpha \tv$, $\bar{\Dict}=\alpha \Dict$, and $\bar{\lambda}=\alpha^2 \lambda$.
Some lasso solvers require that $\|\Dict\|_F\leq 1$,
and problem \eqref{eq:iterW} must be scaled
to ensure this holds.
As a result, it is meaningless to talk about the value of $\lambda$ employed when solving \eqref{eq:iterW} without accounting for possible scaling. One way to
do this is to define $\lm = \max_{j=1}^{\ncw} |\at_j^T \tv|$. Then the ratio $\lambda/\lm$ is invariant to scaling. The parameter $\lm$ is also useful for other purposes related to screening. Throughout the paper, we use the ratio $\lambda/\lm$ as an unambiguous measure of the amount of regularization used in solving \eqref{eq:iterW} and \eqref{eq:iterW_nn}.

Geometric insight on lasso problems, and on screening in particular, is enhanced by bringing in the Lagrangian dual of \eqref{eq:iterW}.
The following parameterization of the dual problem
is particularly convenient \cite{Xiang2011Learning_b}:
\begin{equation}
	\begin{split}
		\label{eq:dual}
		\max_{\vt\in \R^{\dd}}  \qquad &
		\half \norm{\tv}{2}^2-
		\nicefrac{\lambda^2}{2}\norm{\vt -\tv/\lambda}{2}^2\\
		\text{s.t.} \qquad & |\vt^T\at_i| \leq 1 \quad \forall
		i=1,2,\ldots,\ncw.
	\end{split}
\end{equation}
Solutions $\wvo\in\R^{\ncw}$ of \eqref{eq:iterW} and
$\vto\in \R^{\dd}$ of \eqref{eq:dual} satisfy:
\begin{equation}
	\label{eq:relationship}
	\tv = \Dict \wvo + \lambda \vto,
		\ \  \vto^T\at_i =
		\begin{cases}
		  \sgn{\weo_i}, &\text{ if } \weo_i\neq 0;\\
		  \gamma\in \left[-1,1\right], & \text{ if } \weo_i=0.
		\end{cases}
\end{equation}
The corresponding dual problem of \eqref{eq:iterW_nn} is:
\begin{equation}
	\begin{split}
		\label{eq:dual_nn}
		\max_{\vt\in \R^{\dd}}  \qquad
		& \half \norm{\tv}{2}^2 - \nicefrac{\lambda^2}{ 2}\norm{\vt -\tv/\lambda}{2}^2\\
		\text{s.t.} \qquad & \vt^T\vb_i \leq 1 \quad \forall
		i=1,2,\ldots,\ncw,
	\end{split}
\end{equation}
with the primal and dual solutions related via:
\begin{equation} \label{eq:relationship_nn}
	\tv = \Dict \wvo + \lambda \vto,
		\  \vto^T\at_i =
		\begin{cases}
		  1, & \text{ if } \weo_i>0; \\
		  \gamma \in\left(-\infty,1\right],
		  &\text{ if } \weo_i=0.
		\end{cases}
\end{equation}
A derivation of \eqref{eq:dual} and \eqref{eq:relationship} is given in the Appendix.
It will be convenient to define a \emph{feature pool} $\CP$.
For the lasso, $\CP=\{\pm\at_i\}_{i=1}^{\ncw}$
and for the nonnegative lasso, $\CP=\{\at_i\}_{i=1}^{\ncw}$.
This allows the constraints in \eqref{eq:dual} and \eqref{eq:dual_nn}
to be stated as $\forall \at\in\CP: \vt^T\at\leq 1$.

For $\vx\in \R^{n}$, let $\cp(\vx)=\{\vz:\vx^T\vz=1\}$ denote the hyperplane in $\R^{n}$
that has unit normal $\vx/\|\vx\|_2$ and contains
the point $\vx/\|\vx\|_{2}^{2}$.
Let $\ch(\vx)=\{\vz:\vx^T\vz\leq 1\}$
denote the corresponding closed half space containing the origin. So a constraint of the form $\at^T \vt\leq 1$ requires that $\vt$ lies in the closed half space $\ch(\at)$.
Hence the set of feasible points $\FS$ of the dual problems is the nonempty, closed, convex set formed by the intersection of the finite set of closed half spaces $\ch(\at)$, $\at\in\CP$.
This is illustrated in Fig.~\ref{fig:geometrydp} and~\ref{fig:geometryndp}.
In addition, for the lasso, $\vt \in \FS$ if and only if  $-\vt\in \FS$. So $-\FS=\FS$.
This follows from the same property  of the feature pool: $-\CP=\CP$.

\begin{figure*}[t]
\centering
	\subfigure[]{
	\includegraphics[width=0.245\linewidth]
	{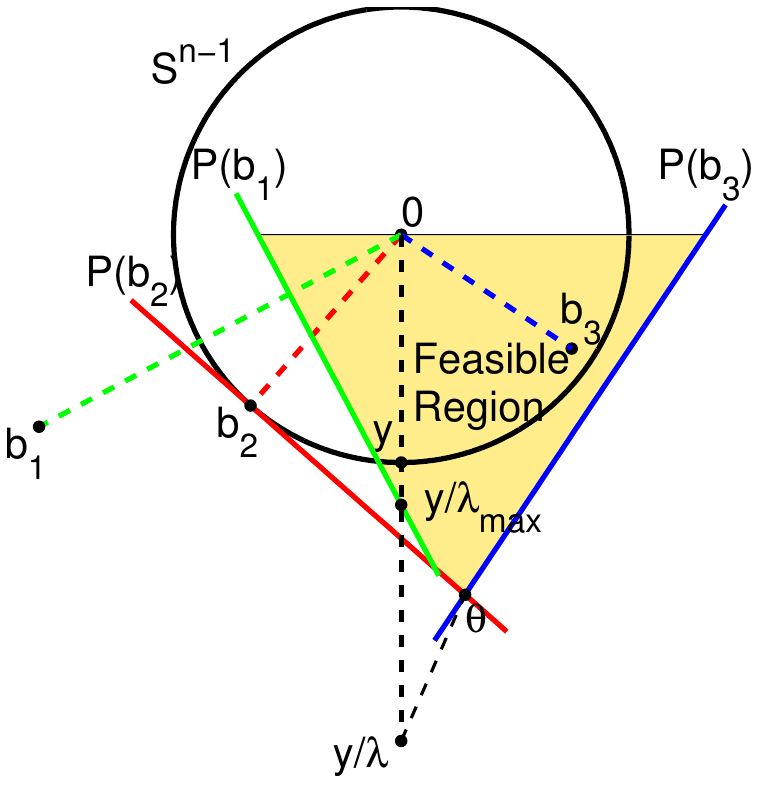}
	\label{fig:geometrydp}
	}
	\qquad
	\subfigure[]{
	\includegraphics[width=0.22\linewidth]	
	{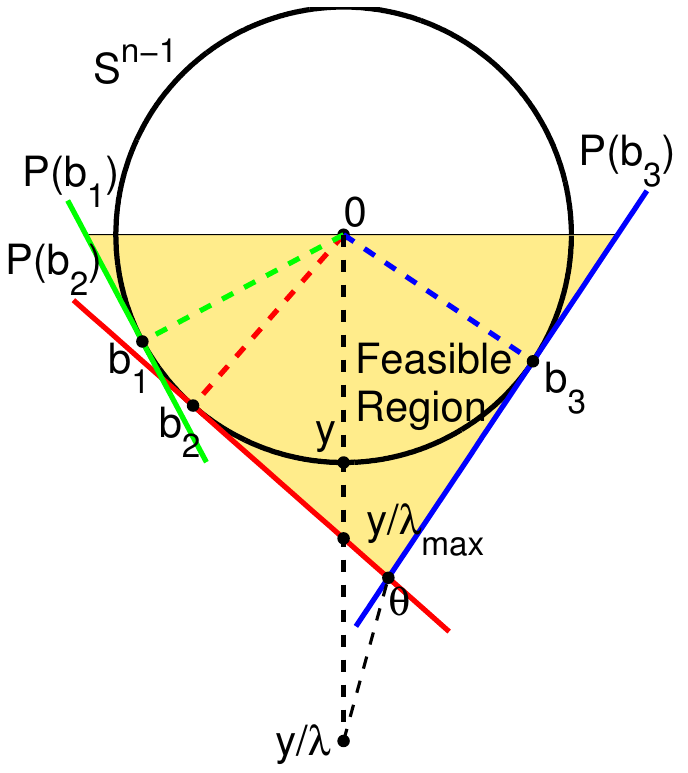}\label{fig:geometryndp.pdf}
	\label{fig:geometryndp}
}
\qquad
	\subfigure[]{\includegraphics[width=0.22\linewidth]	
	{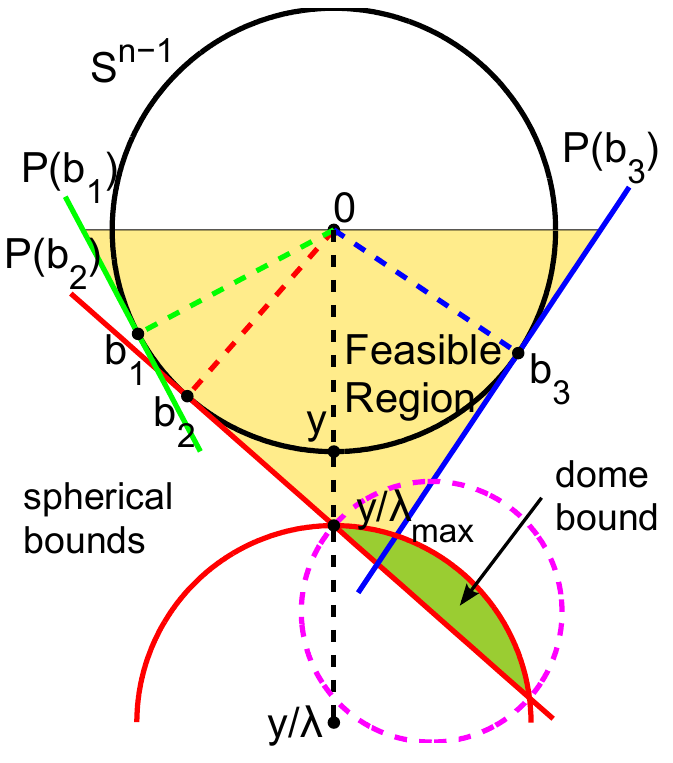}\label{fig:3ST}}
	\label{fig:geometryndp2s}
\caption{\small
The constraints and feasible set $\FS$ of the dual problem for
	{\bf (a):} general features,
	{\bf (b):} unit norm features.
	{\bf (c):} Examples of two spheres and a dome
	region bounding $\vto$ for unit norm features.
	In all cases only the lower half of $\FS$ is shown.
}
\label{fig:prelim}
\end{figure*}

To maximize the objective function in \eqref{eq:dual} or \eqref{eq:dual_nn} we seek
the projection $\vto$ of $\tv/\lambda$ onto the
closed convex set $\FS$.
This is the unique point satisfying
the following set of inequalities \cite[\S3.1]{HU2001}:
for each $\vt\in \FS$,
\begin{equation}\label{eq:vtoineq}
(\tv/\lambda-\vto)^{T}(\vt-\vto) \leq 0.
\end{equation}
In contrast, the lasso problem \eqref{eq:iterW} may not have a unique solution
\cite{Fuchs2005,RJT2012}.

The set of points $\{\vto(\lambda), \lambda>0\}$ is called the
\emph{dual regularization path}. For $\lambda$ sufficiently large,
$\tv/\lambda$ lies in $\FS$ and $\vto(\lambda)=\tv/\lambda$.
To find the smallest $\lambda$ for which this holds, let
\begin{align}
\lm &= \max_{\at\in\CP} \tv^T\at,\\
\atm &\in \argmax_{\at\in\CP} \tv^T\at.
\end{align}
Then  for all $\at\in\CP$:
$(\fracn{\tv}{\lm})^{T}\at \leq  \fracn{\tv^T\atm}{\lm} = 1$.
So $\tv/\lm$ lies in the boundary of $\FS$.
As $\lambda\geq \lm $ decreases from a large value, $\vto(\lambda)=\tv/\lambda$ moves in a straight
line within $\FS$ until $\lambda=\lm$, at which point $\vto(\lm)=\tv/\lm$ first lies on the boundary of $\FS$.
As $\lambda$ decreases below $\lm$,
$\tv/\lambda$ moves away from $\FS$ and $\vto(\lambda)$
is the unique projection of $\tv/\lambda$
onto the boundary of $\FS$.
Using \eqref{eq:relationship},
for $\lambda/\lm>1$, $\wvo=0$, and conversely, if $\wvo=0$, then $\vto=\tv/\lambda \in \FS$.
So for $\lambda/\lm \in (0,1)$, $\tv/\lambda \notin \FS$, $\vto(\lambda)$ lines on the boundary of $\FS$,
and $\wvo$ is nonzero.

Let $\IS=[1,\dots,\ncw]$ denote the ordered set of feature indices and $S\subset \IS$.
Given $\wv\in \R^{\ncw}$, let $\wv_{\dna{S}}$ denote the vector in $\R^{|S|}$ obtained by subsampling $\wv$ at the indices in $S$.
Conversely, for $\vz\in \R^{|S|}$, let $\vz^{\upa{S}}$ denote the vector in $\R^{\ncw}$ obtained by upsampling $\vz$: the entries of
$\vz^{\upa{S}}$ with indices in $S$ take the corresponding values in
$\vz$ and all other entries are zero. Similarly, for a dictionary
$\Dict\in \R^{\dd\times \ncw}$, let
$\Dict_{\dna{S}}$ denote the subdictionary
obtained by sampling the columns of $\Dict$
at the indices in $S$.
The following properties are clear:
(a) $\vz=(\vz^{\upa{S}})_{\dna{S}}$;
(b) if $\we_{i}=0$ for $i\notin S$, then $\wv=(\wv_{\dna{S}})^{\upa{S}}$;
(c) $\Dict_{\dna{S}}\vz=\Dict\vz^{\upa{S}}$; and
(d)  if $\we_{i}=0$ for $i\notin S$, $\Dict \wv=\Dict_{\dna{S}}\wv_{\dna{S}}$.

By \eqref{eq:relationship}, if we know the primal solution $\wvo$, then the dual solution is
$\vto=(\tv-\Dict \wvo)/\lambda$. Conversely, if we know the dual solution $\vto$, then any point satisfying the following equations is a primal solution: 
\begin{align}
\begin{split} \label{eq:primalfromdual}
\Dict_{\dna A(\vto)} \vw_{\dna A(\vto)}
& =  \tv - \lambda \vto \\
\vw_{\dna A(\vto),i} (\vto^T\at_i)
	&\geq 0, \ i\in A(\vto)
\end{split}
\end{align}
where $A(\vto)=\{i\colon |\vto^{T}\at_{i}|=1\}$.

\section{Screening}\label{sec:screen}
We now explain the idea of screening in detail.
Given an instance $(\tv,\lambda)$  of \eqref{eq:iterW},
we select a partition $\IS=S\cup \bar S$ of the features.
We say that the features indexed by $S$ are
{\em selected} and those indexed by $\bar S$ are {\em rejected}.
Then we form the reduced dictionary $B_{\dna{S}}$ of selected features and let
$\vzo$ denote a solution of the corresponding lasso problem using this dictionary.
In general, the upsampled vector $\vzo^{\upa{S}}$ is not a solution of the original lasso problem  \eqref{eq:iterW}.
Here is the key point: screening seeks a partition such that the upsampled vector $\vzo^{\upa{S}}$ solves \eqref{eq:iterW}.
In general, such a partition depends on the instance and hence must be computed ``on-the-fly''.

By virtue of being smaller, the reduced problem is more manageable.
For example, it may fit into memory when
the original problem does not, and finding
its solution may require less time.
Hence there are two evaluation metrics of interest:
the size of $S$ (or $\bar S$) as a fraction of $\IS$,
and the total time taken to select $S$ \emph{and} solve the reduced problem relative to the time taken to solve the original problem directly without screening.
We will normally express these metrics as the \emph{rejection fraction} $|\bar S|/|\IS|$
and the \emph{speedup factor} $t_{\textrm{solve}}/(t_{\textrm{screen}}+t^{\textrm{r}}_{\textrm{solve}})$.
Here $t_{\textrm{solve}}$ is the time to solve the original lasso problem,
$t_{\textrm{screen}}$ is the time to select the partition (screen the dictionary), and
$t^{\textrm{r}}_{\textrm{solve}}$ is the time to solve the reduced lasso problem.

Not surprisingly, if we know the dual solution $\vto$, then it is easy to come up with a suitable partition. To see this, consider the lasso  problem.
For any partition $S \cup \bar S$, let $\wvo$ and $\vzo$ denote solutions of the
original and reduced lasso problems, respectively.
It is clear that the following always holds:
\begin{align}
\begin{split}\label{eq:partprop}
&\half \|\tv-\Dict\wvo \|_{2}^{2} +\lambda \|\wvo\|_{1} \\
&\leq  \half \|\tv-\Dict \vzo^{\upa{S}} \|_{2}^{2} +\lambda \|\vzo^{\upa{S}}\|_{1}\\
&= \half \|\tv-\Dict_{\dna{S}}\vzo \|_{2}^{2} +\lambda \|\vzo\|_{1}\\
&\leq \half \|\tv-\Dict_{\dna{S}}\wvo_{\dna{S}} \|_{2}^{2}
	+\lambda \|\wvo_{\dna{S}}\|_{1} .
\end{split}
\end{align}
Now assume the dual solution $\vto$ is known,
let $A(\vto)=\{i\colon |\vto^{T}\at_{i}|=1\}$ denote the active constraints at $\vto$,
and consider the particular partition  $A(\vto)\cup \bar A(\vto)$.
Equation \eqref{eq:relationship} shows that if
$|\vto^T\at_i| < 1$ (equivalently, $i\in \bar A(\vto)$), then $\weo_i=0$.
Hence for this partition,  $\Dict \wvo=\Dict_{\dna{A(\vto)}} \wvo_{\dna{A(\vto)}}$,
$\|\wvo_{\dna{A(\vto)}}\|_{1} =\|\wvo\|_{1}$, and
\begin{align}
\begin{split}\label{eq:dualsolnpartition}
&\half \|\tv-\Dict\wvo \|_{2}^{2} +\lambda \|\wvo\|_{1}\\
& = \half \|\tv-\Dict_{\dna{A(\vto)}}\wvo_{\dna{A(\vto)}} \|_{2}^{2}
	+\lambda \|\wvo_{\dna{A(\vto)}}\|_{1} .
\end{split}
\end{align}
Equation \eqref{eq:dualsolnpartition} implies that for this partition the two inequalities in \eqref{eq:partprop} must be equalities. It follows that $\wvo_{\dna{A(\vto)}}$ solves the
reduced problem and $\vzo^{\upa{S}}$ solves the original problem.
Although a simple observation, this is worth stating as a theorem.

\begin{theorem}\label{thm:crt}
Let the solution $\vto$  of \eqref{eq:dual} (resp. \eqref{eq:dual_nn}) have active set $A(\vto)$.
If $\vzo$ is a solution of \eqref{eq:iterW}
(resp. \eqref{eq:iterW_nn})
with dictionary $\Dict_{\dna{A(\vto)}}$, then $\vzo^{\upa{A(\vto)}}$ solves  \eqref{eq:iterW} (resp. \eqref{eq:iterW_nn}).
Moreover, every solution of \eqref{eq:iterW} (resp. \eqref{eq:iterW_nn})
can be expressed in this way.
\end{theorem}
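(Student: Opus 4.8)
The plan is to prove the two assertions of Theorem~\ref{thm:crt} separately. For the first assertion, I would essentially transcribe the chain of inequalities already assembled in \eqref{eq:partprop} and \eqref{eq:dualsolnpartition} for the specific partition $S = A(\vto)$, $\bar S = \bar A(\vto)$. The starting observation is that by \eqref{eq:relationship} (resp. \eqref{eq:relationship_nn}), $i \in \bar A(\vto)$ forces $\weo_i = 0$ for any primal solution $\wvo$ of \eqref{eq:iterW} (resp. \eqref{eq:iterW_nn}); this is exactly property (b)/(d) of the up/down-sampling operators, so $\Dict\wvo = \Dict_{\dna{A(\vto)}}\wvo_{\dna{A(\vto)}}$ and $\|\wvo\|_1 = \|\wvo_{\dna{A(\vto)}}\|_1$. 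Hence the top line of \eqref{eq:partprop} equals its bottom line, which squeezes the two intermediate inequalities to equalities. Equality in the second inequality says precisely that $\wvo_{\dna{A(\vto)}}$ attains the optimal value of the reduced lasso, i.e. is a solution of \eqref{eq:iterW} with dictionary $\Dict_{\dna{A(\vto)}}$; equality in the first says that for \emph{any} solution $\vzo$ of the reduced problem, $\vzo^{\upa{A(\vto)}}$ attains the optimal value of the original problem, hence solves \eqref{eq:iterW}. That gives the first sentence of the theorem.

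For the ``moreover'' part I would argue that every solution $\wvo$ of the original problem arises as $\vzo^{\upa{A(\vto)}}$ for some solution $\vzo$ of the reduced problem. Given such a $\wvo$, set $\vzo = \wvo_{\dna{A(\vto)}}$. Since $\weo_i = 0$ for $i \notin A(\vto)$, property (b) gives $\wvo = (\wvo_{\dna{A(\vto)}})^{\upa{A(\vto)}} = \vzo^{\upa{A(\vto)}}$, so it only remains to check that this $\vzo$ is a solution of the reduced lasso. But we showed in the previous paragraph, via the equality case of \eqref{eq:partprop}, that the reduced objective at $\wvo_{\dna{A(\vto)}}$ equals the original optimal value, which equals the reduced optimal value; so $\vzo$ is indeed optimal for the reduced problem. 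This closes the proof for the lasso.

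The nonnegative case \eqref{eq:iterW_nn}, \eqref{eq:dual_nn} is handled by the same argument almost verbatim: the only facts used are (i) $i \in \bar A(\vto) \Rightarrow \weo_i = 0$, which holds by \eqref{eq:relationship_nn}, and (ii) the elementary up/down-sampling identities, which are insensitive to the sign constraint; note also that the reduced problem inherits the constraint $\wv \geq 0$, and up/down-sampling preserves nonnegativity, so feasibility transfers in both directions. I do not expect any serious obstacle here, since the theorem is, as the authors say, essentially a bookkeeping consequence of \eqref{eq:partprop}; the one point that deserves a careful word is that the first inequality in \eqref{eq:partprop} becoming an equality must be read as holding for every reduced solution $\vzo$ (not just the distinguished one $\wvo_{\dna{A(\vto)}}$), since $\vzo$ there is an arbitrary minimizer of a fixed convex objective and the value — hence the inequality — is the same for all of them.
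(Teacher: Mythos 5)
Your proposal is correct and follows essentially the same route as the paper: the paper's own proof is the squeeze argument around \eqref{eq:partprop} and \eqref{eq:dualsolnpartition} given just before the theorem statement, with the ``moreover'' part obtained exactly as you do, via $\weo_i=0$ for $i\in\bar A(\vto)$ and property (b) of the up/down-sampling operators. Your added remark that the equality in the first inequality holds for \emph{every} reduced solution $\vzo$ (all minimizers share the optimal value) is a careful reading of the same argument, not a departure from it.
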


The fundamental partition of $\IS$ into $A(\vto)$ and $\bar A(\vto)$
is conceptually very important but  obviously impractical.
If we know $\vto$, then we can easily solve the primal problem
(see \eqref{eq:primalfromdual})
and this makes screening  and problem reduction unnecessary.
As a first step towards finding a practical way to partition the features,
we note that  if $A(\vto)\subseteq S$ (screening keeps more) or equivalently $\bar S \subseteq \bar A(\vto)$ (screening rejects less), then equation \eqref{eq:dualsolnpartition} holds with $S$ replacing $A(\vto)$. This implies that the two inequalities in  \eqref{eq:partprop} hold with equality for this partition. Hence we have the following corollary of Theorem \ref{thm:crt}.

\begin{corollary}\label{cor:compliantS}
Let the solution $\vto$ of \eqref{eq:dual} (resp. \eqref{eq:dual_nn}) have active set
$A(\vto)$. Let $A(\vto)\subseteq S \subseteq \IS$.
If $\vzo$ is a solution of \eqref{eq:iterW}
(resp. \eqref{eq:iterW_nn})
with dictionary $\Dict_{\dna{S}}$,
then $\vzo^{\upa{S}}$ is a solution of \eqref{eq:iterW} (resp. \eqref{eq:iterW_nn}).
Moreover, every solution of \eqref{eq:iterW} (resp. \eqref{eq:iterW_nn})
can be expressed in this way.
\end{corollary}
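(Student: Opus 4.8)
The plan is to derive Corollary~\ref{cor:compliantS} directly from the argument already carried out for Theorem~\ref{thm:crt}, observing that the only place the specific choice $S = A(\vto)$ was used was to guarantee that $\Dict\wvo = \Dict_{\dna{S}}\wvo_{\dna{S}}$ and $\|\wvo_{\dna{S}}\|_1 = \|\wvo\|_1$. First I would fix an arbitrary solution $\wvo$ of \eqref{eq:iterW} (resp.~\eqref{eq:iterW_nn}) and an arbitrary solution $\vzo$ of the reduced problem with dictionary $\Dict_{\dna{S}}$, and recall the chain of inequalities \eqref{eq:partprop}, which holds for \emph{any} partition $S\cup\bar S$ and in particular for the given $S$ with $A(\vto)\subseteq S$.

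Next I would establish the two structural facts. Since $\bar S \subseteq \bar A(\vto)$, for every $i\in\bar S$ we have $i\in\bar A(\vto)$, i.e.\ $|\vto^T\at_i|<1$ (resp.\ $\vto^T\at_i < 1$), so by \eqref{eq:relationship} (resp.~\eqref{eq:relationship_nn}) $\weo_i = 0$. Thus $\weo_i = 0$ for all $i\notin S$, and properties (b) and (d) from \S\ref{sec:prelim} give $\wvo = (\wvo_{\dna{S}})^{\upa{S}}$ and $\Dict\wvo = \Dict_{\dna{S}}\wvo_{\dna{S}}$; moreover $\|\wvo_{\dna{S}}\|_1 = \|\wvo\|_1$ because the discarded entries are all zero. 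Substituting these into the optimal value of the primal problem yields the analogue of \eqref{eq:dualsolnpartition} with $S$ in place of $A(\vto)$, namely
\begin{equation*}
\half\|\tv - \Dict\wvo\|_2^2 + \lambda\|\wvo\|_1
 = \half\|\tv - \Dict_{\dna{S}}\wvo_{\dna{S}}\|_2^2 + \lambda\|\wvo_{\dna{S}}\|_1 .
\end{equation*}
This equates the first and last expressions in \eqref{eq:partprop}, forcing both inequalities there to be equalities. Equality in the first shows that $\vzo^{\upa{S}}$ attains the optimal value of \eqref{eq:iterW} (resp.~\eqref{eq:iterW_nn}), hence is a solution of it; equality in the second shows that $\wvo_{\dna{S}}$ attains the optimal value of the reduced problem.

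Finally, for the ``moreover'' clause I would argue that every solution arises this way: given any solution $\wvo$ of the original problem, the argument above shows $\wvo_{\dna{S}}$ solves the reduced problem and, by property (b), $\wvo = (\wvo_{\dna{S}})^{\upa{S}} = \vzo^{\upa{S}}$ with the choice $\vzo = \wvo_{\dna{S}}$. The main subtlety — and the only point requiring care — is the handedness of the containment: one must be sure that "screening keeps more'' ($A(\vto)\subseteq S$) is exactly what makes all coordinates killed by screening already zero in $\wvo$, so that no nonzero component of a primal solution is ever discarded; the inequality chain \eqref{eq:partprop} then does the rest automatically and no separate optimality/uniqueness machinery for the primal is needed (which is fortunate, since \eqref{eq:iterW} need not have a unique solution). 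For the nonnegative lasso the only change is to replace $|\vto^T\at_i|<1$ by $\vto^T\at_i<1$ and cite \eqref{eq:relationship_nn} instead of \eqref{eq:relationship}.
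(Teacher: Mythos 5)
Your proposal is correct and follows essentially the same route as the paper: it uses the inclusion $\bar S\subseteq \bar A(\vto)$ together with \eqref{eq:relationship} (resp.\ \eqref{eq:relationship_nn}) to show $\weo_i=0$ for $i\in\bar S$, so that \eqref{eq:dualsolnpartition} holds with $S$ in place of $A(\vto)$ and both inequalities in \eqref{eq:partprop} collapse to equalities. The handling of the ``moreover'' clause via $\vzo=\wvo_{\dna S}$ also matches the paper's argument, so there is nothing to add.
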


\section{Region Tests}\label{sec:region}
The core idea  for creating a partition of the dictionary that conforms with Corollary \ref{cor:compliantS} is to bound $\vto$ within a compact region $\RR$.
For each feature $\at$, we then compute $\mu_{\RR}(\at)=\max_{\vt\in \RR} \vt^{T}\at$, and use this quantity to partition $\Dict$ \cite{Ghaoui2012}.

We first illustrate this for the nonnegative lasso.
For a compact set $\RR$, if $\RR=\emptyset$, all features are rejected;
otherwise for each feature $\at_i$,
$\mu_{\RR}(\at_i)=\max_{\vt\in\RR} \vt^T\at_i$
exists. Then define the partition:
\begin{align}
	\at_{i} \in  \begin{cases}
	\bar S, & \textrm{if } \mu_{\RR}(\at_i) < 1; \\
	S, & \textrm{otherwise.}
	\end{cases}
	\label{eq:rrpnn}
\end{align}
The logic is that if $\vto\in \RR$ and $\mu_{\RR}(\at_i)<1$,
then $\vto^T\at_i<1$ and hence $i\in \bar {A}(\vto)$.
Thus $\bar S\subseteq \bar{A}(\vto)$, as desired.

For the lasso problem, $\vto\in \RR$ and
$\mu_{\RR}(\at_i)<1$ ensure $\vto^T\at_i<1$.
But in this case we also need
$-1 < \vto^T\at_i$ or equivalently
$\vto^T(-\at_i) <1$.
This holds if $\mu_{\RR}(-\at_i)<1$.
Effectively, we must test both $\at_{i}$ and $-\at_{i}$ to account for the
positive or negative sign of $\we_{i}$.
So for the lasso the partition is:
\begin{align}
	\at_{i} \in \begin{cases}
	\bar S, & \textrm{if } \max\{\mu_{\RR}(\at_i), \mu_{\RR}(-\at_i)\}<1 ;\\
	S, & \textrm{otherwise.}
	\end{cases}  \label{eq:rrp}
\end{align}
For example, when $\RR=\{\vto\}$, $i\in \bar S$ if:
(a) $\vto^{T}\at_{i}<1$ (nonnegative lasso) and
(b) $|\vto^{T}\at_{i}|<1$ (lasso).
So $\RR=\{\vto\}$, yields the ideal partition $A(\vto)\cup \bar A(\vto)$.

From the above constructions, we see that $\vto\in \RR$ ensures that the partitions \eqref{eq:rrpnn} and \eqref{eq:rrp} satisfy $\bar S \subseteq \bar A(\vto)$. Hence the assumptions of Corollary \ref{cor:compliantS} are satisfied. This is summarized in the following corollary.

\begin{corollary} \label{cor:comregiontest}
Let $\RR$  be a compact region with $\vto\in \RR$.
Then $\RR$ defines a dictionary partition $S\cup \bar S$ with $\bar S \subseteq \bar A(\vto)$.
\end{corollary}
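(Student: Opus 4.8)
The plan is to show that this corollary is essentially a restatement of the constructions already carried out in the paragraphs preceding it, packaged so as to invoke Corollary~\ref{cor:compliantS}. First I would fix a compact region $\RR$ with $\vto \in \RR$. The case $\RR = \emptyset$ does not arise since $\vto \in \RR$, so for each feature $\at_i$ the quantity $\mu_{\RR}(\at_i) = \max_{\vt \in \RR}\vt^T\at_i$ is well-defined (a continuous linear functional attains its maximum on a nonempty compact set). I would then define the partition $S \cup \bar S$ by \eqref{eq:rrpnn} in the nonnegative lasso case and by \eqref{eq:rrp} in the lasso case — exactly the partitions introduced above.

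The core step is to verify $\bar S \subseteq \bar A(\vto)$. Take $i \in \bar S$. In the nonnegative case this means $\mu_{\RR}(\at_i) < 1$; since $\vto \in \RR$ we get $\vto^T\at_i \leq \mu_{\RR}(\at_i) < 1$, so $|\vto^T\at_i|$ is not forced, but the active set for \eqref{eq:dual_nn} is $A(\vto) = \{i : \vto^T\at_i = 1\}$, and $\vto^T\at_i < 1$ gives $i \notin A(\vto)$, i.e. $i \in \bar A(\vto)$. In the lasso case $i \in \bar S$ means $\max\{\mu_{\RR}(\at_i), \mu_{\RR}(-\at_i)\} < 1$; with $\vto \in \RR$ this yields both $\vto^T\at_i < 1$ and $-\vto^T\at_i = \vto^T(-\at_i) < 1$, hence $|\vto^T\at_i| < 1$, so $i \notin A(\vto) = \{i : |\vto^T\at_i| = 1\}$. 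Either way $\bar S \subseteq \bar A(\vto)$, equivalently $A(\vto) \subseteq S$.

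Finally I would apply Corollary~\ref{cor:compliantS}: since $A(\vto) \subseteq S \subseteq \IS$, that corollary guarantees the reduced problem with dictionary $\Dict_{\dna S}$ has the property that upsampling any of its solutions yields a solution of the original problem, which is precisely what it means for $\RR$ to "define a partition $S \cup \bar S$ of the dictionary with $\bar S \subseteq \bar A(\vto)$" in the sense used throughout this section. There is no real obstacle here — the statement is a bookkeeping corollary — so the only thing to be careful about is the logical direction of the set inclusions ($\bar S \subseteq \bar A(\vto)$ versus $A(\vto) \subseteq S$) and making sure the nonnegative and standard lasso cases are both covered, since their active sets and their screening rules differ slightly (one-sided versus two-sided tests).
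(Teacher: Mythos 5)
Your proposal is correct and follows essentially the same route as the paper, whose proof simply refers back to the construction preceding the corollary: use $\vto\in\RR$ to conclude $\vto^T\at_i<1$ (resp.\ $|\vto^T\at_i|<1$) whenever $\mu_{\RR}(\at_i)<1$ (resp.\ $\max\{\mu_{\RR}(\at_i),\mu_{\RR}(-\at_i)\}<1$), giving $\bar S\subseteq\bar A(\vto)$ so that Corollary~\ref{cor:compliantS} applies. Your explicit handling of both the lasso and nonnegative lasso active sets matches the paper's intent.
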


It will be convenient to encode the partition induced by a bounding region $\RR$ as a rejection test $T_{\RR}$ with $T_{\RR}(\at)=1$ if $\at\in \bar S$
and $0$ otherwise.
For example, the rejection test corresponding to \eqref{eq:rrp} is:
\begin{align}
	T_{\RR}(\at_{i}) \in  \begin{cases}
	1, & \textrm{if } \max\{\mu_{\RR}(\at_i), \mu_{\RR}(-\at_i)\}<1; \\
	0, & \textrm{otherwise.}
	\end{cases}
	\label{eq:rrt}
\end{align}

We end this section by noting that for a given dictionary $\Dict$, the partial order of subsets of features induces a partial order on screening tests.
Test $T^{\prime}$ is \emph{weaker} than test $T$,  denoted $T'\comp T$,
if the set of features rejected by $T^{\prime}$ is a subset of the features rejected by $T$.
For example, if $\vto\in \RR$, then $T_{\RR}\comp T_{\{\vto\}}$.
This is a special case of the following lemma.

\begin{lemma}\label{lem:subset}
	If $\RR_1 \subseteq \RR_2$, then $T_{\RR_2} \comp T_{\RR_1}$.
\end{lemma}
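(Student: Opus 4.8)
The plan is to show that enlarging the region can only decrease (or leave unchanged) the set of rejected features, which is exactly the statement $T_{\RR_2}\comp T_{\RR_1}$ in the partial order on tests. The key observation is that for any feature $\at$, the quantity $\mu_{\RR}(\at)=\max_{\vt\in\RR}\vt^T\at$ is monotone in $\RR$: if $\RR_1\subseteq\RR_2$, then the maximum over the larger set is at least the maximum over the smaller set, so $\mu_{\RR_1}(\at)\leq\mu_{\RR_2}(\at)$. (One should also dispatch the degenerate case $\RR_1=\emptyset$: then $T_{\RR_1}$ rejects every feature, so trivially the features rejected by $T_{\RR_2}$ form a subset; if $\RR_1\neq\emptyset$ then $\RR_2\neq\emptyset$ as well, and both $\mu$'s are well-defined since $\RR_1,\RR_2$ are compact.)

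Granting the monotonicity of $\mu$, I would argue directly from the definition of the rejection tests. Consider the lasso case with the test \eqref{eq:rrt}. Suppose $\at_i$ is rejected by $T_{\RR_2}$, i.e. $\max\{\mu_{\RR_2}(\at_i),\mu_{\RR_2}(-\at_i)\}<1$. Since $\mu_{\RR_1}(\at_i)\leq\mu_{\RR_2}(\at_i)$ and $\mu_{\RR_1}(-\at_i)\leq\mu_{\RR_2}(-\at_i)$, we get $\max\{\mu_{\RR_1}(\at_i),\mu_{\RR_1}(-\at_i)\}<1$, so $\at_i$ is also rejected by $T_{\RR_1}$. Hence the set of features rejected by $T_{\RR_2}$ is contained in the set rejected by $T_{\RR_1}$, which is precisely $T_{\RR_2}\comp T_{\RR_1}$. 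The nonnegative-lasso case is identical but uses the one-sided test \eqref{eq:rrpnn}: $\mu_{\RR_2}(\at_i)<1$ implies $\mu_{\RR_1}(\at_i)\leq\mu_{\RR_2}(\at_i)<1$.

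There is no real obstacle here — the result is a one-line consequence of the monotonicity of a max over a nested pair of sets — so the main thing to be careful about is bookkeeping: handling the empty-region corner case cleanly, and getting the direction of the partial order right (the \emph{larger} region gives the \emph{weaker} test, because a weaker test rejects a subset). I would state it compactly, noting that this lemma formalizes the intuition that tighter bounds on $\vto$ yield stronger screening, and in particular recovers the earlier remark that $T_{\RR}\comp T_{\{\vto\}}$ whenever $\vto\in\RR$.
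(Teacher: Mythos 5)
Your proposal is correct and matches the paper's own argument: both handle the empty-region case, invoke the monotonicity $\mu_{\RR_1}(\at)\leq\mu_{\RR_2}(\at)$ for $\RR_1\subseteq\RR_2$, and then apply the rejection criteria \eqref{eq:rrt} (testing both $\pm\at_i$ for the lasso) with the nonnegative-lasso case handled analogously. No gaps.
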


If $\RR_1\subset \RR_2$, then the region test for $\RR_1$  
can potentially reject more features than the test for $\RR_2$.

\subsection{The Sphere-Hyperplane Architecture}
We now consider particular forms of bounding regions for $\vto$.
A natural form of bounding region consists of the intersection of a spherical bound with a finite number of half spaces.
The spherical bound arises naturally once we know a dual feasible point, and half spaces arise naturally since these define the dual feasible region $\FS$ (see \eqref{eq:dual}), and are integral to the projection of a point onto $\FS$ (see \eqref{eq:vtoineq}).

The intersection of  a closed ball
$\SPH(\vq,r)=\{\vz: \norm{\vz-\vq}{2}\leq r\}$ with center $\vq$ and radius $r$,
and $m$ half spaces $\vn_i^T\vt \leq c_i$, $i=1,\dots,m$, gives the region:
$$
\RR = \{\vt\colon \|\vt-\vq\|_2 \leq r\}
\  \cap \
\cap_{i=1}^m \{\vt\colon \vn_i^T \vt \leq c_i\} \ .
$$
To form the corresponding region test,
we find $\mu(\at)=\max_{\vt\in \RR} \vt^T\at$ by
solving the optimization problem:
\begin{align}\label{eq:optprobm}
\begin{split}
\min_{\vt}
	& \quad (-\vt^{T} \at) \\
\textrm{s.t.}
	& \quad (\vt-\vq)^{T}(\vt-\vq) -r^{2} \leq 0\\
 	& \quad \vn_i^{T}\vt -c_i \leq 0, \quad i=1,\dots,m.
\end{split}
\end{align}
Once $\mu(\at)$ is known, \eqref{eq:rrt} gives the corresponding screening test.
Using the change of variable $\vz=(\vt-\vq)/r$, problem  \eqref{eq:optprobm} can be simplified to:
\begin{align} \label{eq:optprobms}
\begin{split}
\bar{\mu}(\at)=\min_{\vz }
& \quad (-\vz^T\at) \\
\text{s.t.}
&\quad  \vz^T\vz-1 \leq 0 \\
&\quad  \vn_i^T\vz+\psi_i \leq 0, \quad i = 1,\dots,m.
\end{split}
\end{align}
where $\psi_i = (\vn_i^T\vq-c_i)/r$.
The solution of \eqref{eq:optprobm} is then $\mu(\at)=\vq^T\at+r\bar{\mu}(\at)$.
By decomposing $\vz$ and $\vb$ in terms of $\textrm{span}\{\vn_i\}_{i=1}^m$ and its orthogonal complement, \eqref{eq:optprobms} reduces to a convex program in $\R^{m+1}$.

Increasing $m$ results in tests with the potential to reject more features,
but which are also more complex and time consuming to execute. In the following two subsections, we discuss the simplest cases: $m=0$ (sphere tests), and $m=1$ (dome tests). This gives insight into basic tests and makes connections with the literature.

\subsection{Sphere Tests} \label{sec:st}
Consider bounding $\vto$ within a closed ball
$\SPH(\vq,r)=\{\vz: \norm{\vz-\vq}{2}\leq r\}$ with center $\vq$ and radius $r$.
This bound gives a simple, efficiently implemented test, and it is also a useful building block for more complex tests.
We first determine a close form expression for
$\mu_{\SPH(\vq,r)}(\at)=\max_{\vt\in\SPH(\vq,r) } \vt^T\at$.
An expression for a \emph{sphere test}
$T_{\SPH(\vq,r)}$ then follows from \eqref{eq:rrt}.

\begin{lemma}\label{lem:maxStb}
For $\SPH(\vq,r) = \{\vz: \norm{\vz-\vq}{2}\leq r\}$
and $\at\in \R^{\dd}$:
	\begin{align}	\label{eq:musph}
		\mu_{\SPH(\vq,r)}(\at)
		= \vq^T\at +r\|\at\|_2 .
	\end{align}
\end{lemma}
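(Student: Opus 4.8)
The plan is to maximize the linear function $\vt \mapsto \vt^T\at$ over the closed ball $\SPH(\vq, r)$ directly. First I would handle the degenerate case $\at = \vzero$ separately: then $\vt^T\at = 0$ for all $\vt$, and the right-hand side $\vq^T\at + r\|\at\|_2 = 0$, so the identity holds trivially. For the main case $\at \neq \vzero$, I would use the substitution already introduced in the excerpt, writing $\vt = \vq + r\vz$ with $\vz$ ranging over the closed unit ball $\{\vz : \|\vz\|_2 \leq 1\}$ (this is exactly the change of variables $\vz = (\vt - \vq)/r$ used for \eqref{eq:optprobms}, in the special case $m = 0$). Then $\vt^T\at = \vq^T\at + r\,\vz^T\at$, so the problem reduces to showing $\max_{\|\vz\|_2 \leq 1} \vz^T\at = \|\at\|_2$.

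The key step is the Cauchy--Schwarz inequality: for any $\vz$ with $\|\vz\|_2 \leq 1$ we have $\vz^T\at \leq \|\vz\|_2\|\at\|_2 \leq \|\at\|_2$, giving the upper bound. For the matching lower bound, I would exhibit the maximizer explicitly: take $\vz^\ast = \at/\|\at\|_2$, which satisfies $\|\vz^\ast\|_2 = 1 \leq 1$ and achieves $(\vz^\ast)^T\at = \at^T\at/\|\at\|_2 = \|\at\|_2$. Combining the two bounds gives $\max_{\|\vz\|_2 \leq 1} \vz^T\at = \|\at\|_2$, and substituting back yields $\mu_{\SPH(\vq,r)}(\at) = \vq^T\at + r\|\at\|_2$, as claimed. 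Since the objective is linear (hence continuous) and the ball is compact and nonempty, the maximum is attained, so writing $\max$ rather than $\sup$ is justified.

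I do not expect any serious obstacle here — this is essentially the statement that the support function of a Euclidean ball of radius $r$ centered at $\vq$ is $\vq^T\at + r\|\at\|_2$, which is standard. The only points requiring a modicum of care are: (i) checking the $\at = \vzero$ boundary case so that the division $\at/\|\at\|_2$ is well-defined; and (ii) making sure one has both inequality directions rather than just the Cauchy--Schwarz upper bound, i.e. confirming the bound is tight by displaying the maximizer. Everything else is a one-line computation. An alternative derivation would invoke KKT conditions on the convex program \eqref{eq:optprobms} with $m=0$ (the gradient of $-\vz^T\at$ must be parallel to the gradient of $\vz^T\vz - 1$ at the optimum, forcing $\vz^\ast \propto \at$), but the elementary Cauchy--Schwarz argument is cleaner and needs no regularity hypotheses.
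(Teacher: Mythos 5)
Your proof is correct and follows essentially the same route as the paper's: a Cauchy--Schwarz upper bound on the linear term, made tight by choosing the point where $\vt-\vq$ is aligned with $\at$ at distance $r$ from the center. The change of variables and the explicit treatment of $\at=\vzero$ are harmless refinements of the same argument.
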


\begin{theorem}\label{thm:st}
The screening test for the sphere $\SPH(\vq,r)$ is:
\begin{equation}
	T_{\SPH(\vq,r)}(\at)
	\!=\!  \begin{cases}
	1, &\!\!\!\!\textrm{if } \tl(\|\at\|_2) \!<\!
	\vq^T\at  \!<\!  \tu(\|\at\|_2); \\
	0, &\!\!\!\!\textrm{otherwise.}
	\end{cases}  \label{eq:st}
\end{equation}
where
$\tu(t)=1-r t$ and for the lasso
$\tl(t) =-\tu(t)$, and for the nonnegative lasso
$\tl(t)=	-\infty$.
\end{theorem}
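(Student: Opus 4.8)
The plan is to combine the closed-form expression for $\mu_{\SPH(\vq,r)}$ from Lemma~\ref{lem:maxStb} with the generic region-test rules \eqref{eq:rrpnn} and \eqref{eq:rrt}, and then simplify each case. First I would treat the lasso. By \eqref{eq:rrt}, the feature $\at$ is rejected precisely when $\max\{\mu_{\SPH(\vq,r)}(\at),\,\mu_{\SPH(\vq,r)}(-\at)\}<1$, i.e.\ when \emph{both} $\mu_{\SPH(\vq,r)}(\at)<1$ and $\mu_{\SPH(\vq,r)}(-\at)<1$. Lemma~\ref{lem:maxStb} gives $\mu_{\SPH(\vq,r)}(\at)=\vq^T\at+r\|\at\|_2$, and, using $\|-\at\|_2=\|\at\|_2$, also $\mu_{\SPH(\vq,r)}(-\at)=-\vq^T\at+r\|\at\|_2$. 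Substituting $\tu(t)=1-rt$, the first inequality becomes $\vq^T\at<\tu(\|\at\|_2)$ and the second becomes $\vq^T\at>-\tu(\|\at\|_2)=\tl(\|\at\|_2)$; conjoining the two yields exactly the two-sided rejection condition $\tl(\|\at\|_2)<\vq^T\at<\tu(\|\at\|_2)$ in \eqref{eq:st}.

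Next I would handle the nonnegative lasso. Here the feature pool is $\CP=\{\at_i\}$ rather than $\{\pm\at_i\}$, so the governing rule is \eqref{eq:rrpnn}: $\at$ is rejected iff $\mu_{\SPH(\vq,r)}(\at)<1$. By Lemma~\ref{lem:maxStb} this is $\vq^T\at+r\|\at\|_2<1$, i.e.\ $\vq^T\at<\tu(\|\at\|_2)$, with no lower bound on $\vq^T\at$. This matches \eqref{eq:st} under the convention $\tl(t)=-\infty$, for which the left inequality $\tl(\|\at\|_2)<\vq^T\at$ is vacuously satisfied. I would also remark that both branches are only non-trivial when $r\|\at\|_2<1$, since otherwise $\tu(\|\at\|_2)\le 0\le\tl(\|\at\|_2)$ and no feature is rejected.

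The argument is essentially a direct substitution, so there is no serious obstacle; the only point requiring care is the bookkeeping between the two problems — recognizing that the two-sided test for the lasso arises from needing to certify $|\vto^T\at_i|<1$ (equivalently, testing both $\at_i$ and $-\at_i$ because $\pm\at_i\in\CP$), while the nonnegative lasso only requires the one-sided bound $\vto^T\at_i<1$. Validity of the resulting partition is then immediate from the earlier development: since $\vto\in\SPH(\vq,r)$ by hypothesis, Corollary~\ref{cor:comregiontest} gives $\bar S\subseteq\bar A(\vto)$, and Corollary~\ref{cor:compliantS} then guarantees that the upsampled solution of the reduced problem solves the original problem.
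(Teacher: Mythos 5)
Your proposal is correct and follows essentially the same route as the paper: substitute the closed-form maximum $\mu_{\SPH(\vq,r)}(\at)=\vq^T\at+r\|\at\|_2$ from Lemma~\ref{lem:maxStb} into the rejection rules \eqref{eq:rrpnn} and \eqref{eq:rrt}, noting that the lasso requires both $\mu_{\SPH(\vq,r)}(\pm\at)<1$ while the nonnegative lasso needs only the one-sided bound. The concluding appeal to Corollaries~\ref{cor:comregiontest} and~\ref{cor:compliantS} is a harmless addition that the paper handles earlier in the general region-test development.
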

For the lasso, the test \eqref{eq:st} can also be written as:
\begin{align}\label{eq:stavf}
	T_{\SPH(\vq,r)}(\at) =
	\begin{cases}
	1, & \textrm{if } |\vq^T\at|
	<  1-r\|\at\|_2; \\
	0, & \textrm{otherwise.}
	\end{cases}
\end{align}

Theorem \ref{thm:st} defines a parametric family of tests:
$\{\ST(\vq,r)\colon \vq\in \R^{\dd}, r\geq 0\}$,
where $\ST(\vq,r)$ denotes the sphere test
with center $\vq$ and radius $r$.
To use a sphere test one first selects values of $\vq$ and $r$ so that $\SPH(\vq,r)$ bounds $\vto$.
We call this the \emph{parameter selection problem}.
By Lemma \ref{lem:subset}, a tighter bound has potential for better screening.
So using only the information provided, and limited computation, we want to select $\vq$ and $r$ to give the ``best bound''.
This is a design problem involving a trade-off between the computation cost to select $\vq$ and $r$ and the resultant screening performance.
Hence we don't expect there is a ``best answer''.
We outline below several selection methods.

\subsubsection{Parameter selection}
If we know a dual feasible point $\vtf\in \FS$,
then $\vto$ can't be further away from $\fracn{\tv}{\lambda}$
than $\vtf$. This gives the basic spherical bound:
\begin{equation}\label{eq:sb0}
\|\vto- \fracn{\tv}{\lambda}\|_{2} \leq \norm{\vtf-\fracn{\tv}{\lambda} } {2},
\end{equation}
with center $\vq=\tv/\lambda$ and radius
$r=\|\vtf-\tv/\lambda\|_{2}$.
In particular, $\vto(\lm)=\tv/\lm$ is dual feasible and
gives a particular instance of \eqref{eq:sb0}:
\begin{equation}\label{eq:sb1}
\|\vto- \tv/\lambda\|_{2} \leq |\fracn{1}{\lambda}-\fracn{1}{\lm}|~\|\tv\|_2.
\end{equation}
This bound is shown in Fig.~\ref{fig:3ST} as the larger sphere in solid red. The bound \eqref{eq:sb1} requires only the specification of the lasso problem and the computation of $\lm$.
We call it the \emph{default spherical bound}.

Better bounds are possible with additional computation
or if additional information is supplied.
For example,
\cite{dpp2015} observed that to obtain a
feasible point $\vtf$ closer to $\vto$ than $\tv/\lm$
one can first run $K$ steps of the homotopy algorithm on \eqref{eq:iterW}.
This gives the solution $\wvo_K$ of the
instance $(\tv,\lambda_K)$, $\lambda_K>\lambda$,
for the $K$-th breakpoint on the (primal) regularization path.
Effectively, this first solves the lasso problem for $\lambda_K>\lambda$,
and then uses this solution to help screen for the actual instance to be solved.
The sphere center can also be moved away from $\tv/\lambda$.
Examples include the sphere tests ST2 and ST3
in \cite{Xiang2011Learning_b} derived in the setting of unit norm $\tv$ and $\at_i$.
In addition, \cite{dpp2015} noted that if the dual solution $\vto_0$ is known for an instance
$(\vx,\lambda_0)$, then
$\|\vto(\lambda)-\vto_0\|_2 \leq  |1/\lambda -1/\lambda_0|~\|\tv\|_2$
(this is discussed further below).
This leverages a solved instance to give a spherical bound centered at
$\vq=\vto_0$.

\subsubsection{Connections with the Literature}
A variety of existing screening tests for the lasso are sphere tests.
The Basic SAFE-LASSO test \cite{Ghaoui2010Safe}
and the test ST1 in \cite[Sect. 2]{Xiang2011Learning_b}
are sphere tests based on the default spherical bound \eqref{eq:sb1}.
The SAFE-LASSO test \cite[Theorem 2]{Ghaoui2010Safe} is also a sphere test.
It assumes a dual feasible point $\vt_{0}$ is given and uses this to
improve the default spherical bound centered at $\tv/\lambda$.
The sphere tests ST2 and ST3 in \cite[Sect. 2]{Xiang2011Learning_b}
use spherical bounds not centered at $\tv/\lambda$.
We will comment further on the test ST3 at the end of \S\ref{sec:comp}.
The core test used in  \cite{dpp2015} is a sphere test with center $\vto_0=\vto(\lambda_0)$, where $\vto_0(\lambda_0)$ is the dual solution at $\lambda_0$, and radius $|1/\lambda-1/\lambda_0|~\|\tv\|_2$.
This bound follows from the nonexpansive property of
projection onto a convex set:
\begin{align}
\begin{split}\label{eq:dppsr}
\|\vto(\lambda)-\vto(\lambda_0)\|_2
&\leq \|\tv/\lambda - \tv/\lambda_0\|_2\\
&= |1/\lambda-1/\lambda_0|~\|\tv\|_2.
\end{split}
\end{align}

The Strong Rule \cite{Tibshirani2010Strong} is also a sphere test for the lasso problem. For notational simplicity, let the features and the target vector $\tv$ have unit norm.
The Strong Rule discards feature $\at_i$ if
$|\at_i^T\tv| <2\lambda-\lm$.
This is a sphere test with center $\vq=\tv/\lambda$ and radius
$r_{sr}=(\lm-\lambda)/\lambda$.  
The point $\vto$ is bounded within the default sphere 
(center $\tv/\lambda$, radius $r=1/\lambda - 1/\lm$).
The Strong Rule uses a sphere with the same center but 
a radius only a fraction of $r$: $r_{sr} = r \lm$.
This smaller sphere is not guaranteed to contain $\vto$.
So the Strong Rule can (with low probability) yield false rejections.
A detailed discussion of this issue is given in \cite{Tibshirani2010Strong}.
A more advanced version of the Strong Rule,
the Strong Sequential Rule \cite{Tibshirani2010Strong},
assumes a solution $\wvo_0$ of the lasso instance $(\tv,\lambda_0)$
is available, where $\lambda_0>\lambda$.
It then forms the residual $\vr_0=\tv-\Dict \wvo_0$ 
and screens the lasso instance $(\tv,\lambda)$ 
using the test $|\at_i^T \vr_0| <2\lambda-\lambda_0$.
This is also a sphere test. To see this, use
\eqref{eq:relationship} to write
$\vr_0=\tv-\Dict \wvo=\lambda_0 \vto_0$.
Then the test becomes
$|\at_i^T \vto_0| <1-r_{ssr}$
with $r_{ssr} = (1/\lambda-1/\lambda_0) 2\lambda$.
This is a sphere test with center $\vto_0$ and
radius $r_{ssr}=2\lambda r$
where $r$ is the radius of the known bounding sphere \eqref{eq:dppsr}.
When $\lambda<0.5$, this test may also yield false rejections. 
See  \cite{Tibshirani2010Strong} for examples and analysis of rule violations.

The SIS test in \cite{Fan2008Sure} is framed in a probabilistic setting and is not intended for lasso screening.
Nevertheless, if we translate SIS into our setting it is a sphere test for a lasso problem with appropriately selected  $\lambda$.
SIS assumes a dictionary $\Dict \in \R^{\dd\by \ncw}$ of standardized vectors (features of unit norm) and  computes the vector of (marginal) correlations $\vrho=\Dict^T \tv$.
Then given $0<\gamma<1$, it selects the top
$[\gamma \dd]$ features ranked by $|\rho_i|$.
Assume for simplicity that the values $|\rho_i|$ are distinct and let $t_{\gamma}$ denote the value of
$|\rho_i|$ for the $[\gamma n]$-th
feature in the ranking.
The SIS rejection criterion can then be written as
$|\at_i^T\tv| < t_{\gamma}$.
We now form a lasso problem with dictionary $\Dict$,
target vector $\tv$, and a value $\lambda/\lm$ to be decided.
For simplicity of notation, assume that $\tv$ has unit norm. Then the default spherical bound for the dual solution of the lasso has center $\tv/\lambda$ and radius $r=1/\lambda-1/\lm$, and the corresponding sphere test is  $|\at_i^T \tv| < \lambda(1-r)$.
Equating the right hand sides of the above test expressions, and using some algebra shows that if we take $\lambda/\lm = \fracn{(1+t_{\gamma})}{(1+\lm)}<1$,
then SIS is the default sphere test for this
particular lasso problem.

\subsection{Sphere Plus Halfspace Tests}\label{sec:dt}
Now consider a region test based on the nonempty intersection of  a spherical ball
$\{\vz \colon  \norm{\vz-\vq}{2}\leq r\}$
and one closed half space
$\{\vz\colon \vn^T\vz\leq c\}$.
Here $\vn$ is the unit normal to the half space and $c\geq 0$.
This yields the dome region $\DM(\vq,r;\vn,c)=\{\vz\colon \vn^T\vz\leq c, \norm{\vz-\vq}{2}\leq r \}$
illustrated in Fig.~\ref{fig:gendome}.

The following features of the dome
$\DM(\vq,r;\vn,c)$ will be useful.
We call the point $\vq_d$ on the bounding hyperplane and the line passing through $\vq$ in the direction of the hyperplane normal  the \emph{dome center}.
The signed distance from $\vq$ to $\vqb$ in the direction $-\vn$ is a fraction $\fr_{d}$ of the radius $r$ of the sphere.
We call the maximum straight line distance $r_d$ one can move from $\vqb$ within the dome and hyperplane
the \emph{dome radius}.
Under the sign convention indicated above,
simple Euclidean geometry gives the following relationships:
\begin{align}
\fr_{d} &= \fracn{(\vn^{T}\vq-c)}{r}, \label{eq:frd}\\
\vqb &= \vq - \fr_{d} r \vn,  \label{eq:vqb}\\
\rd  &= r \sqrt{1-\fr_{d}^{2}}. \label{eq:rb}
\end{align}
To ensure that the dome is nondegenerate (a nonempty and proper subset of each region), we need $\vqb$ to be inside the sphere.
Hence we require  $-1\leq\fr_{d}\leq 1$.
So we need $\vq^{T}\vn \geq c-r$,
this ensures that the intersection is a proper subset of the sphere and the half space; and we need $\vq^{T}\vn \leq c+r$, this ensures the intersection is nonempty.

To find
$\mu(\at)=\max_{\vt\in \DM(\vq,r;\vn,c)} \vt^T\at$, for $\at \in \R^\dd$,
we solve the optimization problem \eqref{eq:optprobm}
with $m=1$.
Particular instances of this problem were solved in
\cite[Appendix A]{Ghaoui2011Safe} (by solving a Lagrange dual problem)
and in \cite[\S3]{Xiang2012Fast} (by directly solving a primal problem).
Both approaches can be extended to solve the general problem \eqref{eq:optprobm} with $m=1$.
This yields the following lemma, and the dome screening test.

\begin{figure}[t]
\centering
\subfigure[]{\includegraphics[width=0.25\textwidth]{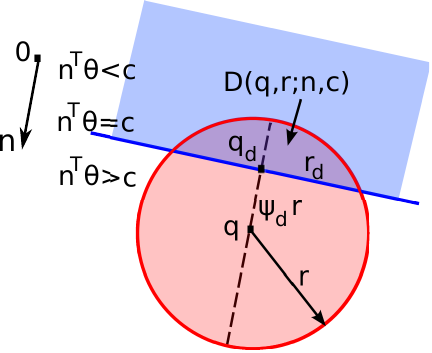}
\label{fig:gendome}}
\subfigure[]{\includegraphics[width=0.3\textwidth]{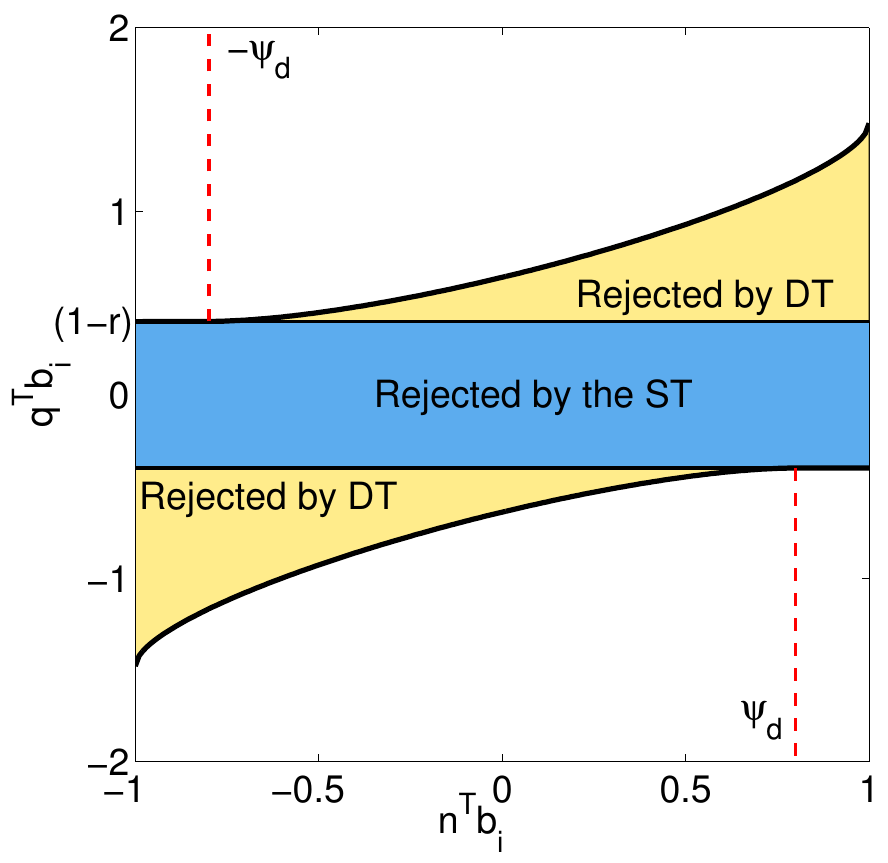}
\label{fig:rejection_area}}
\caption{\small
{\bf (a)} A general dome region $\DM(\vq,r;\vn,c)$ shown for $0<\fr_{d}<1$ and the dome consisting of less than half the sphere.
{\bf (b)} The rejection area (shaded) of a lasso dome test.
}
\vspace{-0.5cm}
\end{figure}

\begin{lemma}\label{lem:max-thTb}
Fix a dome $\DM=\DM(\vq,r ;\vn,c)$ with
$|\fr_{d}| \leq 1$.  Then for $\at\in\R^\dd$,
$$
\mu_{\DM}(\at)
= \vq^{T}\at +\maxu_1(\vn^{T}\at,\|\at\|_2),
$$
where $\maxu_1(t_1,t_2)$ is the function
\begin{align}
\begin{split} \label{eq:defmaxu}
&\maxu_1(t_1,t_2) =\\
&\begin{cases}
r t_2 , & \textrm{if } t_1< -\fr_{d} t_2;\\
-\fr_{d}rt_1 + r\sqrt{t_2^2 - t_1^{2}}\sqrt{1-\fr_{d}^{2}},
	& \textrm{if } t_1 \geq -\fr_{d} t_2.
\end{cases}
\end{split}
\end{align}
\end{lemma}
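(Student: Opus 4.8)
The plan is to solve the convex optimization problem \eqref{eq:optprobm} with $m=1$ directly via its KKT conditions, exploiting the fact, noted just before the lemma, that the solution depends on $\at$ only through its components along $\vn$ and along the sphere. First I would work in the normalized coordinates of \eqref{eq:optprobms}: writing $\vz=(\vt-\vq)/r$, the problem becomes maximize $\vz^T\at$ subject to $\|\vz\|_2\leq 1$ and $\vn^T\vz \leq -\fr_d$ (using $\psi = (\vn^T\vq - c)/r = \fr_d$ from \eqref{eq:frd}), and then $\mu_{\DM}(\at) = \vq^T\at + r\,\bar\mu(\at)$, so it suffices to show $r\,\bar\mu(\at) = \maxu_1(\vn^T\at, \|\at\|_2)$. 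Decompose $\at = t_1 \vn + \at_\perp$ with $t_1 = \vn^T\at$ and $\|\at_\perp\|_2 = \sqrt{\|\at\|_2^2 - t_1^2}$; by symmetry the optimal $\vz$ lies in the plane spanned by $\vn$ and $\at_\perp$, reducing to a two-variable problem.

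The key case split is exactly the one appearing in \eqref{eq:defmaxu}, and it corresponds to whether the halfspace constraint $\vn^T\vz \leq -\fr_d$ is active at the optimum. If the unconstrained-by-halfspace maximizer over the ball, namely $\vz^\star = \at/\|\at\|_2$, already satisfies $\vn^T\vz^\star = t_1/\|\at\|_2 \leq -\fr_d$ — equivalently $t_1 \leq -\fr_d \|\at\|_2$, i.e. $t_1 < -\fr_d t_2$ in the lemma's notation (with $t_2 = \|\at\|_2$) up to the boundary — then the sphere bound alone is binding and $\bar\mu(\at) = \|\at\|_2$, giving the first branch $r t_2$. Otherwise the halfspace constraint is active, so $\vn^T\vz = -\fr_d$ and $\|\vz\|_2 = 1$; I would parametrize $\vz = -\fr_d \vn + \sqrt{1-\fr_d^2}\,\vu$ with $\vu$ a unit vector orthogonal to $\vn$, maximize $\vz^T\at = -\fr_d t_1 + \sqrt{1-\fr_d^2}\,\vu^T\at_\perp$ over such $\vu$ by aligning $\vu$ with $\at_\perp$, obtaining $\bar\mu(\at) = -\fr_d t_1 + \sqrt{1-\fr_d^2}\sqrt{\|\at\|_2^2 - t_1^2}$, and multiplying by $r$ yields the second branch. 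One should check the two branches agree at the boundary $t_1 = -\fr_d t_2$, which they do since $\sqrt{t_2^2 - t_1^2}\sqrt{1-\fr_d^2} = t_2(1-\fr_d^2)$ there and $-\fr_d r t_1 = r\fr_d^2 t_2$, summing to $r t_2$.

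I would present this cleanly by invoking strong duality / the KKT conditions for the quadratically constrained problem \eqref{eq:optprobm} with $m=1$ to justify that exactly these two configurations (ball-only active; both constraints active) can occur at optimality — a degenerate empty dome is excluded by the hypothesis $|\fr_d|\leq 1$, and the case where only the halfspace is active cannot give a maximum of a linear function over an unbounded set intersected with nothing else. The main obstacle is handling the geometry carefully in the boundary/degenerate situations: verifying that when $|\fr_d| = 1$ the formula still makes sense (the dome collapses to a single point or the full sphere), and making the reduction to the two-dimensional plane rigorous rather than merely intuitive — this is where a short argument using the rotational symmetry of the feasible set about the axis $\vn$ is needed, together with the observation that $\maxu_1$ as written is independent of the decomposition of $\at_\perp$. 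The remaining algebra is routine and I would relegate it, as the paper does, to an appendix, citing the prior derivations in \cite[Appendix A]{Ghaoui2011Safe} and \cite[\S3]{Xiang2012Fast} as precedents for the $m=1$ computation.
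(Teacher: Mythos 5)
Your proof is correct, but it takes a genuinely different route from the paper's. The paper proves Lemma~\ref{lem:max-thTb} by a Lagrangian (dual) computation: it attaches multipliers $\mu,\sigma\geq 0$ to the sphere and halfspace constraints in \eqref{eq:optprobm} with $m=1$, eliminates $\vt$ by stationarity, and then minimizes over the multipliers, with the case split of \eqref{eq:defmaxu} emerging from the sign condition on the halfspace multiplier ($\sigma=0$ exactly when $\vn^T\at < -\fr_d\|\at\|_2$). You instead argue in the primal after the normalization $\vz=(\vt-\vq)/r$: reduce by rotational symmetry about $\vn$ to the plane spanned by $\vn$ and $\at_\perp$, observe that the case split is precisely whether the ball maximizer $\at/\|\at\|_2$ violates the halfspace $\vn^T\vz\leq-\fr_d$, and in the active case parametrize the circle $\{\|\vz\|_2=1,\ \vn^T\vz=-\fr_d\}$ and align with $\at_\perp$. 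Your approach is more elementary and makes the geometric meaning of the two branches transparent (it is essentially the primal derivation the paper attributes to \cite{Xiang2012Fast}), while the paper's multiplier calculation has the advantage of extending mechanically to $m=2$, which is exactly how Lemma~\ref{lem:dmax} for THT is proved in the same appendix. One small imprecision: your claim that in the second case \emph{both} constraints are active fails in the degenerate subcase $\at$ proportional to $\vn$ with $|\fr_d|<1$, where a maximizer can lie strictly inside the ball with only the halfspace active; this does not affect the result, since the maximum is then constant on the disk $\{\vn^T\vz=-\fr_d,\ \|\vz\|_2\leq1\}$ and hence is still attained on the circle you optimize over, with $\at_\perp=0$ making the extra term vanish, so the second branch of \eqref{eq:defmaxu} remains valid; a one-line remark patching this case would make the argument airtight.
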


\begin{theorem}\label{thm:dometest}
The screening test for a nondegenerate dome
$\DM(\vq,r;\vn,c)$ is:
\begin{align}
\begin{split}\label{eq:dt}
	&T_{\DM(\vq,r;\vn,c)}(\at)= \\
	&\!\!\! \begin{cases}
	1, &\!\! \textrm{if }\tl(\vn^T\at,\|\at\|_2) \!<\! \vq^T\at
	\!<\! \tu(\vn^T\at,\|\at\|_2); \\
	0, &\!\!\textrm{otherwise;}
	\end{cases}
\end{split}
\end{align}
where
$	\tu(t_1,t_2) =
	1-\maxu_1(t_1,t_2)$
and for the lasso $\tl(t_1,t_2)=-\tu(-t_1,t_2)$,
and for the nonnegative lasso $\tl(t_1,t_2)=-\infty$.
\end{theorem}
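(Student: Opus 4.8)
The plan is to derive Theorem~\ref{thm:dometest} as a direct corollary of Lemma~\ref{lem:max-thTb} together with the generic region-test recipe \eqref{eq:rrt}. Recall that \eqref{eq:rrt} says feature $\at_i$ is rejected (test value $1$) exactly when $\max\{\mu_{\RR}(\at_i),\mu_{\RR}(-\at_i)\}<1$ for the lasso, and when $\mu_{\RR}(\at_i)<1$ for the nonnegative lasso. So the entire proof is a matter of substituting the closed-form expression for $\mu_{\DM}$ from Lemma~\ref{lem:max-thTb} into these inequalities and rewriting them in the two-sided ``sandwich'' form displayed in \eqref{eq:dt}.

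First I would handle the nonnegative lasso, which is the cleaner case. By Lemma~\ref{lem:max-thTb}, $\mu_{\DM}(\at)=\vq^T\at+\maxu_1(\vn^T\at,\|\at\|_2)$, so the rejection condition $\mu_{\DM}(\at)<1$ becomes $\vq^T\at<1-\maxu_1(\vn^T\at,\|\at\|_2)=\tu(\vn^T\at,\|\at\|_2)$. Since there is no lower-side constraint in the nonnegative case (only $\vto^T\at_i\le 1$ need be enforced, not $-1\le\vto^T\at_i$), we set $\tl(t_1,t_2)=-\infty$ and the two-sided test \eqref{eq:dt} collapses to exactly this one-sided condition. This matches the statement.

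Next I would treat the lasso. Here rejection requires both $\mu_{\DM}(\at)<1$ and $\mu_{\DM}(-\at)<1$. The first inequality is, as above, $\vq^T\at<\tu(\vn^T\at,\|\at\|_2)$. For the second, I apply Lemma~\ref{lem:max-thTb} to the feature $-\at$: noting $\vn^T(-\at)=-\vn^T\at$ and $\|-\at\|_2=\|\at\|_2$, we get $\mu_{\DM}(-\at)=-\vq^T\at+\maxu_1(-\vn^T\at,\|\at\|_2)$, so $\mu_{\DM}(-\at)<1$ is equivalent to $\vq^T\at>\maxu_1(-\vn^T\at,\|\at\|_2)-1=-\tu(-\vn^T\at,\|\at\|_2)=:\tl(\vn^T\at,\|\at\|_2)$. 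Conjoining the two gives precisely $\tl(\vn^T\at,\|\at\|_2)<\vq^T\at<\tu(\vn^T\at,\|\at\|_2)$, which is the displayed test. The nondegeneracy hypothesis $|\fr_d|\le 1$ is exactly the condition under which Lemma~\ref{lem:max-thTb} applies, so it is inherited directly.

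The only real content beyond bookkeeping is the validity of invoking Lemma~\ref{lem:max-thTb}, i.e.\ the closed-form evaluation of $\mu_{\DM}$; but that is the preceding lemma and may be assumed. So there is no genuine obstacle here --- the ``hard part'' is purely notational: keeping the two arguments $t_1=\vn^T\at$, $t_2=\|\at\|_2$ straight under the sign flip $\at\mapsto-\at$, which flips $t_1$ but not $t_2$, and making sure the definition $\tl(t_1,t_2)=-\tu(-t_1,t_2)$ is applied with the correct argument order. I would end the proof by remarking that the reformulation \eqref{eq:stavf}-style absolute-value rewrite is also available whenever $\maxu_1$ happens to be even in $t_1$ on the relevant range, but since $\maxu_1(t_1,t_2)$ is not symmetric in $t_1$ in general (the two branches depend on the sign of $t_1+\fr_d t_2$), the two-sided form \eqref{eq:dt} is the natural statement and no further simplification is claimed.
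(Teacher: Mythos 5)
Your proposal is correct and follows exactly the paper's own argument: substitute the closed-form $\mu_{\DM}$ from Lemma~\ref{lem:max-thTb} into the rejection rule \eqref{eq:rrt}, applying it to both $\at$ and $-\at$ in the lasso case and tracking that the sign flip negates $\vn^T\at$ but leaves $\|\at\|_2$ unchanged. No discrepancies with the paper's proof.
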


We denote a dome test by $\DT(\vq,r;\vn,c)$.
Although defined piecewise, the functions $\tu$ and $\tl$ in
Theorem \ref{thm:dometest} are continuous and smooth: $\tu,\tl \in C^1$.
This can be checked using simple calculus.
The parameters $r$ and $c$ of the dome do not appear as arguments in the test but play a role through
$\maxu_1$. The test simplifies for unit norm features. In that case,
$t_2=\|\at_i\|_2=1$ and $\maxu_1$, $\tu$ and $\tl$ are only functions of $t_1$.

To gain some insight into this test, consider the situation when $r<1$ and all features have unit norm.
We can factor the test into the composition of two functions:
a linear map $\at_{i}\mapsto [\vq,~\vn]^{T}\at_{i}$
and a two-dimensional decision function $\ddf_{r,\fr_{d}}$ with 
$\ddf_{r,\fr_{d}} (s,t) = 1$ if $\tl(t) < s< \tu(t)$, and $0$ otherwise;
where $s=\vq^T\at_i\in [-\|\vq\|_{2},\|\vq\|_{2}]$,
$t=\vn^T\at_i\in [-1,1]$,
and $\tu(t)$, $\tl(t)$ are given in Theorem \ref{thm:dometest} with
$t=t_1$ and $t_2=1$.
We can display the test rejection region by plotting
$\tl(t)$ and $\tu(t)$ versus $t$ as shown in Fig.~\ref{fig:rejection_area}.
For the lasso, the rejection region has upper and lower boundaries.
The sections of the boundaries with $\tu(t) = (1-r)$
and $\tl(t)=-(1-r)$,  correspond to the sphere test $T_{\SPH(\vq,r)}$.
If feature $\at_{i}$ maps into the shaded region in the figure, then $\at_{i}$ is rejected. The lightly shaded (yellow) area indicates the extra rejection power of the dome test over the underlying sphere test.
For a given value of $\vq^T\at_i>0$, the dome test lowers the bar for rejection as $\vn^T\at_i$ increases.

\subsubsection{Parameter selection}
Now consider the parameter selection problem.
Since we have discussed parameter selection for a spherical bound,
we assume $\SPH(\vq,r)$ is given and give examples of bounding $\vto$ within a suitable half space.

Each constraint of the dual problem $\at^{T}\vt\leq 1$ bounds $\vto$.
This half space has $\vn=\at/\|\at\|_2$ and $c=1/\|\at\|_2$.
The resultant dome is nonempty since both the sphere and the half space contain $\vto$. To ensure it is proper, we require $\vq^{T}\at \geq 1-r\|\at\|_2$. This means that the sphere test does not reject the feature $\at$.
In particular, we can select $\at$ to minimize the disk radius $r_d$.
To do so, we maximize $\fr_{d}$ given by \eqref{eq:frd}:
\begin{equation}\label{eq:bg}
\at_g = \argmax_{\at\in \CP} \frac{\at^T\vq-1}{\|\at\|_2}
\end{equation}
For unit norm features, \eqref{eq:bg} selects
the feature most correlated with $\vq$.
If in addition, $\vq=\tv/\lambda$, then \eqref{eq:bg} yields $\at_g=\atm$.
Selecting the default spherical bound
and using \eqref{eq:bg} gives the specific dome:
\begin{equation}\label{eq:dt(xl;bm)}
\DM(\tv/\lambda, |\fracn{1}{\lambda}-\fracn{1}{\lm}|~\|\tv\|_2;
\at_g/\|\at_g\|_2,1/\|\at_g\|_2).
\end{equation}
We call this the \emph{default dome bound}.
When $\tv$ and all the features have unit norm, this simplifies to
$\DM(\tv/\lambda, |\fracn{1}{\lambda}-\fracn{1}{\lm}|;\atm,1).$
This dome is illustrated in Fig.~\ref{fig:dtest} (left).

If $\vto_{0}$ is the dual solution of an instance $(\tv_{0},\lambda_{0})$, then
$\vto_{0}$ lies on the boundary of $\FS$. Moreover, its {\em optimality} for  $(\tv_{0},\lambda_{0})$ ensures that it satisfies the inequalities \eqref{eq:vtoineq}.
Hence for each $\vt\in \FS$:
\begin{equation} \label{eq:xoloto_hs}
(\tv_{0}/\lambda_{0}- \vto_{0})^{T}\vt  \leq (\tv_{0}/\lambda_{0}-\vto_{0})^{T}\vto_{0}.
\end{equation}
Since $0\in \FS$, the right hand side is nonnegative.
Therefore this inequality bounds $\FS$ in the closed half space
$\vn_{0}^{T} \vt \leq c_{0}$ with
\begin{align}
\begin{split}
r_{0}&=\|\tv_{0}/\lambda_{0} -\vto_{0}\|_{2},\\
\vn_{0}&=(\tv_{0}/\lambda_{0}-\vto_{0})/r_{0},\\
c_{0}&=\vn_{0}^{T}\vto_{0} .
\end{split} \label{eq:chs0}
\end{align}

\begin{figure}[t!]
\begin{center}
\includegraphics[height=4.5cm]{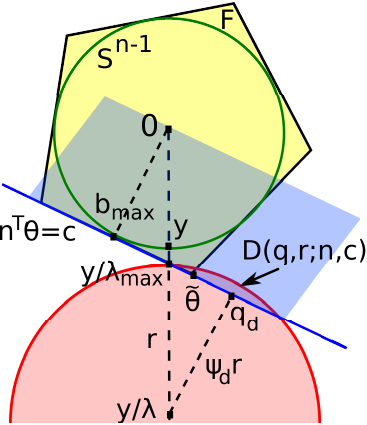}
\includegraphics[height=4.5cm]{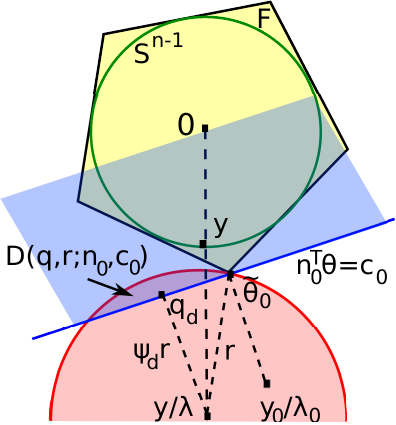}
\end{center}
\caption{\small Two dome tests for unit norm features and target vector.
Left: The dome \eqref{eq:dt(xl;bm)} based on the feasible point $\tv/\lm$
and the closed half space $\atm^{T}\vt \leq 1$.
Right: The dome \eqref{eq:xoloto_dt} based on a solved instance
$(\tv_{0},\lambda_{0},\vto_{0})$.
} \label{fig:dtest}
\vspace{-0.5cm}
\end{figure}

The intersection of this half space with the bounding sphere $\SPH(\vq,r)$
is nonempty and it is proper if $\fr_{d}\geq -1$. To check this condition
note that
\begin{align*}
\fr_{d}
&= \frac{\vn_{0}^{T} \vq -\vn_{0}^{T}\vto_{0}}{r}\\
&= \frac{(\tv_{0}/\lambda_{0}-\vto_{0})^{T}}{r_{0}}\frac{(\vq - \vto_{0})}{\|\vq-\vto_0\|_2}
\frac{\|\vq-\vto_0\|_2}{r}\\
&= \cos\beta \frac{\|\vq-\vto_0\|_2}{r}
\end{align*}
where $\beta$ is the angle between $n_0$ and $\vq-\vto_0$.
So if $\cos\beta >0$ or $\vto_0\in \SPH(\vq,r)$, then the dome is proper.
For example, $\vq=\tv/\lambda$ and
$r=\|\tv/\lambda -\vto_{0}\|_{2}$,
yields the proper dome:
\begin{equation}\label{eq:xoloto_dt}
\DM(\tv/\lambda, \|\tv/\lambda -\vto_{0}\|_{2};\vn_{0},c_{0}).
\end{equation}
This dome illustrated in Fig.~\ref{fig:dtest} (right).

\subsubsection{Connections with the Literature}\label{ssec:priordome}
Specific dome tests
were introduced in \cite[\S2.4]{Ghaoui2011Safe} and \cite[\S3]{Xiang2012Fast}.
The dome test discussed in \cite{Xiang2012Fast} is
based on the default dome bound \eqref{eq:dt(xl;bm)}
for unit norm features and unit norm $\tv$.
The SAFE-LASSO test in \cite[\S2.4]{Ghaoui2011Safe}
is a dome test specifically designed for screening and solving lasso problems at points along the regularization path.
A triple $(\tv,\lambda_{0}, \vto_{0})$ is given where
$\vto_{0}$ is the dual solution for instance $(\tv,\lambda_{0})$.
The test uses this to screen the dictionary for an
instance $(\tv,\lambda)$ with $\lambda<\lambda_{0} \leq \lm$.
We show that the dome employed is \eqref{eq:xoloto_dt} with $\tv_{0}=\tv$.
The solution in \cite[\S2.4]{Ghaoui2011Safe}
entails specifying a bounding sphere and a half space,
then solving the corresponding version of  \eqref{eq:optprobm}.
The selected half space is $g^{T}(\vt-\vto_{0})\geq 0$ where
$g=\nabla G(\vto_{0})=-\tv/\lambda_{0}+\vto_{0}$
is the gradient of the dual objective for the solved
instance evaluated at $\vto_{0}$ (up to positive scaling).
The spherical bound is obtained by scaling $\vto_{0}$ to obtain the closest
feasible solution to $\tv/\lambda$.
This can be specified by letting
$r_{0}=\|\tv/\lambda_{0}-\vto_{0} \|_{2}$
and setting
$\vq =\tv/\lambda$,
$\rh = \min_{s\in [-1,1]} \|s\vto_{0}-\tv/\lambda\|_{2}$,
$\vn =(\tv/\lambda_{0}-\vto_{0})/r_{0}$ and
$c =\vn^{T}\vto_{0}$.
Assume $\lambda \leq \lambda_{0} < \lm$ and
let $\sh(\lambda)$ denote the optimal value of $s$ in the definition of $\rh$.
By the optimality of $\vto_{0}$ for the instance $(\tv,\lambda_{0})$,
we must have $\sh(\lambda_{0})=1$.
In addition, it must hold that $\tv^{T}\vto_{0}\geq 0$
otherwise the feasible point $-\vto_{0}$ would be closer to $\tv/\lambda$.
By simple calculus we then determine that
$\sh(\lambda)=\min\{1, \fracn{(\tv^{T}\vto_{0})}{(\lambda \|\vto_{0}\|_{2}^{2})}\}$.
It follows that for all $\lambda <\lambda_{0}$, $\sh=1$.
Hence for $\lambda<\lambda_{0}$ we can take
$\rh=\|\tv/\lambda -\vto_{0}\|_{2}$.
Thus for $\lambda<\lambda_{0}$, SAFE-LASSO
uses the dome \eqref{eq:xoloto_dt} with the constraint $\tv_{0}=\tv$.
\vspace{-4mm}

\subsection{Iteratively Refined Bounds}\label{sec:refine}
Under favorable circumstances, it is possible to refine a sphere  $\SPH(\vq_,r)$
 bounding $\vto$ to obtain a bounding sphere of smaller radius.
Let the half space $(\vn,c)$ also bound $\vto$ and
its intersection with $\SPH(\vq,r)$
result in a dome  $\DM=\DM(\vq,r;\vn,c)$
with  parameters $\fr_d$, $\vq_d$, and $r_d$.
Since $\DM$ is a bounded convex set, there exists
a unique sphere of smallest radius that bounds $\DM$. This is called the {\em circumsphere} of $\DM$.
We claim that if $0<\fr_d \leq 1$, or equivalently
$\vq \notin \DM$,  then the circumsphere of $\DM$ is
$\SPH(\vq_d,\rd)$.
In this case, $\rd$ is strictly smaller than $r$ and
$\SPH(\vq_d,\rd)$ is a tighter spherical bound on $\vto$. This is summarized below.

\begin{lemma}\label{lem:circumsphere}
Let $\SPH=\SPH(\vq,r)$ and the half space $(\vn,c)$ bound the dual solution $\vto$, with the resulting dome $\DM=\DM(\vq,r;\vn,c)$ satisfying $0<\fr_d \leq 1$.
Then $\SPH(\vq_d, \rd)$ is the circumsphere of $\DM$ and hence bounds $\vto$.
\end{lemma}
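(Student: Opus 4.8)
The plan is to establish the result through two inclusions: $\DM \subseteq \SPH(\vqb,\rd)$, which exhibits $\SPH(\vqb,\rd)$ as \emph{an} enclosing ball of the dome, and $F \subseteq \DM$, where $F = \DM \cap \{\vz\colon \vn^T\vz = c\}$ is the flat face of the dome, which forces \emph{every} enclosing ball of $\DM$ to have radius at least $\rd$. Together these say that $\SPH(\vqb,\rd)$ is the smallest enclosing ball of $\DM$, i.e. its circumsphere. Bounding $\vto$ is then immediate, since $\vto$ lies in $\SPH(\vq,r)$ and satisfies $\vn^T\vto\le c$, hence $\vto\in\DM\subseteq\SPH(\vqb,\rd)$.

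First I would record the elementary geometry of the face $F$. Because $\norm{\vn}{2}=1$ and $\fr_d = (\vn^T\vq - c)/r$, the point $\vqb = \vq - \fr_d r\,\vn$ satisfies $\vn^T\vqb = c$, so it lies on the bounding hyperplane; and for any $\vz$ with $\vn^T\vz = c$ the displacement $\vz - \vqb$ is $\vn$-orthogonal, giving the Pythagorean identity $\norm{\vz - \vq}{2}^2 = \norm{\vz-\vqb}{2}^2 + \fr_d^2 r^2$. Hence $F = \{\vz\colon \vn^T\vz = c,\ \norm{\vz-\vqb}{2}\le \rd\}$ is an $(\dd-1)$-dimensional ball of radius $\rd = r\sqrt{1-\fr_d^2}$ centered at $\vqb$, and clearly $F\subseteq\DM$. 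For the inclusion $\DM\subseteq\SPH(\vqb,\rd)$, take $\vz\in\DM$ and decompose $\vz - \vqb = \alpha\vn + \vu$ with $\vn^T\vu = 0$; then $\alpha = \vn^T\vz - c \le 0$ because $\vz$ lies in the half space. Writing $\vz - \vq = (\alpha - \fr_d r)\vn + \vu$ and using $\norm{\vz-\vq}{2}\le r$ to bound $\norm{\vu}{2}^2 \le r^2-(\alpha-\fr_d r)^2$, a one-line substitution gives
\[ \norm{\vz - \vqb}{2}^2 = \alpha^2 + \norm{\vu}{2}^2 \le \alpha^2 + r^2 - (\alpha - \fr_d r)^2 = \rd^2 + 2\alpha\fr_d r \le \rd^2, \]
the last inequality because $\alpha\le 0$ while $\fr_d > 0$ and $r>0$.

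Finally I would argue minimality. Any ball $\SPH(\vp,\rho)\supseteq\DM$ contains $F$, hence (taking $\dd\ge 2$; $\dd=1$ is a trivial degenerate case) contains the antipodal pair $\vqb \pm \rd\,\ve$ for every unit vector $\ve\perp\vn$; adding the two constraints $\norm{\vp - \vqb \mp \rd\,\ve}{2}^2\le\rho^2$ yields $\norm{\vp-\vqb}{2}^2 + \rd^2\le\rho^2$, so $\rho\ge\rd$, with equality only when $\vp = \vqb$. Thus $\SPH(\vqb,\rd)$ is the unique minimum enclosing ball of $\DM$, i.e. its circumsphere, and since it bounds $\DM\ni\vto$ it bounds $\vto$; note also $\rd < r$ because $\fr_d > 0$, so the refined bound is strictly tighter. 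The one genuinely delicate point is the sign bookkeeping in the second paragraph: the hypothesis $\fr_d > 0$ (equivalently $\vq\notin\DM$) is precisely what makes the cross term $2\alpha\fr_d r$ nonpositive, and hence precisely what makes the dome fit inside $\SPH(\vqb,\rd)$ — without it the inclusion can fail.
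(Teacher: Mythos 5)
Your proposal is correct, and its containment half is essentially the paper's computation: you decompose $\vz-\vqb$ along $\vn$ and its orthogonal complement and use $\alpha\le 0$ together with $\fr_d>0$ to kill the cross term, exactly as the paper does for boundary points of $\DM$ (you apply it to all of $\DM$ directly, which is if anything cleaner). Where you genuinely diverge is the minimality half. The paper gets the lower bound on the enclosing radius by invoking a diameter formula for sphere--halfspace intersections (its Lemma on $\diam(\RR)$, imported from the FCSS reference): since $0<\fr_d\le 1$ is equivalent to $\vq\notin\DM$, that formula gives $\diam(\DM)=2\rd$, and any enclosing ball must have radius at least half the diameter. You instead argue from scratch: the flat face $F$ is an $(\dd-1)$-dimensional ball of radius $\rd$ about $\vqb$ contained in $\DM$, and for any enclosing ball $\SPH(\vp,\rho)$ the antipodal pair $\vqb\pm\rd\,\ve$, $\ve\perp\vn$, combined via the parallelogram identity forces $\|\vp-\vqb\|_2^2+\rd^2\le\rho^2$. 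This buys you a self-contained proof (no external lemma), plus two extras the paper's route does not state explicitly: uniqueness of the minimum enclosing ball (equality forces $\vp=\vqb$) and the strict improvement $\rd<r$. The paper's route is shorter given the cited diameter lemma and reuses a tool it needs elsewhere (Proposition on $\diam(\DM_k)$ in the sequential-screening section). Your handling of the degenerate endpoints ($\fr_d=1$, $\dd=1$) is fine and does not affect the argument.
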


If suitable half spaces can be found, e.g., among the vectors in $\CP$, 
the construction in Lemma \ref{lem:circumsphere} can be used iteratively. 
At step $k$, we have a bounding sphere  $\SPH_{k}=\SPH(\vq_{k},r_{k})$
and seek $\at\in \CP$ such that 
$\vn=\at/\|\at\|_2$ and $c=1/\|\at\|_2$ satisfy
\begin{equation}\label{eq:betak+1}
0< \fr_{k} = \fracn{(\vq_k^T\vn -c)}{r_{k}}\leq 1.
\end{equation}
If such $\at$ exists, set $\vn_k=\vn$,
$c_k=c$ and
\begin{align}
\vq_{k+1} &= \vq_{k} -\fr_{k} r_{k} \vn_{k} , \label{eq:qk+1}\\
r_{k+1} &= r_{k} \sqrt{ 1-\fr_{k}^{2}}, \label{eq:rk+1}
\end{align}
to obtain a tighter bounding sphere $\SPH_{k+1}=\SPH(\vq_{k+1},r_{k+1})$.
A greedy strategy selects $\at$ at step $k$ to minimize $r_{k+1}$,
or equivalently to maximize $\fr_{k}$:
\begin{equation}\label{eq:greedyk}
\at^{(k)} = \argmax_{\at\in \CP} \frac{\at^T \vq_k -1}{\|\at\|_2} \ .
\end{equation}
When all features have equal norm, this
reduces to maximizing the inner product of
$\at$ and $\vq_{k}$.
This has a simple interpretation.
$\SPH_{k}$ can be thought of as a location bound on $\vto$ 
with the center $\vq_{k}$ the ``estimate'' of $\vto$ given the bound.
The greedy strategy selects $\at$
by maximizing  its alignment with
the current estimate $\vq_{k}$ of $\vto$.
Since $\vto$ is proportional to the optimal residual in the primal problem (see \eqref{eq:relationship}),
this strategy selects features ``best correlated'' with the current estimate of the optimal residual.

\vspace{-3mm}
\subsection{Are More Half Spaces Worthwhile?}\label{sec:THT}
We have examined region tests defined by the intersection of a bounding sphere and one half space ($m=1$),
and have shown that, in general, these have additional rejection power over the simpler sphere tests ($m=0$).
Are more complex tests worthwhile?
To examine this, we go one step further and examine the
region test defined by the intersection of a bounding sphere and two half spaces ($m=2$).
Examining the relative performance of this test
will allow us to determine where we currently stand in the trade-off between rejection power and computational efficiency. 

Let $\ADM(\vq, r; \vn_1, c_1; \vn_2, c_2)$ denote
the region formed by the intersection of a sphere
$\SPH(\vq, r)=\{\vt:\norm{\vt-\vq}{2}\leq r\}$ and two closed half spaces $H_i=\{\vt:\vn_i^T\vt\leqn c_i\}$, where $\vn_i$ is the the unit normal to $H_i$ and $c_i\geqn 0$, $i=1,2$.
We call the corresponding screening test a \emph{Two Hyperplane Test} (THT).

Each half space $H_i=\{\vt\colon \vn_i^T\vt \leq c_i\}$
intersects the sphere forming a dome with parameters
$\fr_i=(\vn_i^T\vq-c_i)/r$,
$\vq_i=\vq-\fr_ir\vn_i$, and
$r_i=r\sqrt{1-\fr_i^2}$, $i=1,2$.
To ensure each intersection $H_i\cap\SPH(\vq,r)$ is nonempty and proper,
we need $-1\leq \psi_i \leq1$, $i=1,2$,
and to ensure the two half spaces intersect within the sphere,
we need $\arccos\psi_1+\arccos \psi_2 \geq \arccos(\vn_1^T\vn_2)$.
Under these conditions, $\ADM(\vq, r; \vn_1, c_1; \vn_2, c_2)$
is a nonempty, proper subset of the sphere and each half space.

To find $\mu_{\ADM}(\at)=\max_{\at\in \ADM} \vt^T\at$
we solve the optimization problem \eqref{eq:optprobm} with $m=2$.
Using standard techniques, this problem can be solved in closed form
yielding the expressions for $\mu_{\ADM}$ in the following lemma.
The corresponding test then follows from \eqref{eq:rrt}.

\begin{lemma}\label{lem:dmax}
Fix the region $\ADM=\ADM(\vq, r; \vn_1, c_1; \vn_2, c_2)$ and
let $\fr_i$ satisfy $|\fr_i|\leq1$, $i=1,2$, and $\arccos\psi_1+\arccos \psi_2 \geq \arccos(\vn_1^T\vn_2)$.
Let $h(x,y,z)=\sqrt{(1-\nc^2)z^2 +2\nc xy-x^2-y^2}$, where $\nc=\vn_1^T\vn_2$.
Then for $\at\in \R^{\dd}$,
\begin{equation}\label{eq:dmax}
\mu_{\ADM}(\at)=
\vq^T\at+\maxu_2(\vn_1^T\at,\vn_2^T\at,\|\at\|_2)
\end{equation}
where
\begin{align*}
&\maxu_2(t_1, t_2, t_3) =\\
&\left\{
\begin{array}{ll}
    rt_3, & \text{if } (a);\\
    -rt_2\fr_2+r\sqrt{t_3^2-t_2^2}\sqrt{1-\fr_2^2}, & \text{if } (b);\\
    -rt_1\fr_1+r\sqrt{t_3^2-t_1^2}\sqrt{1-\fr_1^2}, & \text{if } (c);\\
    -\frac{r}{1-\nc^2}\left[(\fr_1-\nc\fr_2)t_1+(\fr_2-\nc\fr_1)t_2\right]+\\
    \quad \frac{r}{1-\nc^2}h(\fr_1,\fr_2,1)h(t_1,t_2,t_3),
    &    \text{otherwise;}
    \end{array}
    \right.
\end{align*}
and conditions $(a),(b),(c)$ are given by
\begin{equation*}
\begin{array}{ll}
\hskip-0.2cm (a) &t_1 \lessn -\fr_1 t_3 \ \&\  t_2 \lessn -\fr_2 t_3;\\
\hskip-0.2cm (b) &t_2\geqn -\fr_2 t_3\  \&\ \\
&  \fracn{(t_1-\nc t_2)}{\sqrt{t_3^2-t_2^2}} \lessn \fracn{(-\fr_1+\nc\fr_2)}{\sqrt{1-\fr_2^2}};\\
\hskip-0.2cm (c) &t_1\geqn -\fr_1 t_3 \ \& \ \\
&\fracn{(t_2-\nc t_1)}{\sqrt{t_3^2-t_1^2}} \lessn \fracn{(-\fr_2+\nc\fr_1)}{\sqrt{1-\fr_1^2}}.
\end{array}
\end{equation*}
\end{lemma}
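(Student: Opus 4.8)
The plan is to mimic the derivation used for the one-hyperplane case (Lemma~\ref{lem:max-thTb}): first reduce \eqref{eq:optprobm} with $m=2$ to its normalized form \eqref{eq:optprobms}, and then solve the resulting low-dimensional program by an explicit enumeration of active faces. Concretely, I would apply the change of variable $\vz=(\vt-\vq)/r$ so that $\mu_{\ADM}(\at)=\vq^T\at+r\bar\mu(\at)$ with $\bar\mu(\at)=\max\{\vz^T\at\colon \norm{\vz}{2}\leq1,\ \vn_i^T\vz\leq-\fr_i,\ i=1,2\}$ and $\fr_i=(\vn_i^T\vq-c_i)/r$, i.e. the $\psi_i$ of \eqref{eq:optprobms}. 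As remarked after \eqref{eq:optprobms}, the optimal $\vz^{\ast}$ depends on $\at$ only through its orthogonal projection onto $V=\mathrm{span}\{\vn_1,\vn_2\}$, hence only through $t_1=\vn_1^T\at$, $t_2=\vn_2^T\at$, $t_3=\norm{\at}{2}$ — which is why $\maxu_2$ takes exactly these arguments. Since we maximize a linear functional over the compact convex set $\ADM'=\{\vz\colon\norm{\vz}{2}\leq1,\ \vn_i^T\vz\leq-\fr_i\}$, the maximizer lies on the boundary, and complementary slackness in the KKT conditions of \eqref{eq:optprobms} (one quadratic, two linear constraints) leaves four possibilities: (i) only $\norm{\vz}{2}=1$ is active; (ii)/(iii) $\norm{\vz}{2}=1$ together with only $\vn_2^T\vz=-\fr_2$ (resp. only $\vn_1^T\vz=-\fr_1$); and (iv) all three active, i.e. $\vz$ lies on the edge $E=\{\norm{\vz}{2}=1,\ \vn_1^T\vz=-\fr_1,\ \vn_2^T\vz=-\fr_2\}$, nonempty precisely because of the hypothesis $\arccos\psi_1+\arccos\psi_2\geq\arccos(\vn_1^T\vn_2)$. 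These regimes will become the branches $(a)$, $(b)$, $(c)$ and ``otherwise''.

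For regime (i) the spherical maximizer is $\vz^{\ast}=\at/\norm{\at}{2}$, feasible iff $\vn_i^T\vz^{\ast}\leq-\fr_i$, i.e. $t_i\leq-\fr_i t_3$ (condition $(a)$, with equalities thrown onto the neighbouring branch by continuity), and then $\bar\mu(\at)=t_3$. For regime (ii) I would drop the first half-space and maximize over the sub-sphere $\{\norm{\vz}{2}=1,\ \vn_2^T\vz=-\fr_2\}$ by writing $\vz=-\fr_2\vn_2+\vw$ with $\vw\perp\vn_2$, which forces $\norm{\vw}{2}=\sqrt{1-\fr_2^2}$, so $\vz^T\at=-\fr_2 t_2+\vw^T\at$ and $\vw^T\at$ is maximized by aligning $\vw$ with the component of $\at$ orthogonal to $\vn_2$ (of norm $\sqrt{t_3^2-t_2^2}$); this gives $-\fr_2 t_2+\sqrt{1-\fr_2^2}\sqrt{t_3^2-t_2^2}$. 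To confirm this is the optimum of the full program I would check that (1) the second constraint is genuinely active, i.e. the spherical maximizer violates it ($t_2\geq-\fr_2 t_3$), and (2) the resulting $\vz$ satisfies the first constraint; substituting $\vz$ into $\vn_1^T\vz\leq-\fr_1$ and using $\vn_1^T\vn_2=\nc$ reproduces the second inequality of condition $(b)$. Regime (iii) is the mirror image, giving $(c)$.

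For the ``otherwise'' branch the maximizer lies on $E$, and I would decompose $\vz=\vz_0+\vw$ with $\vz_0\in V$, $\vw\perp V$. The two linear equations pin down $\vz_0=\alpha\vn_1+\beta\vn_2$ through the system $\alpha+\beta\nc=-\fr_1$, $\alpha\nc+\beta=-\fr_2$ (solvable since $|\nc|<1$ under the hypotheses), so $\vz_0^T\at=\alpha t_1+\beta t_2$ — the first term of the ``otherwise'' formula — and $\norm{\vz_0}{2}^2=-\alpha\fr_1-\beta\fr_2$, which simplifies to give $1-\norm{\vz_0}{2}^2=h(\fr_1,\fr_2,1)^2/(1-\nc^2)$. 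An identical computation applied to $t_i$ in place of $-\fr_i$ shows the squared norm of the component of $\at$ orthogonal to $V$ equals $h(t_1,t_2,t_3)^2/(1-\nc^2)$. Taking $\vw$ aligned with that component with $\norm{\vw}{2}=\sqrt{1-\norm{\vz_0}{2}^2}$ then yields the term $h(\fr_1,\fr_2,1)h(t_1,t_2,t_3)/(1-\nc^2)$, and adding $\vz_0^T\at$ gives the stated expression; multiplying by $r$ and adding $\vq^T\at$ produces \eqref{eq:dmax}. I would finish by checking that the four conditions partition the $(t_1,t_2,t_3)$-space, that neighbouring branch formulas agree on the shared boundaries (so $\maxu_2$ is single-valued and continuous), and that degenerate sub-cases — $t_3=|t_i|$, $E$ a single point — are covered by the same formulas in the limit.

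The step I expect to be the main obstacle is not any individual face computation — each is a short Lagrange-multiplier or orthogonal-decomposition exercise — but the discrimination logic: showing that the four explicit conditions are exhaustive and that each correctly identifies which face of $\ADM'$ carries the maximizer. In particular I would need to rule out the apparent extra scenario in which a single-cap optimum is infeasible for the other half-space yet the true optimum is still a single-cap optimum (it cannot be: once both single-cap optima fail feasibility, the optimum must lie on $E$), and I would need to verify that the fractional inequalities defining $(b)$ and $(c)$ are \emph{exactly} the feasibility conditions for the dropped constraint, not merely sufficient ones. This is most cleanly handled by writing out the KKT stationarity and complementarity relations of \eqref{eq:optprobms} and reading off, for each admissible sign pattern of the multipliers, the corresponding region in $(t_1,t_2,t_3)$.
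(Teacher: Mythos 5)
Your proposal is correct and reproduces the stated formulas, but it takes a genuinely different route from the paper. The paper works with the Lagrangian dual of \eqref{eq:optprobm} for $m=2$ (after normalizing $\|\at\|_2=1$ and rescaling at the end): it eliminates $\vt$ by stationarity, solves the resulting system for the multipliers $(\mu,\sigma,\lambda)$, and obtains the four branches by enumerating the sign patterns of the candidate multipliers (its Cases I/II with subcases A/B); the case conditions in $(a)$, $(b)$, $(c)$ fall out mechanically as the inequalities $\lambda<0$, $\sigma<0$, etc. You instead normalize the \emph{region} via \eqref{eq:optprobms}, enumerate the candidate active faces of the primal feasible set, and solve each face subproblem geometrically: Cauchy--Schwarz/alignment for the sphere and single caps, and a $2\times 2$ linear system plus an orthogonal decomposition for the edge, which makes the functions $h(\fr_1,\fr_2,1)$ and $h(t_1,t_2,t_3)$ appear transparently as (scaled) norms of components orthogonal to $\mathrm{span}\{\vn_1,\vn_2\}$ -- a structural insight the paper's algebra obscures. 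Your discrimination logic for branches $(a)$--$(c)$ via the relaxation argument (drop one constraint, check the relaxed optimum is feasible, hence optimal) is clean and fully rigorous; the residual work you correctly identify is certifying that in the complementary region the maximizer lies on the edge $E$, since feasibility of the edge maximizer alone does not certify global optimality there. Your proposed fix -- writing the KKT stationarity and complementarity relations of \eqref{eq:optprobms} and reading off the admissible multiplier sign patterns -- is exactly the computation the paper performs, so at that point the two arguments merge; alternatively, a local-max-equals-global-max argument for a linear objective over a convex set shows that if neither cap optimum is feasible then both half-space constraints must be active at the optimum. Both routes share the same benign blind spots (the degenerate subcases where $t_3^2=t_i^2$ or $E$ is a single point, handled by continuity/limits), so your treatment is at the same level of rigor as the paper's, with more geometric transparency in the face computations and slightly more bookkeeping needed in the exhaustiveness step.
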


\begin{theorem}\label{thm:THT}
The Two Hyperplane Test (THT) for the region
$\ADM(\vq, r;\vn_1, c_1;\vn_2, c_2)$ is:
\begin{align}
\begin{split}\label{eq:THT}
	T_{\ADM}(\at_{i})=
	\begin{cases}
	1, & \textrm{if } (a')  \\
	0, & \textrm{otherwise;}
	\end{cases}
	\end{split}
\end{align}
where condition (a') is
$$
V_l(\vn_1^T\at_i, \vn_2^T\at_i,\|\at_i\|_2)\lessn
	\vq^T\at_i\lessn V_u(\vn_1^T\at_i, \vn_2^T\at_i,\|\at_i\|_2);
$$
with $V_u(t_1, t_2, t_3) = 1-\maxu_2(t_1,t_2,t_3)$
and for the lasso, $V_l(t_1, t_2,t_3)=-V_u(-t_1, -t_2, t_3)$, and for the nonnegative lasso, $V_l(t_1, t_2,t_3)=-\infty$.
\end{theorem}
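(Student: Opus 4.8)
The plan is to read the test off directly from the region-test construction of \S\ref{sec:region}, using the closed-form value of $\mu_{\ADM}$ provided by Lemma \ref{lem:dmax}. All of the genuine work is already contained in that lemma; what remains is algebraic rearrangement, so the proof is short. First I would recall that, by \eqref{eq:rrt}, for the lasso $T_{\ADM}(\at_i)=1$ precisely when both $\mu_{\ADM}(\at_i)<1$ and $\mu_{\ADM}(-\at_i)<1$, whereas for the nonnegative lasso \eqref{eq:rrpnn} imposes only the single condition $\mu_{\ADM}(\at_i)<1$. These are the two inequalities I then need to translate into the stated form.

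Next I would substitute $\mu_{\ADM}(\at)=\vq^T\at+\maxu_2(\vn_1^T\at,\vn_2^T\at,\|\at\|_2)$ from Lemma \ref{lem:dmax}. The condition $\mu_{\ADM}(\at_i)<1$ rearranges to $\vq^T\at_i<1-\maxu_2(\vn_1^T\at_i,\vn_2^T\at_i,\|\at_i\|_2)$, which is exactly $\vq^T\at_i<V_u(\vn_1^T\at_i,\vn_2^T\at_i,\|\at_i\|_2)$, the upper bound in condition $(a')$. For the second inequality, note $\vn_j^T(-\at_i)=-\vn_j^T\at_i$ for $j=1,2$ and $\|-\at_i\|_2=\|\at_i\|_2$, so $\mu_{\ADM}(-\at_i)<1$ becomes $-\vq^T\at_i<1-\maxu_2(-\vn_1^T\at_i,-\vn_2^T\at_i,\|\at_i\|_2)$, i.e. $\vq^T\at_i>-V_u(-\vn_1^T\at_i,-\vn_2^T\at_i,\|\at_i\|_2)=V_l(\vn_1^T\at_i,\vn_2^T\at_i,\|\at_i\|_2)$, the lower bound in $(a')$. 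Conjoining the two inequalities yields the lasso case of \eqref{eq:THT}; for the nonnegative lasso only the first inequality is present, the lower bound is vacuous, and we set $V_l=-\infty$, which completes the argument.

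The only nontrivial ingredient is Lemma \ref{lem:dmax}, i.e. solving \eqref{eq:optprobm} with $m=2$ in closed form, and that is where I expect the real difficulty to sit. The approach there is to maximize the linear objective $\vt^T\at$ over the convex region $\ADM$ by asking in which part of its boundary the maximizer lies: on the spherical cap alone, on the spherical surface together with one of the two planar faces, or at the edge where the sphere meets both half spaces. After reducing to the two-dimensional subspace spanned by $\vn_1,\vn_2$ (as observed after \eqref{eq:optprobms}), each branch is a planar KKT computation, and the four cases of $\maxu_2$ correspond to these active sets. Conditions $(a),(b),(c)$ are precisely the sign tests that select the active set --- each obtained by checking whether the maximizer computed while ignoring one half space already satisfies that half space. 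The main obstacle is the bookkeeping of this case split, including verifying that the trigonometric hypothesis $\arccos\psi_1+\arccos\psi_2\geq\arccos(\vn_1^T\vn_2)$ makes the edge nonempty so that the last branch is well defined; once Lemma \ref{lem:dmax} is granted, Theorem \ref{thm:THT} is the one-step substitution described above.
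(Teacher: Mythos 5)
Your proposal is correct and follows essentially the same route as the paper: the paper's proof of Theorem \ref{thm:THT} simply mirrors the proof of Theorem \ref{thm:dometest}, i.e.\ it applies the rejection criterion \eqref{eq:rrt} (resp.\ \eqref{eq:rrpnn}), substitutes the closed form $\mu_{\ADM}(\at)=\vq^T\at+\maxu_2(\vn_1^T\at,\vn_2^T\at,\|\at\|_2)$ from Lemma \ref{lem:dmax}, and rearranges exactly as you do, using $\vn_j^T(-\at_i)=-\vn_j^T\at_i$ and $\|-\at_i\|_2=\|\at_i\|_2$ to obtain the lower bound $V_l(t_1,t_2,t_3)=-V_u(-t_1,-t_2,t_3)$ for the lasso and $V_l=-\infty$ for the nonnegative lasso. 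Your additional sketch of how Lemma \ref{lem:dmax} itself is established (KKT case analysis over the possible active sets) is consistent with the paper's Appendix but is not required for the theorem, which takes that lemma as given.
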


Theorem \ref{thm:THT} indicates that THT
uses only the $3p$ correlations
$\{\vq^T\at_i, \vn_1^T\at_i, \vn_2^T\at_i\}_{i=1}^p$.
So the test has time complexity $O(pn)$.

\subsubsection{Parameter selection}
Assume the sphere $\SPH(\vq,r)$ has been selected.
The inequality constraints in \eqref{eq:dual}
provide the natural half space bounds $\vto\in H(\at)$, $\at\in \CP$.
$H(\at)$ can be equivalently specified as
$\{\vt\colon \vn^T \vt\leq c\}$ with $\vn=\at/\|\at\|_2$
and $c=1/\|\at\|_2$ and the resultant dome
$H(\at)\cap \SPH(\vq,r)$ has parameters given by \eqref{eq:frd}, \eqref{eq:vqb} and \eqref{eq:rb}.

We seek two such half spaces. We can select the first
by minimizing its dome radius $r_{d}$. By \eqref{eq:rb},
this requires maximizing $\fr_{d}$:
\begin{equation}\label{eq:firstb}
\at^{(1)}= \argmax_{\at\in \CP}\ \
\frac{\at^T \vq -1}{\|\at\|_2}.
\end{equation}
When all features have equal norm,
we can simply maximize $\at^T\vq$ over
$\at\in \CP$.

Suppose we have selected the first feature $\at^{(1)}$ using \eqref{eq:firstb}.
This yields a dome with dome center $\vq^{(1)}=\vq_{d}$
and dome radius $r^{(1)}=r_{d}$.
Assume that $\fr_{d} \geq 0$.
Then by Lemma \ref{lem:circumsphere},
the smallest sphere containing the dome has center $\vq^{(1)}$ and radius $r^{(1)}$.
To select the second feature, we can focus on the sphere $S(\vq^{(1)}, r^{(1)})$ and repeat the above construction:
\begin{equation}\label{eq:scndb}
\at^{(2)}= \argmax_{\at\in \CP/\at^{(1)} }
\frac{\at^T \vq^{(1)} -1}{\|\at\|_2}.
\end{equation}
When all features have equal norm, we can simply
maximize  $\at^T\vq^{(1)}$ over $\at\in \CP/\at^{(1)}$.
We call this parameter selection method
\emph{Dictionary-based THT} (D-THT).

Alternatively, if we have solved the instance $(\tv_0, \lambda_0)$ yielding primal and dual solutions $\wvo_0$ and $\vto_0$ (see \eqref{eq:relationship}),
then $\vto_0$ must satisfy the inequalities
\eqref{eq:vtoineq}.
Using some algebra and \eqref{eq:relationship},
these inequalities can be written as:
$(B\wvo_0)^T\vt \leq (B\wvo_0)^T \vto_0$.
Since $0\in \FS$, the right hand side is nonnegative.
Hence
the inequality bounds $\FS$ in the half space $\vn_1^T\vt \leq c_1$ with
\begin{equation}\label{eq:hypc}
\vn_1=\Dict\wvo_0/\norm{\Dict\wvo_0}{2}, \qquad c_1=\vn_1^T\vto_0.
\end{equation}
One can then select $\vn_2$ and $c_2$ using \eqref{eq:scndb}.

We will return to the THT tests in \S\ref{sec:exp} where we compare the performance of the tests with $m=0,1,2$ and examine the trade-off between rejection rate and computational efficiency that increasing $m$ imposes.

\subsubsection{Connections with the Literature}\label{ssec:THT}
The form of the Two Hyperplane Test was first presented (without proof) in \cite{YunWang2013a}, for unit norm features and target vector.
The form given here (with proofs) is a generalization of that result.
The general formulation allows the use of any sphere and hyperplane constraints bounding $\vto$ and includes the feature constraint used in \cite{Xiang2012Fast}
as a special case.

\subsection{Composite Tests} \label{sec:comp}

The construction described in \S\ref{sec:refine}
gives rise to a finite sequence of spheres and domes:
$
\SPH_{1} \supset \DM_{1} \subset \SPH_{2}
\supset \cdots \supset
\SPH_{k-1}\supset \DM_{k-1} \subset \SPH_{k}
$.
Each sphere and dome has an associated test.
But since $\DM_{j}$ is contained in $\SPH_{j}$ and $\SPH_{j+1}$, each dome test is stronger
than the tests for the spheres that precede and succeed it.

But $\SPH_{j+1}$ is not contained in $\SPH_{j}$ and $\DM_{j+1}$ is not contained in $\DM_{j}$. So we can't claim that the last dome $\DM_{k-1}$ leads to the strongest test. Moreover, a test based on the region
$\cap_{j=1}^{k-1} \DM_{j}$ is usually too complex to compute.

An alternative is to implement a composite test that rejects $\at_{i}$
if it is rejected by any of the tests $\{T_{\DM_{j}}\}_{j=1}^{k-1}$.
For the nonnegative lasso, $T_{\DM_{j}}$ takes the form $\mu_{j}(\at_{i}) <1$, with
$\mu_{j}(\at_{i})
=\vq_j^T\at_i + \maxu_1(\vn_{j}^{T}\at_{i},\|\at_i\|_2)$
and $\maxu_1$ given by \eqref{eq:defmaxu}.
So the composite test rejects $\at_{i}$ if
\begin{equation}\label{eq:irdt_nn}
\min_{j=1:k}\{\  \vq_j\at_i
+\maxu_1(\vn_{j}^{T}\at_{i},\|\at_{i}\|_2)\ \} <1.
\end{equation}
Similarly, for the lasso problem the composite test rejects $\at_{i}$ if
\begin{align}
\begin{split}\label{eq:irdt}
\min_{j=1:k} \{\
\max\{
&\vq_j^T\at_i+\maxu_1(\vn_{j}^{T}\at_{i},\|\at_{i}\|_2),\\
&-\vq_j^T\at_i-\maxu_1(-\vn_{j}^{T}\at_{i},\|\at_{i}\|_2 \} \ \} <1.
\end{split}
\end{align}
Reflecting the dome construction method, we call the tests \eqref{eq:irdt_nn} and \eqref{eq:irdt} \emph{iteratively refined dome tests} (IRDT).
These tests can be implemented in several ways and extra domes arising in the course of the construction can also be included. This is illustrated in \S\ref{sec:alg}.
The major cost of the tests is
calculating the inner products $\vq_{j}^{T}\at_i$ and $\vn_j^T\at_i$
for each feature $\at_i$ to be tested.
Because of the iterative construction, this can be done by computing
$\vq_{1}^{T}\at_i, \vn_{1}^{T}\at_i, \dots, \vn_{k-1}^{T}\at_i$
(see \eqref{eq:betak+1}, \eqref{eq:qk+1}, \eqref{eq:greedyk}).
So to execute all of the tests $\DM_{1},\dots,\DM_{k}$,
only $k$ inner products are used per feature tested.
This is $O(\dd k)$ time complexity per feature tested
where $\dd$ is the feature dimension.
So the marginal cost of increasing $k$ by $1$
is the cost of computing one additional inner product per feature tested.

A composite test is mathematically equivalent to test disjunction,
$(T_{1}\disjct T_{2})(\at_{i}) = T_{1}(\at_{i})\disjct T_{2}(\at_{i})$.
A disjunction of region tests is weaker than the test based on the intersection of the regions. For example, consider two spheres of equal radius with a small intersection. Both spheres can intersect a half space while the intersection does not.

\begin{lemma}\label{lem:conjt}
For compact sets $\RR_{1}$, $\RR_{2}$:
	$T_{\RR_1} \disjct T_{\RR_2} \comp T_{\RR_1 \cap \RR_2}.$
\end{lemma}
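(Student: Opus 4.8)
The plan is to show that any feature rejected by the disjunction $T_{\RR_1}\disjct T_{\RR_2}$ is also rejected by $T_{\RR_1\cap\RR_2}$, which is exactly the statement $T_{\RR_1}\disjct T_{\RR_2}\comp T_{\RR_1\cap\RR_2}$ by the definition of $\comp$. The key observation is that $\RR_1\cap\RR_2 \subseteq \RR_1$ and $\RR_1\cap\RR_2\subseteq \RR_2$, so Lemma \ref{lem:subset} immediately gives $T_{\RR_1}\comp T_{\RR_1\cap\RR_2}$ and $T_{\RR_2}\comp T_{\RR_1\cap\RR_2}$. Thus every feature rejected by $T_{\RR_1}$ is rejected by $T_{\RR_1\cap\RR_2}$, and likewise for $T_{\RR_2}$.

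From here the argument is purely set-theoretic. Let $\bar S_1$, $\bar S_2$, and $\bar S_{12}$ denote the rejection sets of $T_{\RR_1}$, $T_{\RR_2}$, and $T_{\RR_1\cap\RR_2}$ respectively. The disjunction test $T_{\RR_1}\disjct T_{\RR_2}$ rejects $\at_i$ precisely when $T_{\RR_1}(\at_i)=1$ or $T_{\RR_2}(\at_i)=1$, i.e.\ its rejection set is $\bar S_1\cup\bar S_2$. By the two inclusions from Lemma \ref{lem:subset} we have $\bar S_1\subseteq \bar S_{12}$ and $\bar S_2\subseteq \bar S_{12}$, hence $\bar S_1\cup\bar S_2\subseteq \bar S_{12}$. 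This is exactly the condition that $T_{\RR_1}\disjct T_{\RR_2}$ is weaker than $T_{\RR_1\cap\RR_2}$, completing the proof.

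There is essentially no obstacle here: the lemma is an immediate corollary of the monotonicity property in Lemma \ref{lem:subset} together with the definition of the weaker-than relation. The only point requiring a moment's care is to make sure one handles the degenerate case $\RR_1\cap\RR_2=\emptyset$ correctly; but in that case every feature is rejected by $T_{\RR_1\cap\RR_2}$ (as noted in the discussion preceding \eqref{eq:rrpnn}), so $\bar S_{12}=\IS$ and the inclusion $\bar S_1\cup\bar S_2\subseteq \IS$ holds trivially. The accompanying informal remark about two equal-radius spheres with small intersection, both meeting a common half space while their intersection does not, merely illustrates that the containment can be strict and need not be proved formally.
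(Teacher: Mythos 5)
Your proposal is correct and takes essentially the same route as the paper: both arguments come down to the containments $\RR_1\cap\RR_2\subseteq\RR_1$ and $\RR_1\cap\RR_2\subseteq\RR_2$ together with the monotonicity of $\mu_{\RR}$ under set inclusion, the only difference being that you invoke Lemma~\ref{lem:subset} twice while the paper re-derives that monotonicity inline (without loss of generality $\max_{\vt\in\RR_1}|\vt^T\at_i|<1$, hence $\max_{\vt\in\RR_1\cap\RR_2}|\vt^T\at_i|<1$). Your treatment of the degenerate case $\RR_1\cap\RR_2=\emptyset$ likewise matches the paper's convention that an empty region rejects every feature.
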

Lemma \ref{lem:conjt} indicates that a disjunction of tests is trading rejection performance for simplicity and ease of implementation.
Despite the above limitation, the IRDT test is very  competitive
with Dictionary-based THT on the datasets used in our numerical studies.

\subsubsection{Connections with the Literature}
The sphere test ST3 in \cite{Xiang2011Learning_b} is
based on a refined spherical bound.
In \cite{Xiang2011Learning_b} it is
assumed that $\tv$ and all features have unit norm.
ST3 is then constructed starting with the default spherical bound
$\SPH(\vq_{1},r_{1})$ with $\vq_{1}=\tv/\lambda$
and $r_{1}=(\fracn{1}{\lambda}-\fracn{1}{\lm})$.
The greedy strategy selects the feature $\at=\atm$.
Then the dual solution $\vto$ lies in the default dome formed by the intersection of the spherical ball $\SPH(\vq_{1},r_1)$ and the half space $\ch(\atm)$.
This intersection is indicated by the green dome region $\GR$ in Fig.~\ref{fig:3ST}.
The smallest spherical ball bounding $\GR$ (dashed magenta circle in Fig.~\ref{fig:3ST}) is obtained by substituting the values of
$\vq_{1}$, $r_{1}$ and $\atm$ into
\eqref{eq:betak+1}, \eqref{eq:qk+1} and \eqref{eq:rk+1}.
This yields $\fr_{2} = \lm$,
$\vq_{2} = \fracn{\tv}{\lambda} - \left(\fracn{\lm}{\lambda}- 1\right)\atm$
and
$r_{2} = \sqrt{\fracn{1}{\lambda^{2}_{\max}}-1}\left(\fracn{\lm}{\lambda}- 1\right)$.
These  parameters are derived in \cite{Xiang2011Learning_b} using a distinct approach.

A two term disjunction test is used in \cite{ellpscr2012}. This test is implemented sequentially. The first test is applied and then the second is applied to the remaining features. Any disjunction test can be implemented sequentially in this fashion. The key innovation in
\cite{ellpscr2012} is that each test is based on an
ellipsoidal bound on $\vto$.  The first ellipsoid is the minimum volume ellipsoid containing the default dome \eqref{eq:dt(xl;bm)}.
The second ellipsoid is constructed in a greedy fashion by selecting a feature so that the best ellipsoidal bound of the intersection of its half space and the first ellipsoid has minimum volume. The first step is in the spirit of ST3 except using an ellipsoidal bound.
The second step is bound refinement based on ellipsoids rather than spheres.
An ellipsoidal bound is tighter than the spherical bounds used in this section.
However, its description requires a center
$\vq\in \R^{\dd}$ and a matrix $P\in \R^{\dd\by \dd}$ to encode its shape and orientation. When $\dd$ is large this could be an impediment.
In contrast, a sphere requires a center $\vq\in \R^{\dd}$ and a scalar radius $r$.

\section{Sequential Screening}\label{sec:sarp}
The screening tests discussed so far screen the dictionary once, then solve the reduced lasso problem. We hence call these tests ``one-shot'' screening tests. These tests can perform well for moderate to large values of $\lambda/\lm$ but often fail to provide adequate rejection performance for smaller values of $\lambda/\lm$. This is primarily due to the challenge of obtaining a tight region bound on $\vto$ when $\lambda/\lm$ is small.

Alternative screening methods can help with this problem.
For example, \cite{Ghaoui2010Safe} examined the idea of screening and solving \eqref{eq:iterW} for a sequence of instances $\{(\tv, \lambda_k)\}_{k=1}^N$ (Recursive-SAFE).
At step $k$ the previously solved instance
$(\tv, \lambda_{k-1})$ defines a bound on the dual solution of the instance $(\tv, \lambda_k)$.
Hence the previous solution can help screen the next instance in the sequence. A similar idea is proposed by \cite{Tibshirani2010Strong} in the form
of the Strong Sequential Rule. This is used to solve the lasso problem ``over a grid of $\lambda$ values''.
In \cite{Ghaoui2011Safe}, the SAFE test for the lasso is upgraded to use a specific dome test.

In a similar spirit, \cite{dpp2015} proposed running a homotopy algorithm to find a solution at the $K$-th breakpoint on the regularization path of $\wvo(\lambda)$.
This effectively solves a sequence of lasso problems
(via homotopy) to obtain a solution $\wvo_K$ at $\lambda_K>\lat$. The dual solution $\vto_K$ is then used to screen the instance $(\tv,\lat)$.
This has potential advantages, but relinquishes control of the values $\lambda_k$ to the breakpoints in the homotopy algorithm. In the worst case the regularization path can have  $O(3^\ncw)$ breakpoints \cite{MairalYu2012}. As a variant on homotopy, Sequential Lasso \cite{ShanLuo2012} solves a sequence of partially $\ell_1$ penalized least squares problems where features with non-zero weights in earlier steps are not penalized in subsequent steps.

With the exception of homotopy, all of the above sequential schemes use a fixed open loop design for $N$ and the sequence $\{\lambda_k\}_{k=1}^N$.
For example, first fix $N\geq 2$, then select $\lambda_1<\lm$, $\lambda_N=\lat$, and
let the intermediate values be selected via geometric spacing: $\lambda_k=\alpha \lambda_{k-1}$ with $\alpha = (\lat/\lambda_1)^{1/(N-1)}$.
To solve instance $(\tv,\lat)$, we first screen and solve the instance $(\tv,\lambda_1)$.
Then sequentially for $k=2,\dots,N$, we screen instance $(\tv,\lambda_k)$ with the help of the known solution of the previous instance, and then solve the reduced problem.
This continues until the solution for $(\tv,\lambda_t)$ is obtained.
Sometimes all solutions on the grid of $\lambda$ values are of interest, e.g., cross validation for parameter selection.
But there are many other applications where only the solution of the final instance $(\tv,\lambda_t)$ is of interest -- the other instances are merely waypoints in the computation.

The solution of the previous instance
helps screen the next instance as follows.
First  use $\vto_{k-1}$ as a dual feasible point to
form the basic bounding sphere \eqref{eq:sb0} for $\vto_k$  
with center $\tv/\lambda_k$ and radius
$\|\tv/\lambda_k - \vto_{k-1}\|_2$.
Then use $\vto_{k-1}$ as the projection of $\tv/\lambda_{k-1}$
onto $\FS$, to form the bounding halfspace \eqref{eq:xoloto_hs}
with
$\vn_{k-1}=(\tv/\lambda_{k-1}-\vto_{k-1})/\|\tv/\lambda_{k-1}-\vto_{k-1}\|_{2}$, and $c_{k-1} = \vn_{k-1}^{T} \vto_{k-1}$.
This sphere and halfspace yield the
bounding dome derived in \S\ref{ssec:priordome}:
\begin{equation}\label{eq:domek_nc}
\DM_{k}=\DM(\tv/\lambda_{k}, r_k; \vn_{k-1}, c_{k-1})
\end{equation}
This dome, illustrated in Fig.~\ref{fig:smla},
encapsulates information about $\vto_k$ provided by the dual solution of the previous instance.

\begin{figure}[th]
\begin{center}
\includegraphics[width=0.75\linewidth]{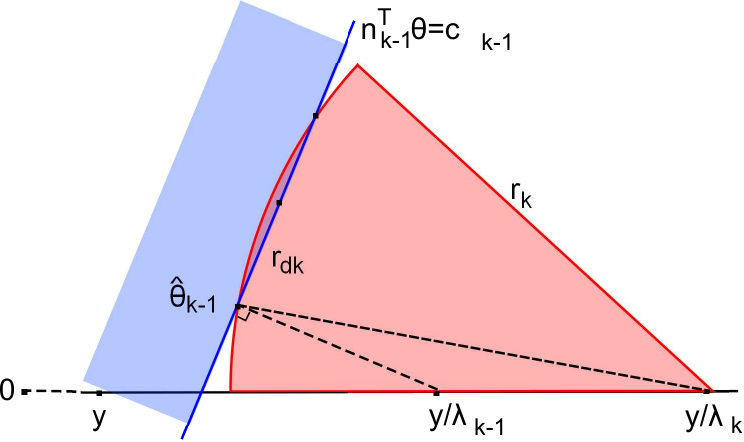}
\end{center}
\caption{\small An illustration of the dome \eqref{eq:domek_nc} formed
at step $k$.
}
\label{fig:smla}
\end{figure}

In contrast to an open loop design, one can use feedback to adaptively select $N$ and the sequence $\{\lambda_k\}_{k=1}^N$ as the computation proceeds \cite{FCSS2013}.
This allows the value of $N$ and the sequence $\{\lambda_k\}_{k=1}^N$ to be adapted to each particular instance. For some instances, a small value of $N$ is used, for others, a larger value is used.  One way to see why feedback helps is to examine the diameter of $\DM_k$ in \eqref{eq:domek_nc}.

\begin{proposition}[From  \cite{FCSS2013}] \label{pro:diamDk}
Let $\DM_k$ be the dome \eqref{eq:domek_nc} and $\delta_k=\diam(\DM_k)$.
Then
\begin{equation}\label{eq:diam}
\delta_k = 2\left (
\frac{1}{\lambda_k}-\frac{1}{\lambda_{k-1}}
\right)
\sqrt{\tv^T(I-\vn_{k-1}\vn_{k-1}^T)\tv} .
\end{equation}
\end{proposition}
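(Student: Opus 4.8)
The plan is to compute the diameter of the dome $\DM_k = \DM(\tv/\lambda_k, r_k; \vn_{k-1}, c_{k-1})$ directly from its Euclidean description, using the parameters $\fr_d$, $\vq_d$, $r_d$ introduced in \eqref{eq:frd}--\eqref{eq:rb}. First I would observe that a dome is the intersection of a ball with a half space whose bounding hyperplane cuts the ball. When $-1 \le \fr_d \le 1$ the dome is a ``spherical cap'' together with its flat face; its diameter is achieved either as the diameter of the flat disk (the circle in which the hyperplane meets the sphere, of radius $r_d = r\sqrt{1-\fr_d^2}$) or, when $\fr_d < 0$ so that the cap is more than a hemisphere, by the full diameter $2r$ of the ball. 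In the regime relevant here --- $\lambda_k < \lambda_{k-1}$, with $\tv/\lambda_{k-1}$ a feasible point and $\vto_{k-1}$ its projection onto $\FS$ --- the center $\tv/\lambda_k$ lies on the far side of the cutting hyperplane, i.e. $\fr_d \ge 0$, so the cap is at most a hemisphere and $\diam(\DM_k) = 2 r_d = 2 r_k \sqrt{1 - \fr_d^2}$. (This sign fact should already be implicit in the discussion around \eqref{eq:domek_nc} and Fig.~\ref{fig:smla}; it is the same condition $\fr_d \ge 0$ used in Lemma~\ref{lem:circumsphere}.)

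Next I would substitute the specific parameters. Here $\vq = \tv/\lambda_k$, and $r = r_k = \|\tv/\lambda_k - \vto_{k-1}\|_2$, while $\vn_{k-1} = (\tv/\lambda_{k-1} - \vto_{k-1})/r_0$ with $r_0 = \|\tv/\lambda_{k-1} - \vto_{k-1}\|_2$ and $c_{k-1} = \vn_{k-1}^T \vto_{k-1}$. From \eqref{eq:frd},
\[
\fr_d r = \vn_{k-1}^T \vq - c_{k-1}
= \vn_{k-1}^T\!\left(\frac{\tv}{\lambda_k} - \vto_{k-1}\right)
= \frac{1}{r_0}\left(\frac{\tv}{\lambda_{k-1}} - \vto_{k-1}\right)^{\!T}\!\left(\frac{\tv}{\lambda_k} - \vto_{k-1}\right).
\]
Then $\diam(\DM_k)^2 = 4 r_d^2 = 4(r^2 - (\fr_d r)^2) = 4\big(\|\tv/\lambda_k - \vto_{k-1}\|_2^2 - (\fr_d r)^2\big)$. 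The key algebraic simplification is that $\fr_d r$ is exactly the length of the projection of the vector $\tv/\lambda_k - \vto_{k-1}$ onto the unit direction $\vn_{k-1}$, so $r^2 - (\fr_d r)^2$ is the squared length of the component of $\tv/\lambda_k - \vto_{k-1}$ orthogonal to $\vn_{k-1}$, i.e. $\|(I - \vn_{k-1}\vn_{k-1}^T)(\tv/\lambda_k - \vto_{k-1})\|_2^2$. Since $\vn_{k-1}$ is parallel to $\tv/\lambda_{k-1} - \vto_{k-1}$, the projector $I - \vn_{k-1}\vn_{k-1}^T$ annihilates that vector, hence
\[
(I - \vn_{k-1}\vn_{k-1}^T)\!\left(\frac{\tv}{\lambda_k} - \vto_{k-1}\right)
= (I - \vn_{k-1}\vn_{k-1}^T)\!\left(\frac{\tv}{\lambda_k} - \frac{\tv}{\lambda_{k-1}}\right)
= \left(\frac{1}{\lambda_k} - \frac{1}{\lambda_{k-1}}\right)(I - \vn_{k-1}\vn_{k-1}^T)\tv,
\]
and taking norms and multiplying by two gives \eqref{eq:diam}.

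I expect the main obstacle to be the careful justification of the geometric claim that the dome's diameter equals $2 r_d$ rather than $2r$ --- that is, verifying the sign $\fr_d \ge 0$ in this setting and confirming that no pair of points straddling the flat face is farther apart than the diameter of the flat disk. The sign follows because $\vto_{k-1}$ is the projection of $\tv/\lambda_{k-1}$ onto $\FS$, so by \eqref{eq:vtoineq} applied with $\vt = 0 \in \FS$ one gets $(\tv/\lambda_{k-1} - \vto_{k-1})^T \vto_{k-1} \ge 0$, and a short monotonicity argument in $1/\lambda$ (of the same flavor as the $\sh(\lambda) = 1$ computation in \S\ref{ssec:priordome}) upgrades this to $(\tv/\lambda_{k-1} - \vto_{k-1})^T(\tv/\lambda_k - \vto_{k-1}) \ge 0$ for $\lambda_k < \lambda_{k-1}$, which is exactly $\fr_d r \ge 0$. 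Once that is in hand, the extremal-pair fact for a cap of at most a hemisphere is elementary convex geometry. Everything after that is the routine projector computation above, so I would present the geometric lemma carefully and keep the algebra terse.
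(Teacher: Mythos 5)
Your proof is correct, and it follows the route the paper itself implicitly relies on: the paper does not print a proof of Proposition~\ref{pro:diamDk} (it is quoted from the reference [FCSS2013]), but the key geometric fact you assert --- that the diameter of a nonempty dome is $2\sqrt{r^2-(\vn^T\vq-c)^2}$ when the sphere center lies outside the dome and $2r$ otherwise --- is exactly Lemma~\ref{lem:diamR} in the appendix, and your projector computation from there to \eqref{eq:diam} is the standard one and is carried out correctly (note $\|(I-\vn_{k-1}\vn_{k-1}^T)\tv\|_2=\sqrt{\tv^T(I-\vn_{k-1}\vn_{k-1}^T)\tv}$ since the projector is symmetric and idempotent, and $1/\lambda_k-1/\lambda_{k-1}>0$ lets you pull the scalar out of the norm). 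The one point you rightly single out, the sign condition $\fr_d\ge 0$, does hold and your sketch completes: with $\vu=\tv/\lambda_{k-1}-\vto_{k-1}$, the projection inequality \eqref{eq:vtoineq} at $\vt=\vzero\in\FS$ gives $\vu^T\vto_{k-1}\ge 0$, hence $\vu^T\tv=\lambda_{k-1}(\|\vu\|_2^2+\vu^T\vto_{k-1})\ge 0$, and writing $\tv/\lambda_k-\vto_{k-1}=(1/\lambda_k-1/\lambda_{k-1})\tv+\vu$ yields $\fr_d r_k=\vn_{k-1}^T(\tv/\lambda_k-\vto_{k-1})\ge\|\vu\|_2\ge 0$ (strictly positive since $\lambda_{k-1}<\lm$). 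If you prefer not to invoke the cap-diameter dichotomy as ``elementary convex geometry,'' you can instead cite Lemma~\ref{lem:circumsphere}: for $\fr_d>0$ the dome is contained in $\SPH(\vq_d,r_d)$, so its diameter is at most $2r_d$, and antipodal points on the flat-face circle achieve it.
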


Using Proposition \ref{pro:diamDk}, it can be shown that
the data adaptive feedback selection rule
\begin{align}\label{eq:updater}
\frac{1}{\lambda_k}
=  \frac{1}{\lambda_{k-1} }+
\frac{\half R}{\sqrt{\tv^T(I-\vn_{k-1}\vn_{k-1}^T)\tv} } \ ,
\end{align}
where $R>0$ is a selectable parameter,
ensures that $\diam(\DM_k)\leq R$ for all $k>1$.
This allows direct control of how tightly the dome
\eqref{eq:domek_nc} bounds $\vto_k$, $k=1,\dots,N$.
This is called Data-Adaptive Sequential Screening (DASS). 
Note that in this scheme  $N$ is not predetermined. 
Instead the stopping time is decided by the feedback scheme. 
However, \eqref{eq:updater} ensures that
\begin{equation}
\label{eq:dassN}
N\leq 1+ \fracn{\log(1/\lat)}{\log(1+R/2C)},
\end{equation}
where $C$ is an bound on the dual regularization path
\cite{FCSS2013}.
We employ DASS to demonstrate the  effectiveness of sequential screening in \S\ref{sec:exp}. In particular,  Fig.~\ref{fig:fixedR} shows the range of $N$ used by DASS on two datasets and Fig.~\ref{fig:application_impact}  shows its performance in three sparse classification problems.

\section{Algorithms}\label{sec:alg}

\begin{figure}[t]
\begin{algorithm}[H]
\begin{algorithmic}[1]
\small
	\REQUIRE Required: $\{\at_1,\at_2,\ldots,\at_{\ncw}\}$, $\tv$, $\lambda$.\\
	Optional:  $\{\beta_1,\dots,\beta_{\ncw}\}$ with $\beta_i=\|\at_i\|_2$;\\
Optional: 	$\vtf \in \FS$; \\
Optional:    $(\lambda_2, \vtf)$ a dual solution.
	\ENSURE $v_1,v_2,\ldots,v_{\ncw}$ (if $v_i=1$, $\at_i$ is rejected).
	\IF{$\{\beta_i, i=1,\dots,\ncw\}$ is not provided}
		\STATE $\beta_i \gets {\|\at_i\|_2}, i=1,\dots, \ncw$
	\ENDIF
	\STATE $\vq \gets \fracn{\tv}{\lambda}.$	(sphere center)
	\STATE $\rho_{i} \gets {\vq^T\at_i}, 1\leqn i \leqn {\ncw}.$
	\IF{a dual solution is provided}
		\STATE $\vn_1\gets {(\tv/\lambda_2 - \vtf)/\|\tv/\lambda_2 - \vtf\|_2}$.
		\STATE $c_1 \gets {\vn_1^T\vtf}$.	
	\ELSE
		\IF{$\vtf$ is not provided}
			\STATE $\lm \gets \lambda \max_i \{f(\rho_i)\}.$
			\STATE $\vtf \gets \fracn{\tv}{\lm}$.		
		\ENDIF
		\STATE $i_{\ast}\gets \argmax_i \{\fracn{(f(\rho_i)-1)}{\beta_i}\}.$
		\STATE $\vn_1\gets {\at_{i_{\ast}}/\beta_{i_{\ast}}}$.
		\STATE $c_1 \gets {1/\beta_{i_{\ast}}}$.				
	\ENDIF		
	\STATE $r \gets \norm{\vtf-\fracn{\tv}{\lambda}}{2}$. (sphere radius)		
	\STATE $a \gets {\vn_1^T\vq-c_1}$. 	
	\STATE $\sigma_{i} \gets {\vn_1^T\at_i}, 1\leqn i \leqn {\ncw}.$
	\STATE $t_i \gets{\rho_i- a \sigma_i}, 1\leqn i \leqn {\ncw}.$	
	\IF{a dual solution is provided}
		\STATE $j_{\ast}\gets \argmax_i \{\fracn{(f(t_i)-1)}{\beta_i}\}.$	
	\ELSE
		\STATE $j_{\ast}\gets \argmax_{i\neq i_{\ast}} \{\fracn{(f(t_i)-1)}{\beta_i}\}.$	
	\ENDIF
	\STATE $\vn_2 \gets {\at_{j_{\ast}}/\beta_{j_{\ast}} }$.
	\STATE $c_2 \gets {1/\beta_{j_{\ast}}}$.
	\STATE $\tau_i \gets{\vn_2^T\at_i}, 1\leqn i \leqn {\ncw}.$	
	\STATE $v_i \gets \ind{ \tl(\sigma_i,\tau_i,\beta_i)
	<\rho_{i}<  \tu(\sigma_i,\tau_i,\beta_i)}$ 	
\\
{$\empty$ }
\end{algorithmic}
\caption{Two Hyperplane Test (THT)} 
\label{alg:THT}
\end{algorithm}
\caption{
{\small Algorithm for THT. The functions $\tu$ and $\tl$ are from Theorem \ref{thm:THT}.
Other Notation:
For the lasso, $f(z)=|z|$ and $g(z)=\sgn(z)$ and
for the nonnegative lasso, $f(z)=g(z)=z$.
For a logical condition $c(\cdot)$, $\ind{c(z)}$ evaluates to $\true$
if $z$ satisfies condition $c$ and $\false$ otherwise.
}
}
\end{figure}	
\begin{figure}[t]
\begin{algorithm}[H]
\begin{algorithmic}[1]
\small
	\REQUIRE Required: $\{\at_1,\at_2,\ldots,\at_{\ncw}\}$, $\tv$, $\lambda$, $s$\\
	 For simplicity, assume $\norm{\tv}{2}=\norm{\at_i}{2}=1$. \\
	 Optional:  $\vtf\in \FS$.
	\ENSURE $v_1,v_2,\ldots,v_{\ncw}$ (if $v_i=1$, $\at_i$ is rejected).
	\STATE $\vq_1 \gets \fracn{\tv}{\lambda}.$
	\STATE $\rho_{i,1} \gets {\vq_1^T\at_i}, 1\leqn i \leqn {\ncw}.$
	\IF{$\vtf$ is not provided}
		\STATE $\vtf \gets \fracn{\tv}{(\lambda\max_i f(\rho_{i,1}) )}$.
	\ENDIF	
	\STATE $r_1 \gets \norm{\vtf-\fracn{\tv}{\lambda}}{2}$.
	\STATE $v_i \gets \ind{f(\rho_{i,1}) < 1-r_1}, 1\leqn i\leqn \ncw$.
	\STATE $\sigma_i\gets \false, 1\leqn i\leqn \ncw$
	\FOR{$j_1=1,2,\ldots,s$}
		\STATE $h \gets  \argmax_{v_i = \false, \sigma_i = \false} f(\rho_{i, j_1})$.
		\STATE $\at \gets g(\rho_{h,j_1})\at_h$.
		\STATE $t_i \gets \at^T\at_i, 1\leqn i\leqn \ncw$.
		\STATE $\fr \gets \fracn{(f(\rho_{h,j_1})-1)}{r_{j_1}}$.
		\IF{$\fr\leq 0$}
			\STATE BREAK.
		\ENDIF		
		\IF{$j_1<s$}
			\STATE $\vq_{j_1+1} \gets \vq_{j_1} - \fr r_{j_1} \at$.
			\STATE $\rho_{i,j_1+1} \gets \rho_{i,j_1} - \fr r_{j_1} t_i,
			1\leqn i \leqn \ncw.$			
			\STATE $r_{j_1+1} \gets r_{j_1} \sqrt{1-\fr^2}$.
		\ENDIF
		\FOR{$j_2 = j_1, j_1\!-\!1, \ldots, 1$}
			\IF{$j_2<j_1$}
				\STATE $\fr \gets \fracn{(\vq_{j_2}^T\at-1)}{r_{j_2}}$.
			\ENDIF
			\STATE $r\gets r_{j_2}$.
			\FOR{$i\in\{i:v_i=\false\}$}
				\STATE $v_i \gets \ind{ \tl(t_i) <\rho_{i,j_2}<  \tu(t_i)}$ 		
			\ENDFOR
		\ENDFOR
		\STATE $\sigma_h\gets\true$. 	
	\ENDFOR
\end{algorithmic}
\caption{Iteratively Refined DT} 
\label{alg:IRDT}
\end{algorithm}
\caption{
{\small Algorithm for IRDT. Here $\tu$ and $\tl$ are from Theorem \ref{thm:dometest}. For other notation see the caption of Algorithm \ref{alg:THT}.
}
}
\end{figure}

\begin{figure}[t]
\begin{algorithm}[H]
\caption{Data-Adaptive Sequential Screening} 
\begin{algorithmic}[1]
\small
\REQUIRE $\{\at_1,\at_2,\ldots,\at_{\ncw}\}$,
$\tv$,
$\lat$,
$R>0$,
a lasso solver $\mathbb{S}$.
For simplicity, assume $\norm{\tv}{2}=\norm{\at_i}{2}=1$.
\ENSURE $\wvo_{t}$
\STATE $\lm \gets \max_i |\tv^T\at_i|.$
\STATE $k \gets 1$, $\lambda_1 \gets 0.95\lm$.
\STATE call THT with
	$\{\at_1,\at_2,\ldots,\at_{\ncw}\}$, $\tv$, $\lambda_1$ only.
\STATE call $\mathbb{S}$ to solve the lasso problem ($\tv, \lambda_1$) using the
	non-rejected features, get $\wvo_1$.
\STATE $\vto_1 \gets \fracn{(\tv-[\at_1,\at_2,\ldots,\at_{\ncw}]\wvo_1)}{\lambda_1}$.
\WHILE{$\lambda_k >\lat$}
	\STATE $k \gets k+1$.
	\STATE $\vn_{k-1} \gets \frac{\tv/\lambda_{k-1}-\vto_{k-1}}{\norm{\tv/\lambda_{k-1}-\vto_{k-1}}{2}}$.
	\STATE $\frac{1}{\lambda_k} \gets  \frac{1}{\lambda_{k-1} }+\frac{\half R}{\sqrt{\tv^T(I-\vn_{k-1}\vn_{k-1}^T)\tv} }$.
	\IF{$\lambda_k < \lat$}
		\STATE $\lambda_k \gets \lat$.
	\ENDIF
	\STATE call THT with $\{\at_1,\at_2,\ldots,\at_{\ncw}\}$, $\tv$, $\lambda_k$, and a dual solution ($	\lambda_{k-1}$, $\vto_{k-1}$).	
	\STATE call $\mathbb{S}$ to solve the lasso problem ($\tv, \lambda_k$) using the non-rejected features, get $		\wvo_k$.	
	\STATE $\vto_k \gets \fracn{(\tv-[\at_1,\at_2,\ldots,\at_{\ncw}]\wvo_j)}{\lambda_k}$.
\ENDWHILE
\STATE $\wvo_t \gets \wvo_k$.
\end{algorithmic}
\end{algorithm}
\vspace{-2.5mm}
\caption{\small Algorithm for Data-Adaptive Sequential Screening.} \label{alg:smla}
\vspace{-5mm}
\end{figure}

Each of the screening tests previously described
requires the inputs  $\Dict$, $\tv$, and $\lambda$
and returns $v_1,\ldots,v_\ncw$ where $v_i$ is a logical value indicating if $\at_i$ is rejected.  The algorithms can be implemented in an online fashion with very few features stored in memory at once. The critical computation is calculating inner products of the form $\tv^T\at_i$ and $\vn^T\at_i$. 
It follows that the time complexity of one-shot screening is $O(np)$.
If the features are sparse  then running times are further reduced. 
Let $s$ denote the average feature sparsity. Then the time complexity of one shot 
screening is $O(sp)$. 
Reference \cite{FCSS2013} discusses the complexity of Data Adapted Sequential Screening and provides the upper bound \eqref{eq:dassN} on the number of steps used.

A basic implementation of THT is shown in Algorithm \ref{alg:THT}. If the dictionary is unnormalized, the feature norms can be precomputed and passed as an input to the algorithm. If it is normalized, we recommend simplifying the algorithm by setting $\beta_i=1$ and removing unnecessary floating point operations (see \S\ref{sec:THT}).
The algorithm accepts two additional optional inputs:
either a dual solution $(\vtf,\lambda_2)$
or a feasible point $\vtf$.
The dual solution is useful for the application of THT in sequential screening.
It is used to select the first half space used by THT (lines 7,8).
If only a feasible point is provided, it is used to select the sphere radius. Otherwise, the default point $\tv/\lm$ is used.
The remaining half spaces are selected using dictionary-based selection \eqref{eq:firstb}, \eqref{eq:scndb} (\S\ref{sec:THT}).
The output values $v_i$ are determined for each $\at_i$ by evaluation of the THT test in Theorem \ref{thm:THT}.

A basic implementation of IRDT is shown in Algorithm \ref{alg:IRDT}. To keep the notation simple and the algorithm understandable, all features and $\tv$ are assumed to have unit norm, but this is not required (see \S\ref{sec:comp}).
IRDT uses at most $s$ iterations with the value of $s$ supplied by the user (we recommend $s\leq 5$).
The algorithm passes through the dictionary at most $s+1$ times
with the main loop executed at most $s$ times.
The break at line 15 terminates this loop early
if suitable domes can't be found.
The algorithm accepts a feasible point $\vtf$ for the dual problem as an optional input
and can be adapted to accept a known dual solution.
The value $v_i$ for each $\at_i$ is determined by a disjunction  of a set of dome tests each based on the dome test in Theorem \ref{thm:dometest}. These disjunctions are computed sequentially with subsequent tests applied only to currently surviving features.

Data-Adaptive Sequential Screening solves $N$ lasso instances $\{(\tv,\lambda_k)\}_{k=1}^N$ for a sequence of descending values $\lambda_k$ where $\lm> \lambda_1$ and $\lambda_{N}=\lat$ is the regularization parameter value for the instance to be solved.
The user must specify a radius $R>0$.
At each step, the algorithm uses a strong ``one-shot'' screening test, for example THT, provided with a solution of the previous instance, followed by an external lasso solver to solve the screened current instance.
The algorithm sets $\lambda_1=0.95 \lm$ and thereafter uses the feedback rule \eqref{eq:updater} to select $\lambda_k$ until $\lambda_k \leq \lat$. It then sets $N=k$, $\lambda_N=\lat$ and screens and solves the final problem. See \cite{FCSS2013} for additional details on this algorithm.

\begin{table}[t!]
  \centering
  \begin{tabular}{|l|c|c|c|}
  \hline
Data Set & $\ncw$ & $\dd$ & Av. $\lm$ (stnd. err.) \\  \hline
RAND & 10,000 & 28 & 0.919 (0.002) \\ \hline
MNIST & 5,000 & 784 & 0.865 (0.005) \\ \hline
YALEBXF & 2,000 & 32,256 & 0.963 (0.008)  \\\hline
RCV1 & 4,000 & 29,992 & 0.485 (0.246)  \\\hline
COIL & 6,000 & 49,152 & 0.981 (0.019)  \\\hline
GTZAN & 12,000 & 199 & 0.988 (0.009)  \\\hline
NYT & 299,000 & 102,660 & 0.714 (-)  \\\hline
    \end{tabular}
 \caption{\small Summary of the datasets.
 The reported value of
 $\lm$ is obtained by averaged over the lasso instances solved.}
  \label{tab:data_sets}
 \vspace{-6mm}
\end{table}

\section{Numerical Examples}\label{sec:exp}
We now examine the performance of the 
screening algorithms presented using the datasets 
summarized in Table \ref{tab:data_sets}, and discussed 
in detail below.

\noindent
\textbf{(1) RAND:}
We generate lasso problems with $\dd=28$ and $\ncw=10,000$ 
by randomly generating  $10,001$ $28$-dimensional vectors
$\tv,\vb_1,\ldots,\vb_{10,000}$. 
These vectors are scaled to unit norm.\\
\textbf{(2) MNIST}:
$70,000$  images ($28\times 28$) of hand-written digits
($60,000$ and $10,000$ in the training and testing sets, respectively)
\cite{LeCun1998The-MNIST, Lecun1998Gradient-based}.
We form a dictionary by randomly sampling $500$ training images 
for each digit, and a target vector from the testing set.
Each image is vectorized and scaled to unit norm.\\
\textbf{(3) YALEBXF}:
Frontal face images ($192\times 168$) of
38 subjects in the extended Yale B face dataset \cite{Georghiades2002From, Lee2005Acquiring}. We randomly select $\ncw=2,000$ of the $2,414$ images as the dictionary, and $\tv$ from the remaining $414$ images. Each image is vectorized and scaled to unit norm. \\
\textbf{(4) RCV1}: 
A bag-of-words representation of four classes from the Reuters Corpus Volume 1 (RCV1) dataset \cite{RCV1}. There are 9,625 documents with 29,992 distinct words, including categories ``C15'', ``ECAT'', ``GCAT'', and ``MCAT'', each with 2,022, 2,064, 2,901, and 2,638 documents respectively.
The vector representations have an average of $75.9 \pm 60.0$ nonzero entries; a sparsity of $0.25\% \pm 0.19\%$.\\
\textbf{(5) COIL}: 
Images ($128 \times 128\times3$) 
of 100 objects, with 72 images per object obtained by rotating the object every 5 degrees \cite{Nene1996Columbia}. \\
\textbf{(6) GTZAN}: 100 music clips (30 sec, sampled at 22,050 Hz) for each of ten genres of music \cite{Tzanetakis2002}.
Each clip is divided into 3-sec adjacent texture windows (TW) with
50\% overlap. Each TW is represented using a first order scattering vector of length 199 \cite{Mallat2011}. \\
\textbf{(7) NYT}: 
A bag-of-words dataset 
in which 300,000 New York Times articles are represented as vectors with respect to a vocabulary of 102,660 words  \cite{Frank+Asuncion:2010}. The $i$-th entry in vector $j$ gives the number of occurrences of word $i$ in document $j$. Documents with low word counts are removed, leaving 299,752 documents.

\begin{figure*}[t!]
\centerline{
\includegraphics[width=0.6\textwidth]{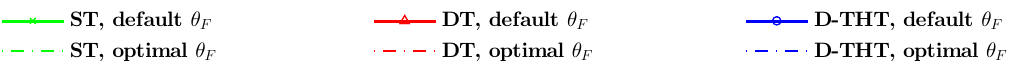}
}
\centerline{
\includegraphics[width=0.22\textwidth]{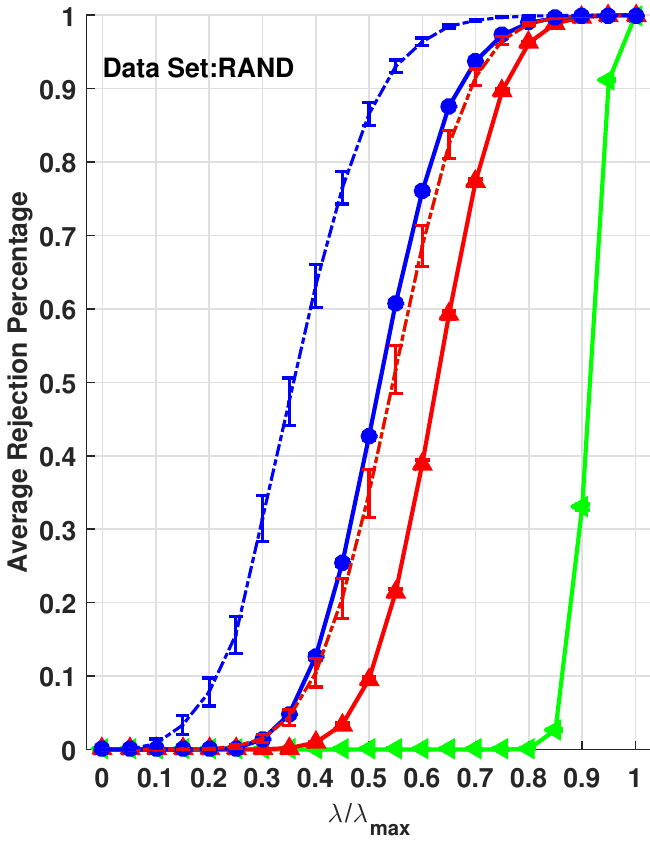}
\includegraphics[width=0.22\textwidth]{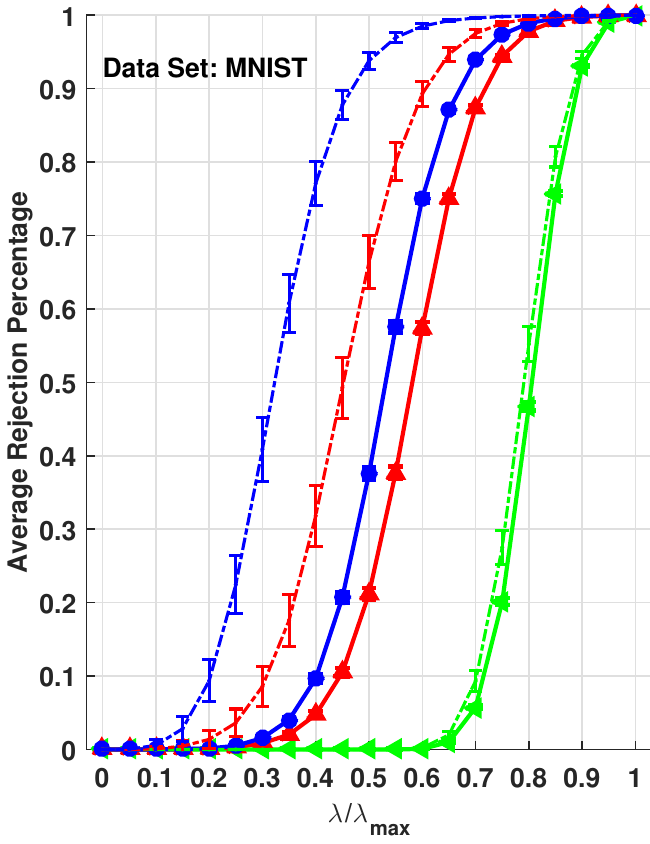}
\includegraphics[width=0.22\textwidth]{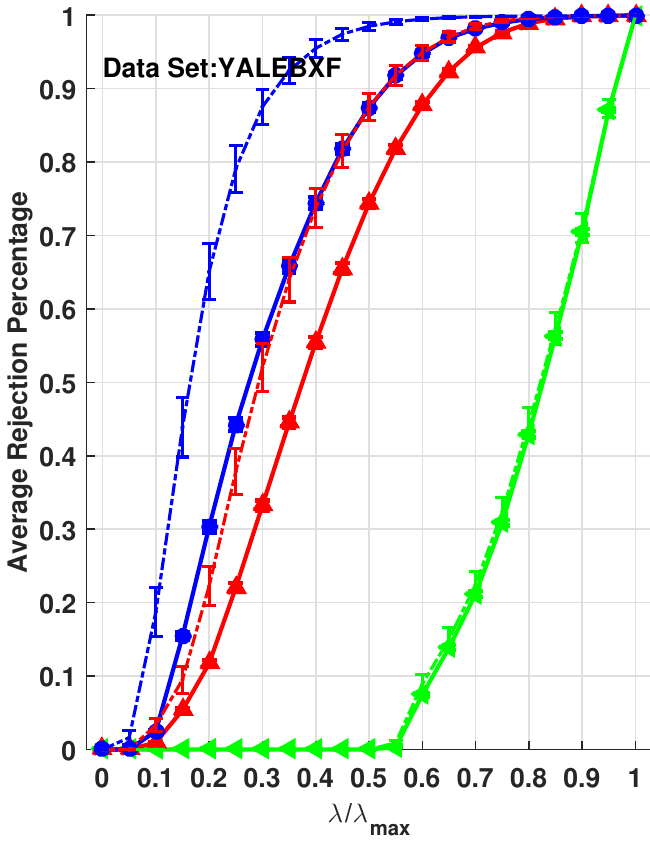}
}
\centerline{
\includegraphics[width=0.22\textwidth]{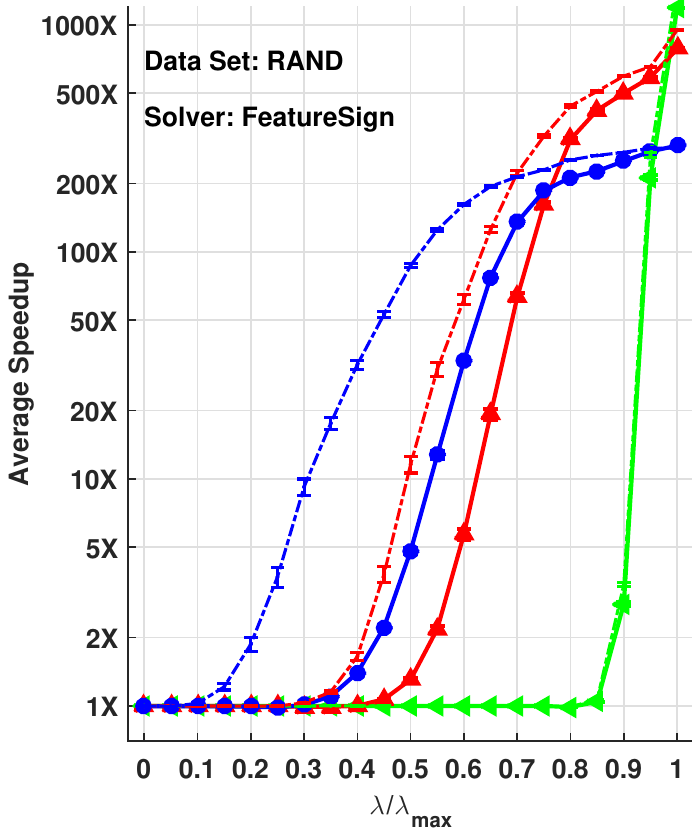}
\includegraphics[width=0.22\textwidth]{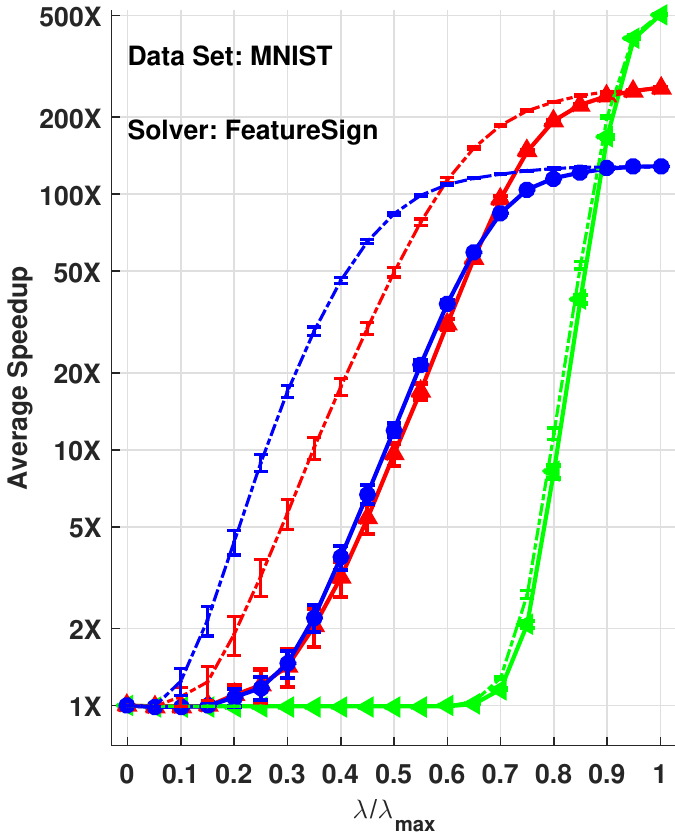}
\includegraphics[width=0.22\textwidth]{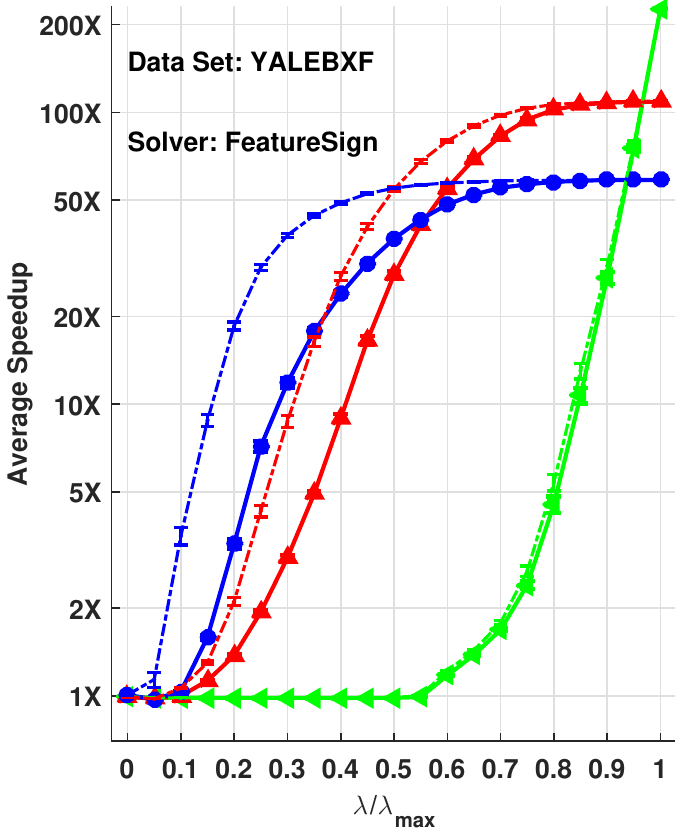}
}
\caption{ {\small
Performance of ST, DT and D-THT.
Top: rejection percentage;
Bottom: speedup using screening and the FeatureSign solver \cite{Lee2007Efficient}. Solid curves lower bound and dashed curves upper bound performance for spherical bounds centered at $\tv/\lambda$.} } \label{fig:C_THT}
\end{figure*}

All experiments solve the standard lasso problem \eqref{eq:iterW} using the Feature-sign \cite{Lee2007Efficient} and FISTA \cite{AM2009} solvers.
The grafting solver \cite{Perkins2003Online} was also tested and gave similar qualitative performance.
We use two performance metrics: the percentage of features rejected and the speedup 
(time to solve the lasso problem divided by sum of the time to screen \emph{and} the time to solve the reduced lasso problem).
Timing and speedup results depend on the solver used.
The regularization parameter $\lambda$ is set using the scaling invariant ratio $\fracn{\lambda}{\lm}$ where $\lm=\max_i |\tv^T\vb_i|$.
So $\fracn{\lambda}{\lm} \in [0,1]$ with larger values yielding sparser solutions.
For all datasets except RCV1 and NYT, 
we randomly select 20 dictionaries 
and for each dictionary we use 60 randomly selected test vectors. Averaged metrics and standard errors are reported across these $1200$ lasso instances. 
For RCV1, since $\lm$ is very low, we select $496$ lasso instances with $\lm \geq 0.5$ from the pool of $1200$ instances and report results for these $496$ instances.
For the very large NYT dataset, we select the first 299,000 examples as the dictionary and 6 documents from the remaining 752 as target vectors. 
\begin{figure*}
\centerline{
\includegraphics[width=0.22\textwidth]{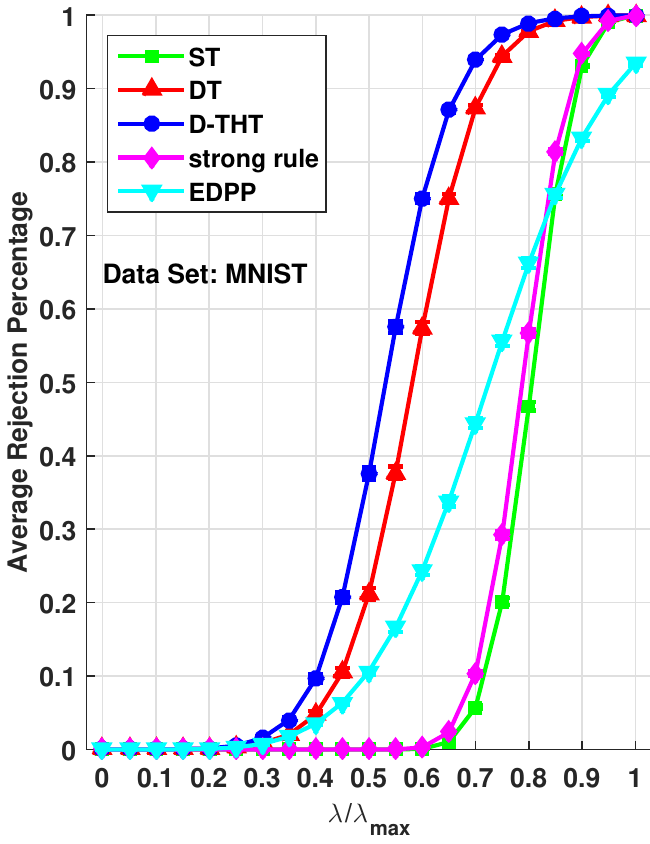}
\includegraphics[width=0.22\textwidth]{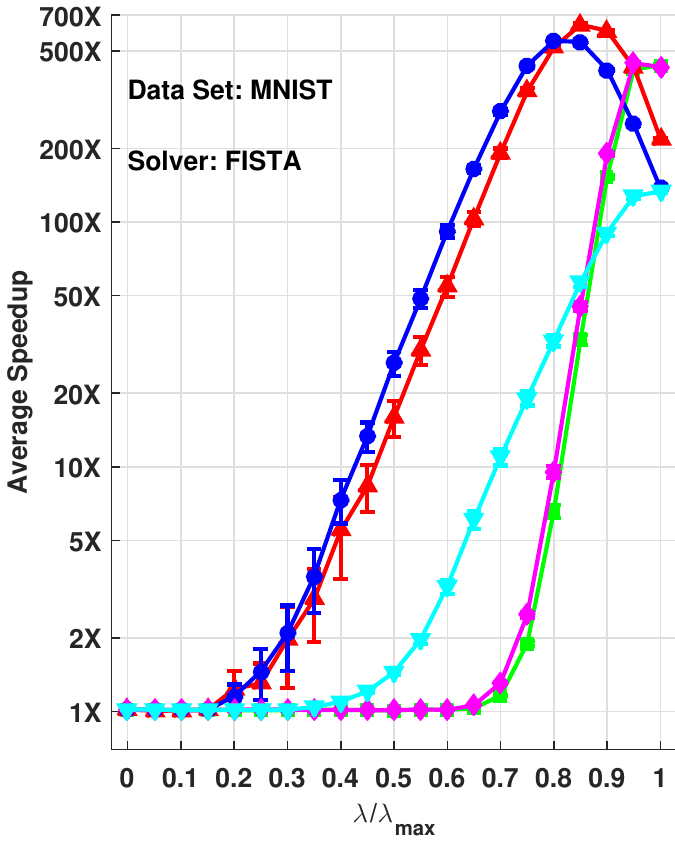}
\includegraphics[width=0.22\textwidth]{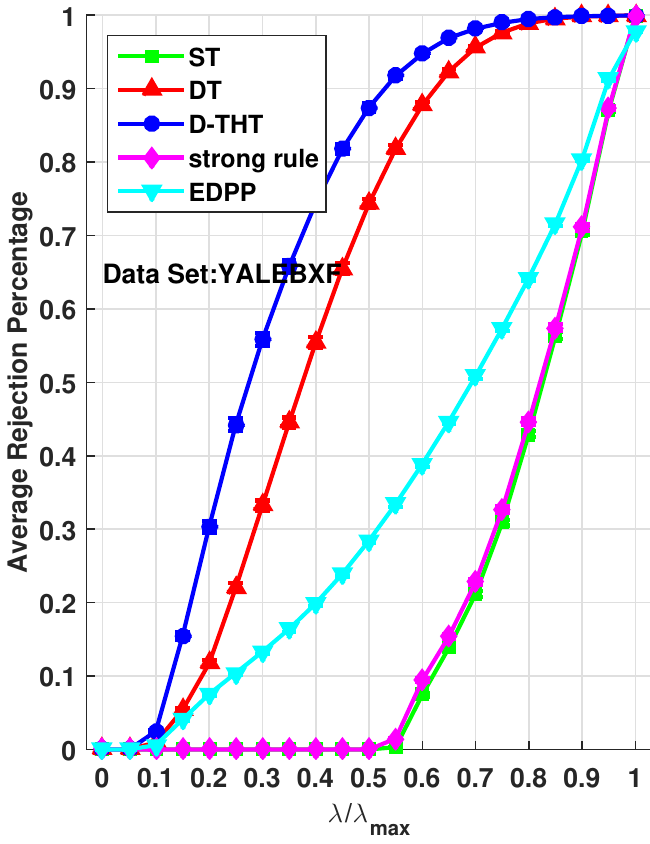}
\includegraphics[width=0.22\textwidth]{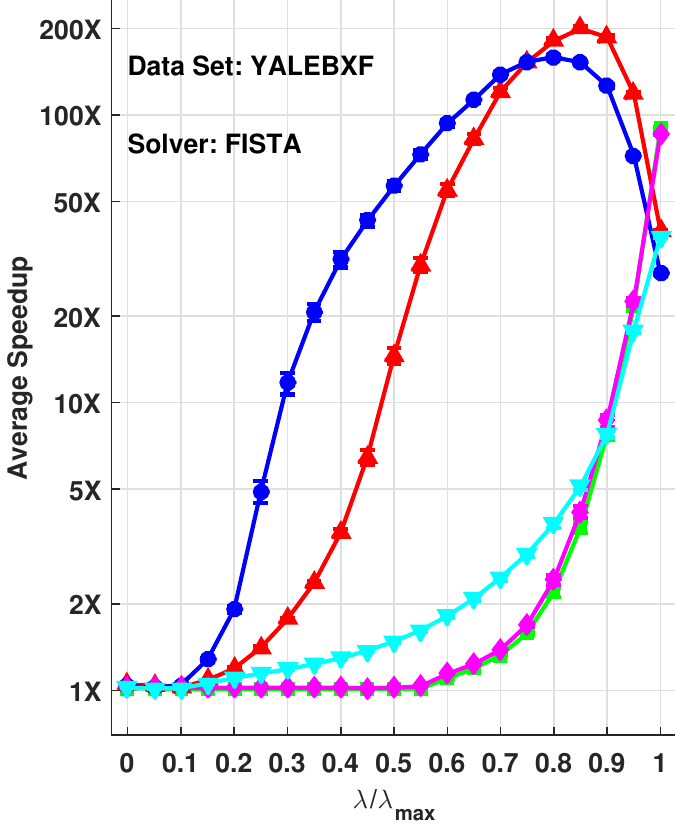}
}
\caption{\small
Performance comparison of ST, DT, D-THT (all with default $\vtf$), enhanced DPP (EDPP) \cite{dpp2015} and the strong rule \cite{Tibshirani2010Strong}
using the FISTA solver on the MNIST and YALEBXF datasets.}
\label{fig:one_shot}
\end{figure*}

\begin{figure*}
\centerline{
\includegraphics[width=0.22\textwidth]{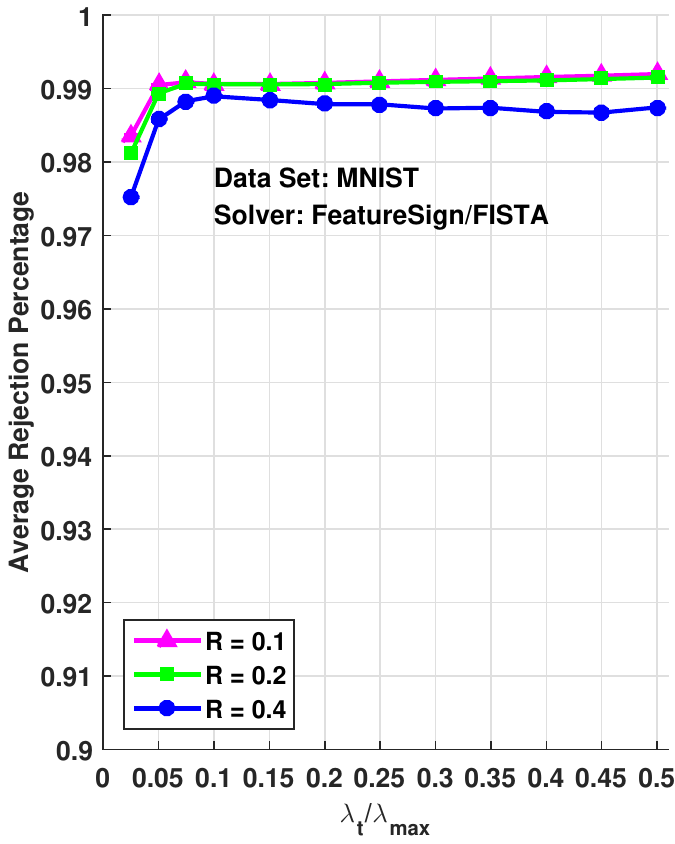}
\includegraphics[width=0.22\textwidth]{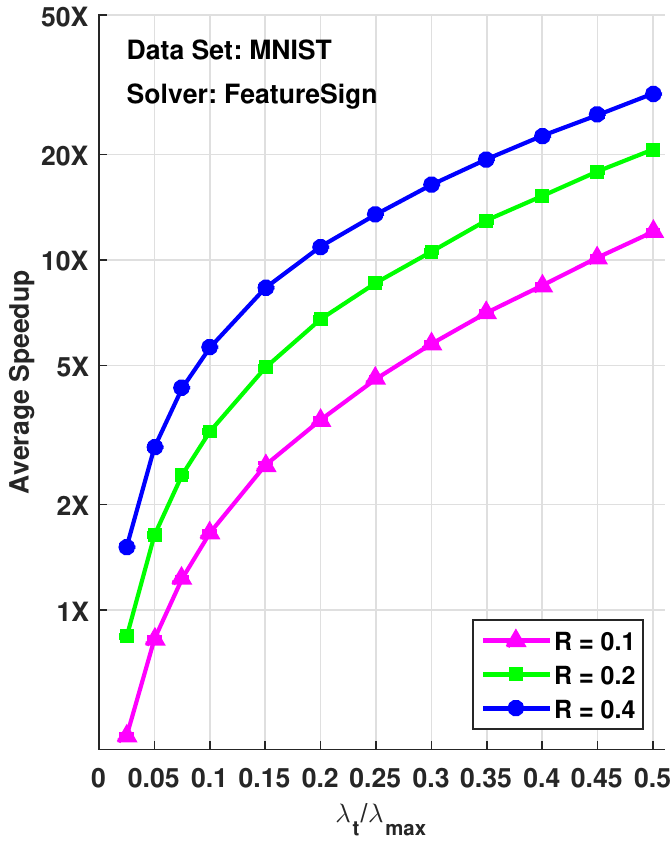}
\includegraphics[width=0.22\textwidth]{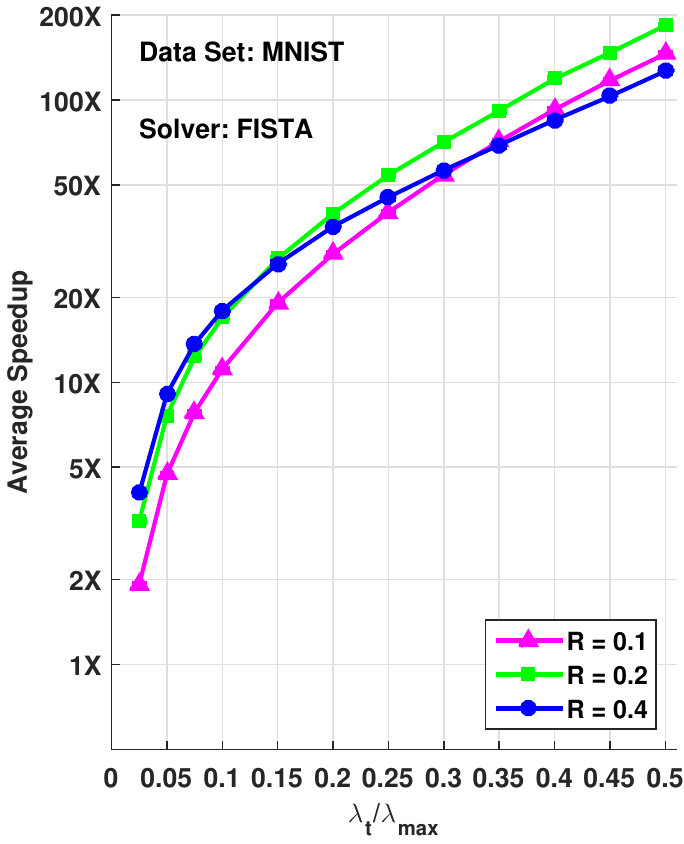}
\includegraphics[width=0.22\textwidth]{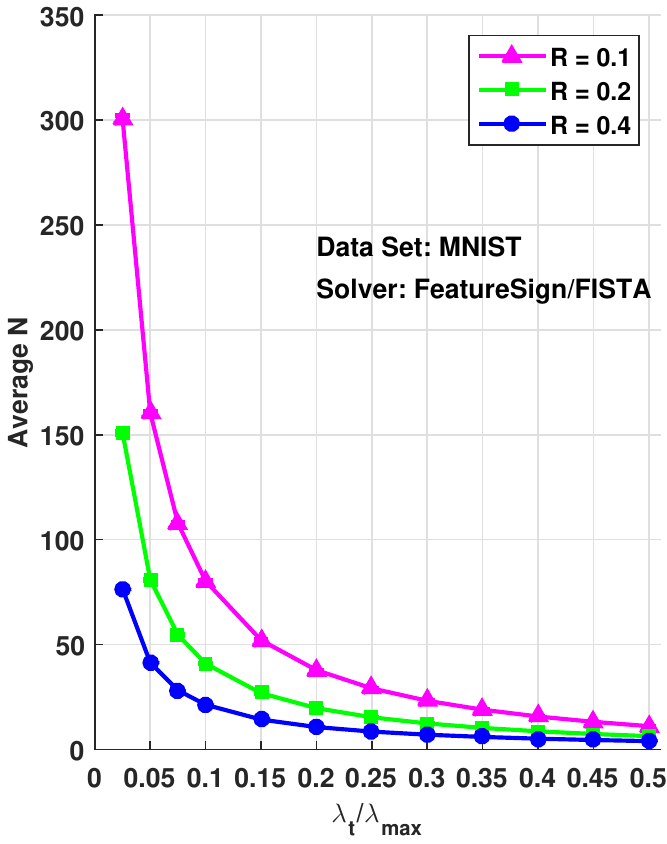}
}
\centerline{
\includegraphics[width=0.22\textwidth]{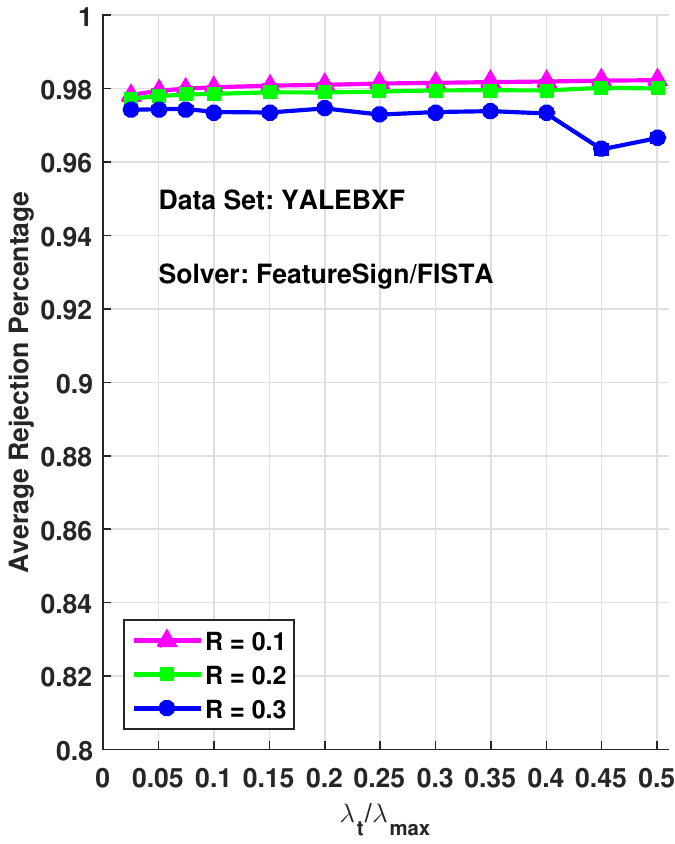}
\includegraphics[width=0.22\textwidth]{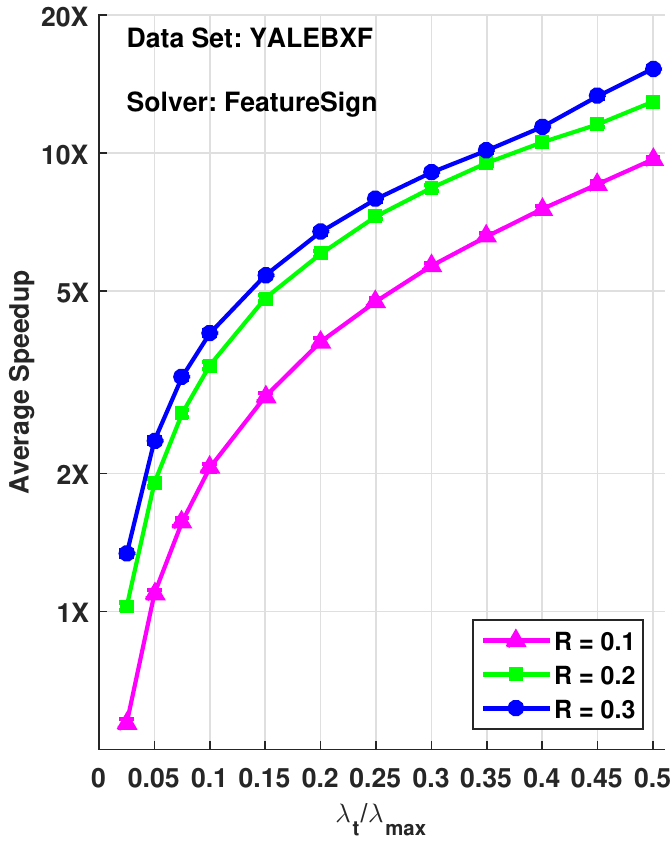}
\includegraphics[width=0.22\textwidth]{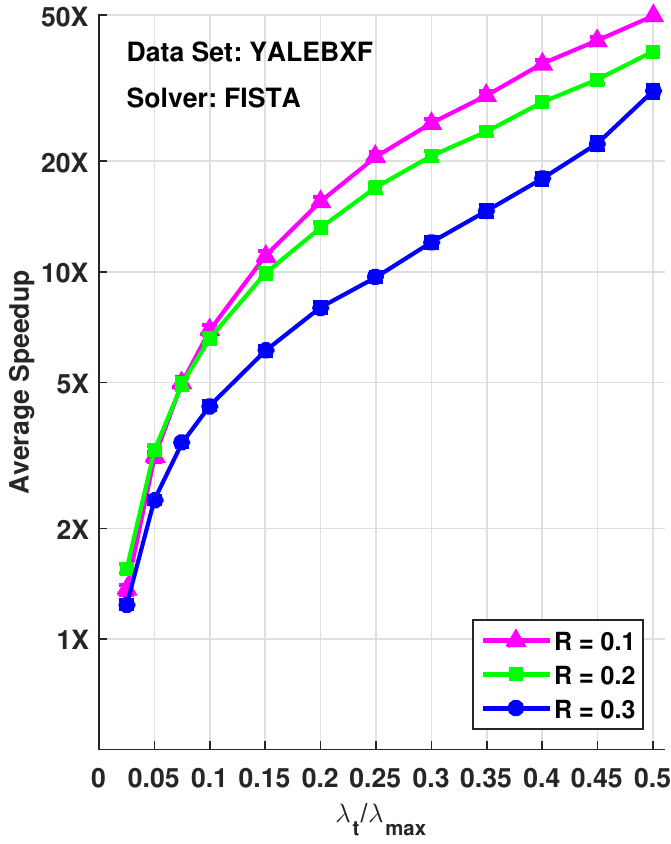}
\includegraphics[width=0.22\textwidth]{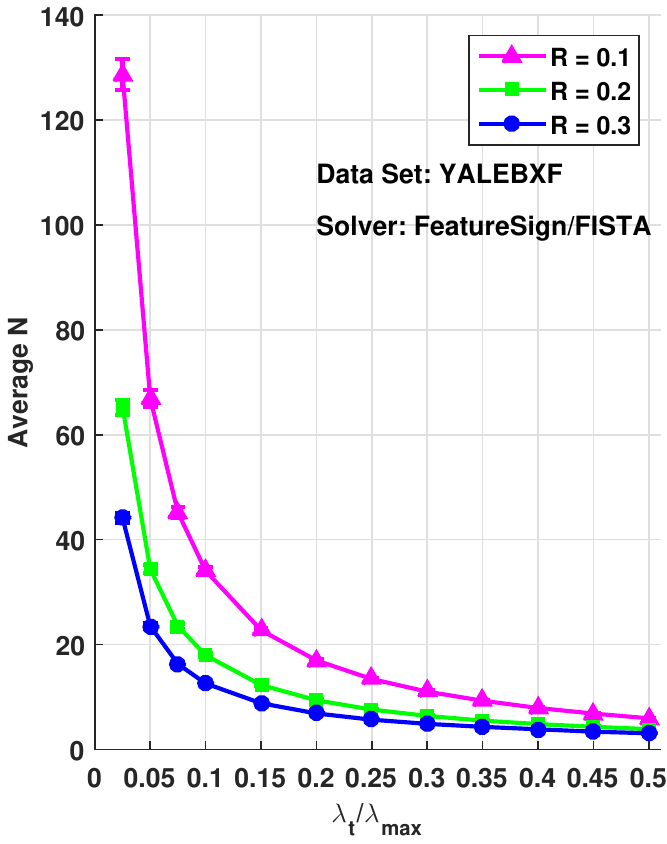}
}
\caption{\small
Data-Adaptive Sequential Screening (DASS) applied to MNIST (top) and YALEBXF (bottom) using the feature-sign and FISTA solvers.
(Left): average rejection percentage.
(Middle): Speedup factor.
(Right): The average value of $N$.}
\label{fig:fixedR}
\end{figure*}

\subsection{The performance of one-shot screening}
We first benchmark the performance of the one-shot tests:
ST (\S\ref{sec:st}), DT (\S\ref{sec:dt}), and D-THT (\S\ref{sec:THT}).
We first use the default spherical bound \eqref{eq:sb1}.
This gives a lower bound for the performance of the one-shot screening methods on each dataset.
The default dome test combines this sphere with the feature $\atm$, while dictionary-based THT combines it with two features using the selection scheme detailed in \eqref{eq:firstb}, \eqref{eq:scndb}.
We also show results using a second ``oracle'' bounding sphere with center $\tv/\lambda$ and radius  $r=\|\tv/\lambda-\vto\|_2$. This provides an upper bound on performance over bounding spheres centered at $\tv/\lambda$.

The performance of the one-shot screening methods on the test datasets 
based on the feature-sign solver \cite{Lee2007Efficient} are shown in Fig.~\ref{fig:C_THT}. 
Here are the salient points:
(a) While the default one-shot tests perform well for high values of $\lambda/\lm$, this performance quickly degrades as $\lambda/\lm$ decreases. At values of $\lambda/\lm$ around 0.2 and lower, the tests are not effective.
(b) On the other hand, the upper bounds indicate potential for improvement if a better spherical bound can be found. Indeed, the significant gap between the lower and upper performance bounds suggests that it is worth investing computation to improve the default spherical bound.
(c) Among the tested methods, D-THT exhibits the best performance except at very high values of $\lambda/\lm$. On RAND, for example, using
$\lambda/\lm=0.5$ and the default spherical bound, D-THT yields a 400\% rejection improvement over DT. The concurrent speedup for D-THT is about 5X while for DT is less than 2X. These effects are also seen for MNIST and YALEBXF.

Fig.~\ref{fig:one_shot} shows a performance comparison between 
ST, DT, D-THT (all with default $\vtf$), EDPP \cite{dpp2015} and the strong rule \cite{Tibshirani2010Strong} 
using the FISTA solver \cite{AM2009}.
Here are the salient points:
(a) Aside from the small dip at high values of $\lambda/\lm$, the speedup trend for the FISTA solver is similar to that for feature-sign. For the datasets we tested, feature-sign seems to be faster than FISTA, 
but FISTA is more sensitive to the reduction in dictionary size 
resulting from screening. 
Thus it has greater speedup. This can also been seen in Fig.~\ref{fig:fixedR}.
(b) Of the one-shot methods tested, dictionary based THT and DT consistently have the best rejection performance. But while current one-shot screening tests can perform well at moderate to high values of $\lambda/\lm$, such performance does not extend to the important range of low values of $\lambda/\lm$.

The rejection and speedup of IRDT (not plotted)  and D-THT were very similar on the test datasets with IRDT terminating after 3 or 4 iterations at the break in line 14-16 in Algorithm \ref{alg:IRDT}. 

\subsection{The performance of sequential screening}
To explore the effectiveness of sequential screening, we tested the Data-Adaptive Sequential Screening (DASS) scheme \eqref{eq:updater}.
The performance results are shown in Fig.~\ref{fig:fixedR}.
Here are the salient points:
(a) For both MNIST and YALEBXF, with $R=0.2$ the performance of DASS is robust across a variety of values of $\lat$;
(b) DASS yields significant improvement in rejection fraction and robust speedup performance compared with one-shot tests;
(c) At values of $\lambda/\lm$ around 0.1 and lower,
DASS is rejecting 98\% of the dictionary while giving speedup greater than 1.
This is successful screening at much lower values of $\lambda/\lm$.

\begin{figure}[t!]
\centering
\includegraphics[width=0.15\textwidth]{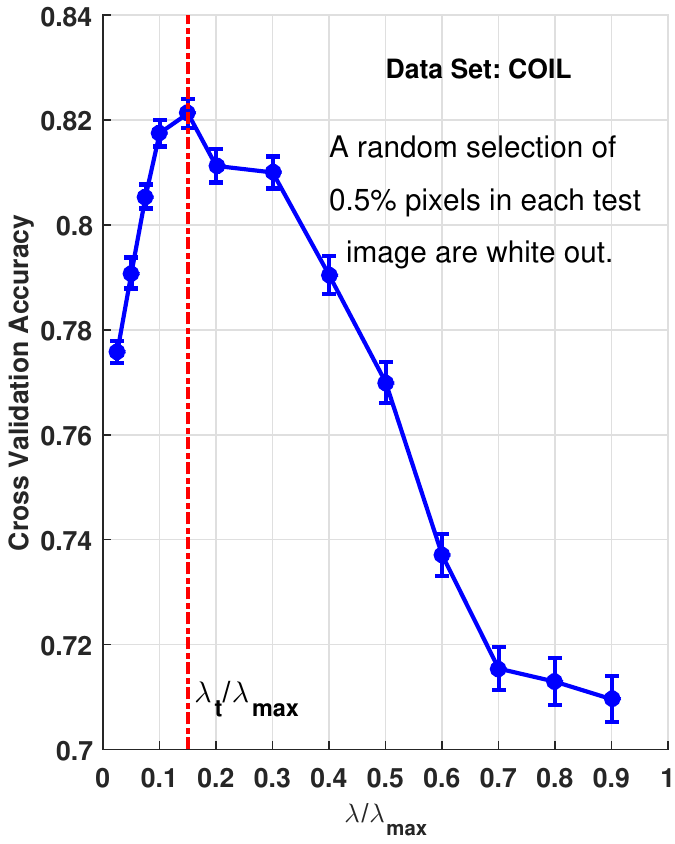}
\includegraphics[width=0.15\textwidth]{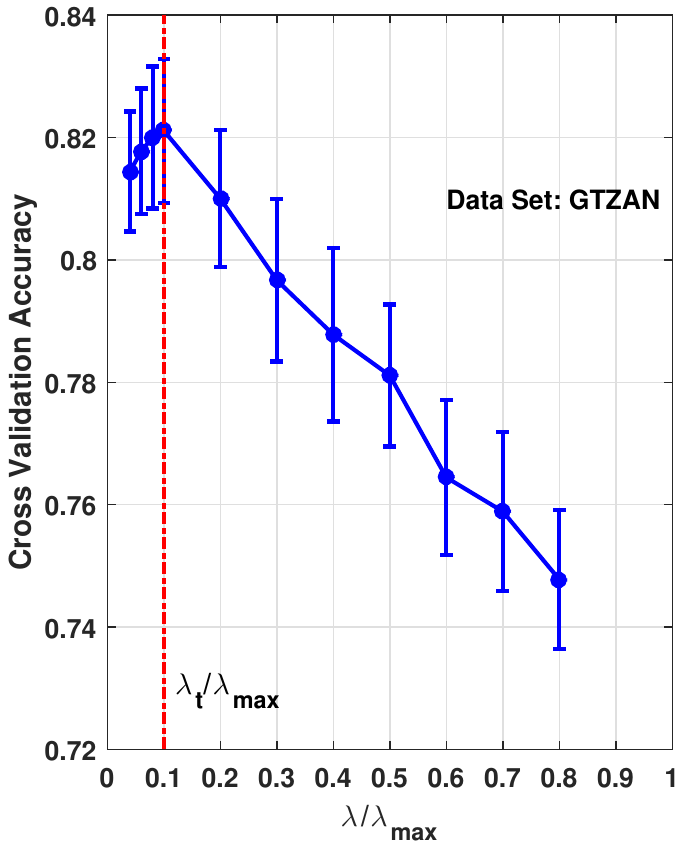}
\includegraphics[width=0.15\textwidth]{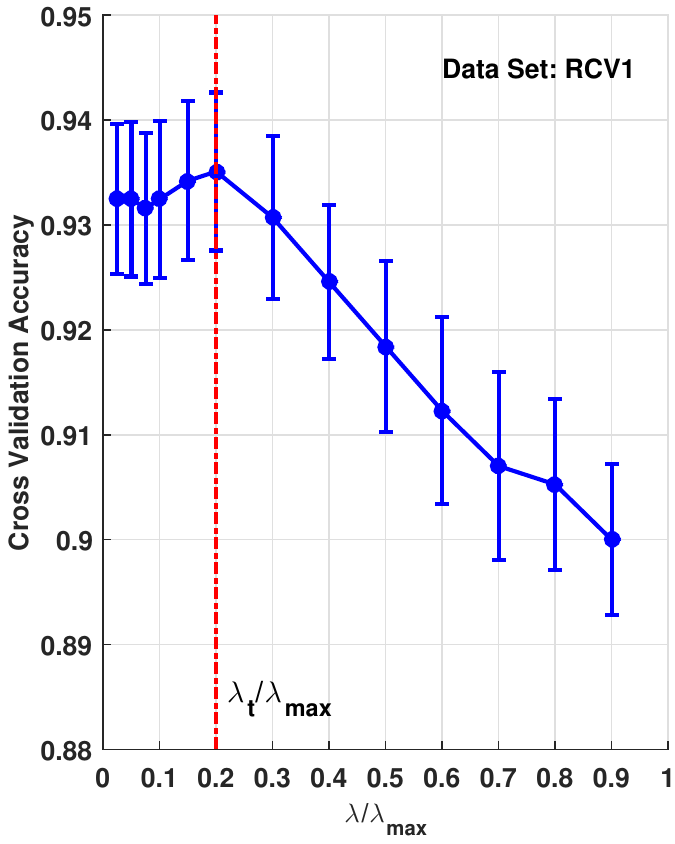}
\includegraphics[width=0.44\textwidth]{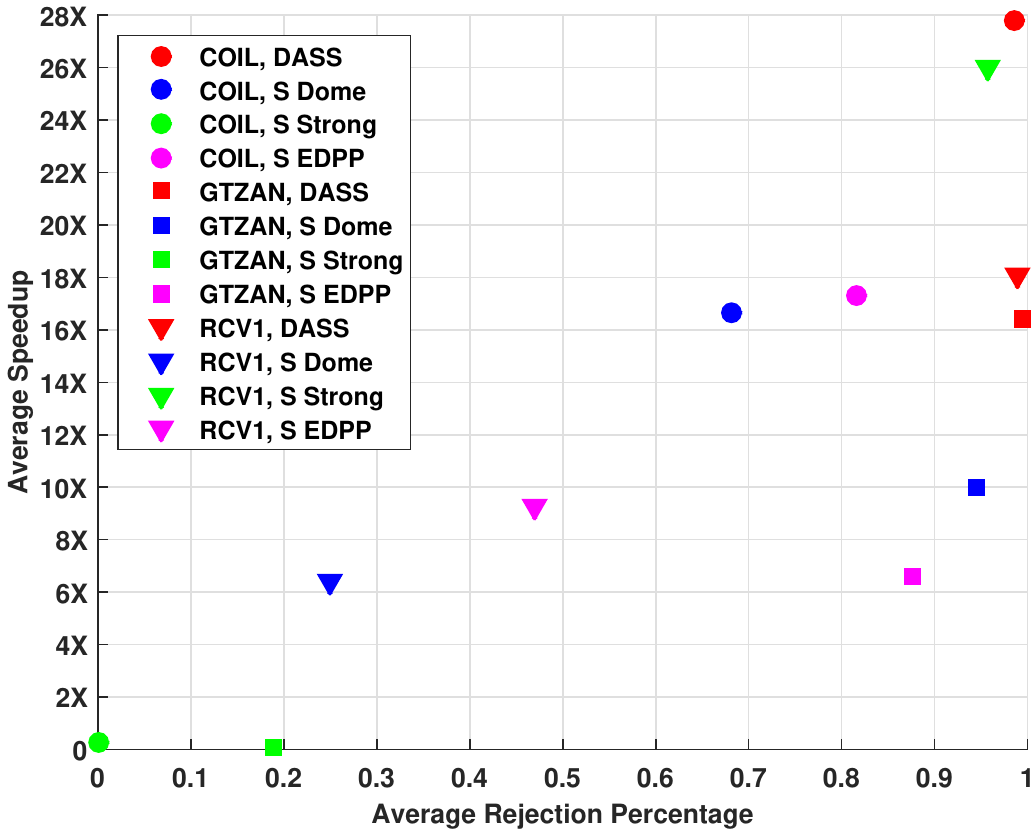}

\caption{\small
Performance of four sequential screening algorithms (DASS, sequential dome, sequential strong rule, sequential EDDP) for the screening of the lasso problems in SRC on three datasets (COIL, GTZAN, RCV1).
(Top): cross validated accuracy to determine the best $\lat/\lm$.
(Bottom): Speedup vs Rejection at the best $\lat/\lm$ 
for each dataset.
}\label{fig:application_impact}
\vspace{-5mm}
\end{figure}

\subsection{Sequential screening and classification}
Now we focus on specific values of $\lambda/\lm$ motivated by practical  lasso problems and examine how screening can help. To do so, we 
use the COIL, GTZAN and RCV1 datasets to examine the impact of 
sequential screening in Sparse Representation Classification (SRC) \cite{Wright2009Robust}. 
Although SRC was first proposed for face recognition problems, it is a generic multi-class classifier that has found success in a variety of applications. The time and memory consuming step in SRC is solving a lasso problem. 
For the COIL dataset we made the SRC problem more challenging by saturating a random subset  of 0.5\% of the pixels to white. 

We first use cross-validated prediction accuracy to determine the best values for $\lat/\lm$ for SRC when applied to  the  datasets. The results  (top row of Fig.~\ref{fig:application_impact}) are  COIL: $\lat/\lm = 0.15$, 
GTZAN: $\lat/\lm = 0.1$, and
RCV1: $\lat/\lm = 0.2$.
For these specific values of $\lat/\lm$, we then examine the performance of the following screening schemes in solving SRC problems for these 
datasets:
(1) the feedback scheme DASS, and the open loop sequential screening schemes (2) sequential dome test, 
(3) sequential strong rule \cite{Tibshirani2010Strong} 
and (4) sequential EDPP rule \cite{dpp2015}. 
We select the parameters of each method to keep the average value of $N$  the same. Since DASS uses a variable value of $N$, we first select its parameters, then use the resulting average value of $N$ for the open loop schemes.
For COIL, DASS with $R=0.5$ yields an average $N = 4.72$;
for GTZAN, DASS with $R=0.15$ yields an average $N = 14.63$; and
for RCV1, DASS with $R=1$ yields an average $N = 3.59$. 
Then for the open loop sequential screening schemes we set $N = 5$ for COIL, $N = 15$ for GTZAN  and $N = 4$ for RCV1. 
This keeps the average path lengths of the screening schemes the same.
The results are shown in the bottom row of Fig. ~\ref{fig:application_impact}. 

Here are the salient points:
(a) Over $50\%$ of the experiments (dataset+screening method) gave a speedup of at least $10X$. So sequential screening offers considerable potential gain in practical applications.
(b) At the high end, DASS provided $28X$, $16X$ and $18X$ speedup in solving SRC lasso problems for the three datasets. 
That's an average speedup of $21X$.
However, given that we only used three datasets and did not ``tweak'' each method to find its best performance on each dataset, we can't conclude that one method is always better than the rest. That would require a more extensive investigation.
Finally, although the strong rule can't rule out false rejections, 
we detected no false rejections in our experiment.

\subsection{Sequential screening on a large dataset}
Finally, we used the NYT dataset to explore how successfully one can screen and solve lasso problems using small values of $\lambda/\lm$ with high dimensional data and a very large dictionary. We normalize each document and randomly selected six documents 
from the first 100 of the 752 held out documents subject to 
$0.5 <\lm <0.9$.
DASS Screening (with
$\lat/\lm=0.1$, $\lambda_1=0.95\lm$ and $R=0.3$) was done in an ``on-line'' mode by loading only small amounts of the dictionary into memory at a time.
The value of $N$ is selected automatically for each instance.
In all tested cases, $N\leq 27$.
As a benchmark, we tested a geometrically spaced, open loop sequential screening algorithm (sequential THT) using $\lat/\lm=0.1$, $\lambda_1=0.95\lm$ and $N=30$.

The results for both methods are shown in Fig.~\ref{fig:NYT}.
We can't solve these lasso problems without using screening. 
Hence the usual speedup metric can't be evaluated.
The main time cost is sequentially reading features from disk into RAM.
Here are the main points to note:
(a) Under geometric spacing with fixed $N$, 
less than 10,000 of the features (3.3\%)
were held in memory at once;
(b) For DASS, less than 1,000 of the features (0.33\%) were held in memory at once -- an order of magnitude improvement over fixed geometric spacing
(The small dip at $\lat$ is due to termination method);
(c) On this dataset, both open loop sequential screening and DASS clearly exhibit a  significant performance advantage over one-shot tests. The use of feedback by DASS to automatically select the number of steps $N$ and the values $\{\lambda_k\}_{k=1}^N$, yields robust rejection performance.
By tweaking $N$ for each test vector in the open loop scheme, one could improve its average performance. But DASS handles this automatically
and robustly.
\begin{figure}[t!]
\centering
\includegraphics[width=0.42\textwidth]{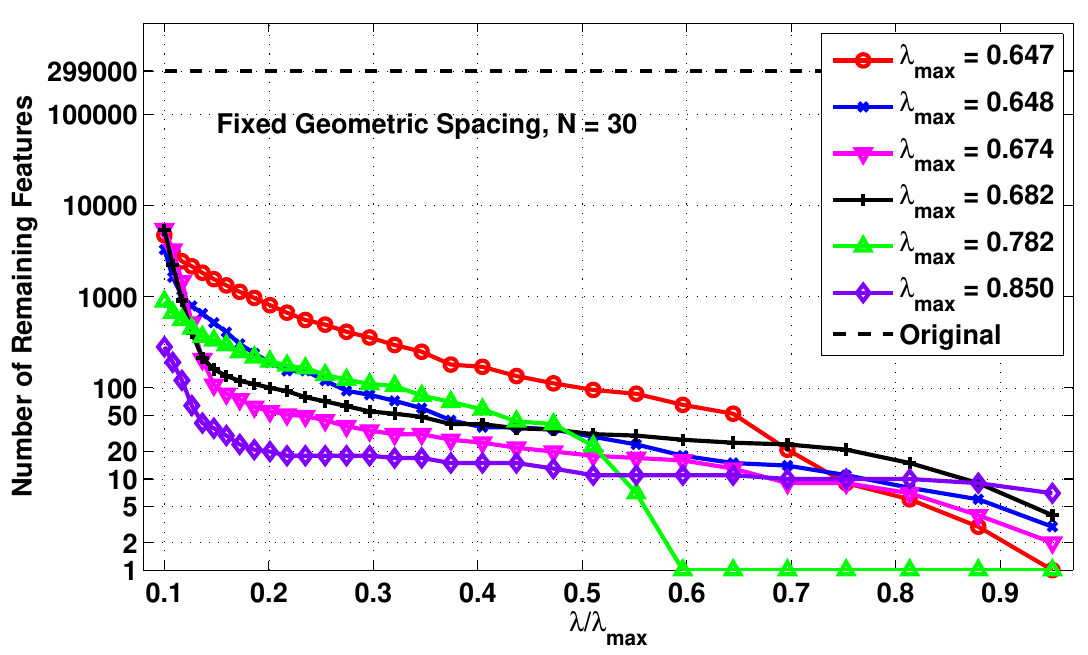}\\
\includegraphics[width=0.42\textwidth]{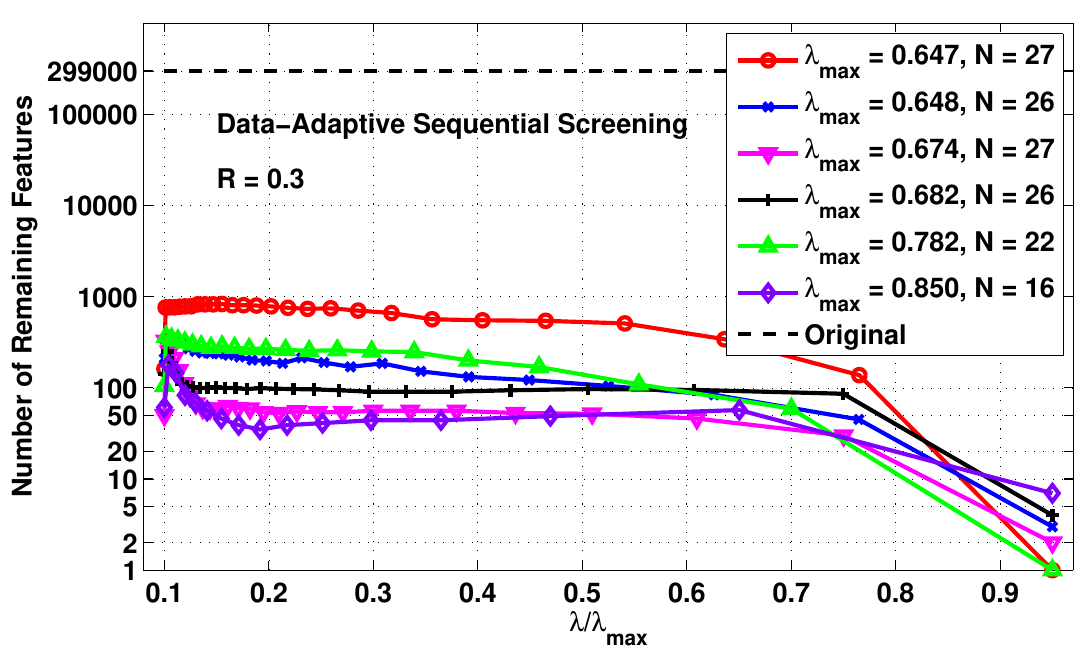}
\caption{\small
Sequential screening using on the NYT dataset.
(Top): Open loop geometric spacing using $N=30$ and THT test;
(Bottom): DASS with $R=0.3$.
For six problems $(\tv_i,\lat)$, $i=1,\dots,6$ with $\lat=0.1\lm$, the plotted points indicate the number of surviving features after THT screening of each instance $(\tv_i,\lambda_{ik})$, $k=1,\dots,N$, $i=1,\dots, 6$.
}\label{fig:NYT}
\vspace{-5mm}
\end{figure}

\section{Discussion and Conclusion}\label{sec:discuss}
In our survey we have emphasized separating the discussion of test structure from the problem of selecting its parameters. This allowed us to see connections between many existing screening tests, and enabled a clearer understanding  of screening in general. Hopefully this will be advantageous to the development of new tests and parameter selection methods.

For one-shot screening tests,
our numerical studies on THT strongly suggest that more complex region tests are indeed worthwhile. THT gave significant performance improvement beyond simpler tests in both rejection and speedup over important ranges of $\lambda/\lm$ values. But
the performance of one-shot tests is still inadequate at 
small values of $\lambda/\lm$.
The numerical studies also indicated a significant performance gap between using the default spherical bound and the best bound at the same sphere center. This
indicates the value of additional computation to improve the spherical bound.

Our empirical studies have shown that sequential screening (for example, DASS) can significantly extend useful screening performance to a wider range of $\lambda/\lm$.
DASS has the additional advantage that it selects both the number $N$ and the sequence $\{\lambda_k\}_{k=1}^N$ automatically.

Screening is critical when the dictionary will not fit into available memory. We have demonstrated a successful application of DASS to a very large NYT dataset, of dimension $102,660$ by $299,000$. To the best of the authors' knowledge, with constrained computational resources, screening is the only way to solve lasso problems of this size.

The concepts described in this survey should provide a firm foundation for understanding screening for related sparse representation problems.
This includes screening for the elastic net (reducible to lasso problem),
$\ell_1$ regularized logistic regression,
the graphical lasso,
and the group lasso \cite{Tibshirani2010Strong}.
In addition, SAFE methods have been developed for the sparse support vector machine and logistic regression in \cite{Ghaoui2012}, and the group lasso in \cite{dpp2015}. Recently, Liu et al. \cite{variational2013} have proposed safe screening for generalized sparse linear models. This makes use of the variational inequality that provides a necessary and  sufficient optimality condition for the dual problem.
Dash et al., \cite{DashICASSP2014}, consider screening for Boolean compressed sensing in which the objective is to select a sparse set of Boolean rules that are predictive of future outcomes. One of the screening rules developed is based on the duality arguments presented here. Targeting problems that use nuclear norm regularization to pursue a low rank matrix solution, Zhou et al., \cite{SSS}, have recently proposed safe subspace screening for nuclear norm regularized least squares problems.
Wang et al., \cite{SRCvSLS2013},  have integrated DASS with sparse representation classification, to speed up classification, and
Jao et al., \cite{JaoICASSP2014}, have applied screening to the problem of representing music in terms of an audio codebook (dictionary) for genre tagging.
We expect to see more such applications as the size of dictionaries increase.

\appendices

\section{Proofs \S\ref{sec:prelim}}\label{sec:proof_prelim}
The dual lasso problem \eqref{eq:dual} is obtained as follows.
Setting  $\vz=\tv-\Dict\wv$ in \eqref{eq:iterW} gives the constrained problem:
$\min_{\vz, \wv} \half ~\vz^{T}\vz+\lambda\|\wv\|_{1}$, subject to $\vz=\tv-\Dict\wv$.
Form the Lagrangian
$\mathcal{L}(\vz,\wv,\vmu)
= \half~\vz^{T}\vz+\lambda\|\wv\|_{1}
+\vmu^{T}(\tv-\Dict\wv-\vz)$
and compute the subdifferentials
with respect to $\vz$ and $\wv$. Using
the condition that $\vzero$ must be in each subdifferential gives $\vmu=\vzo$
and the constraints $|\vmu^{T}\at_{i} |\leq \lambda$, $i=1,\dots,\ncw$.
The above equations allow the elimination
of $\vz$ and $\wv$ from $\mathcal{L}$.
This leads to the dual problem:
$\max_{\vmu} \half~\|\tv\|^{2}_{2}
-\half \|\vmu-\tv\|_{2}^{2}$,
subject to $|\vmu^{T}\at_{i} |\leq \lambda$, $i=1,\dots,\ncw$.
The change of variable $\vt=\vmu/\lambda$ then gives \eqref{eq:dual}.
By construction, the primal and dual solutions $\wvo$ and $\vto$
are related through \eqref{eq:relationship}.

\section{Proofs \S\ref{sec:screen}}\label{sec:prscreen}

\begin{proof}[Theorem \ref{thm:crt}]
The proof for the lasso is given above the theorem statement.
For the nn-lasso, only the definition of the active set changes.
\end{proof}

\begin{proof}[Corollary \ref{cor:compliantS}]
In the proof of Theorem \ref{thm:crt}, the
inclusion $\bar S\subseteq \bar A(\vto)$, gives
$i\in \bar S$ implies $\weo_{i}=0$.
\end{proof}

\section{Proofs \S\ref{sec:region}} \label{sec:prregion}

\begin{proof}[Corollary \ref{cor:comregiontest}]
This is proved above the corollary.
\end{proof}

\begin{proof}[Lemma \ref{lem:subset}]
	Assume $\RR_1 \subseteq \RR_2$.
	If $\RR_{1}=\emptyset$, the result is clear.
	Hence assume $\RR_{1}\neq\emptyset$.
	Note that $\mu_{\RR_1}(\at)
	=\max_{\vt\in \RR_{1}}\vt^{T}\at \leq
	\max_{\vt\in \RR_{2}}\vt^{T}\at
	=\mu_{\RR_2}(\at)$.
	For the lasso, if $\at_i$ is rejected
	by $T_{\RR_2}$,
	then $\mu_{\RR_2}(\at_{i})<1$ and
	$\mu_{\RR_2}(-\at_{i})<1$.
	Hence  $\mu_{\RR_1}(\at_{i})<1$ and
	$\mu_{\RR_1}(-\at_{i})<1$.
	So $\at_i$ is also rejected by
	$T_{\RR_1}$. Therefore $T_{\RR_2}
	\comp T_{\RR_1}$. The proof for the
	nn-lasso is similar.
\end{proof}

\section{Proofs \S\ref{sec:st}}\label{sec:prst}

\begin{proof}[Lemma \ref{lem:maxStb}]
	By Cauchy-Schwarz:
	$	\vt^{T} \at
	=  (\vt-\vq)^{T}\at + \vq^{T}\at
	\leq \norm{\vt-\vq}{2}\norm{\at}{2}
	+ \vq^{T}\at$
	with equality when $\vt-\vq$
	is aligned with $\at$.
	Then $\vt\in \SPH(\vq,r)$ ensures
	$\vt^{T}\at \leq  r\|\at\|_{2} +  \vq^{T}\at$
	with equality when $\|\vt-\vq\|_2=r$.
\end{proof}

\begin{proof}[Theorem \ref{thm:st}]
	For the nn-lasso we reject $\at_i$ if
	$\mu_{\SPH}(\at_{i}) <1$.
	So \eqref{eq:st} follows from \eqref{eq:musph}.
	For the lasso we reject $\at_i$ if
	$\mu_{\SPH}(\at_{i}) <1$ and
	$\mu_{\SPH}(-\at_{i}) <1$,
	i.e., if
	$\vq^{T}\at_{i}<(1-r\|\at_i\|_2)$ and
	$\vq^{T}\at_{i}>-(1-r\|\at_i\|_2)$.
	This gives \eqref{eq:st}.
	Note
	$\max\{\mu_{\SPH}(\at_i), \mu_{\SPH}(-\at_i)\}<1$
	$\Leftrightarrow$
	$\max \{\vq^T\at_i + r\|\at_i\|_2,
	-\vq^T\at_i +r\|\at_i\|_2\}<1$
	$\Leftrightarrow$
	$|\vq^T\at_i| < 1-r\|\at_i\|_2$.
	Thus \eqref{eq:st}
	and \eqref{eq:stavf} are equivalent.
\end{proof}

\section{Proofs \S\ref{sec:dt}}\label{sec:prdt}

\begin{proof}[Lemma \ref{lem:max-thTb}]
Solving \eqref{eq:optprobm} with $m=1$ is equivalent to solving the Lagrangian problem:
\begin{align}
\begin{split} \label{eq:DM_optim_dual}
&\max_{\mu, \sigma \geq 0} \ \min_{\vt}\ \
\mathcal{L}(\vt, \mu, \sigma) = -\vt^T\at\\
& \quad	+ \mu [(\vt-\vq)^T(\vt-\vq)-r^2]+\sigma (\vn^T\vt-c) .
\end{split}
\end{align}
Setting the derivative w.r.t. $\vt$ equal to zero yields
$\vt=\vq+\fracn{\at}{2\mu}-\fracn{\sigma\vn}{2\mu}$.
Substituting $\vt$ into $\mu_{\DM}$ and \eqref{eq:DM_optim_dual}:
\begin{align}
\mu_{\DM}(\at)
&=\at^T\vt=\at^T\vq+\fracn{\|\at\|_2^2}{2\mu}-\fracn{\sigma\at^T\vn}{2\mu}\label{eq:DMmu},\\
\mathcal{L}(\mu,\sigma)
&= \textstyle -\vq^T\at-\frac{\|\at\|^2}{4\mu}-\frac{\sigma^2}{4\mu}-\mu r^2+\sigma r \fr +\frac{\sigma t}{2\mu}, \label{eq:DMLms}
\end{align}
where
$\fr= \fracn{(\vq^T\vn-c)}{r}$ and $t = \vn^T\at$.
We now minimize this expression over
$\mu,\sigma \geq 0$.
Setting the derivatives of
$\mathcal{L}$ w.r.t. $\mu$, and $\sigma$
equal to $0$ yields two equations to solve
for $\mu$ and $\sigma$:
$\|\at\|_2^2+\sigma^2-4\mu^2 r^2 = 2\sigma t $
and $\sigma = 2\mu r \fr+t$.
There are two cases: (A) If $t\geq -\fr\|\at\|_2$, then
$    \sigma =t + \fr \sqrt{ \frac{\|\at\|_2^2-t^2}{1-\fr^2} }$
    \quad\textrm{and}\quad
$    \mu =\frac{1}{2r}\sqrt{\frac{\|\at\|_2^2-t^2}{1-\fr^2}}$;
and (B) If $t < -\fr\|\at\|_2$,
then $\sigma =0$ and $\mu =\fracn{\|\at\|_2}{(2r)}$.
Substitution of these expressions into \eqref{eq:DMmu} yields the result in Lemma \ref{lem:max-thTb}.
\end{proof}

\begin{proof}[Theorem \ref{thm:dometest}]
For the nn-lasso, we reject $\at$ if
$\mu_{\DM}(\at)=\vq^T\at +\maxu_1(\vn^{T}\at,\|\at\|_2) < 1$, i.e., if $\vq^{T}\at < 1-\maxu_1(\vn^{T}\at,\|\at\|_2) =\tu(\vn^{T}\at,\|\at\|_2)$.
For the lasso we reject $\at$ if
$\vq^T\at +\maxu_1(\vn^T\at,\|\at\|_2) <1$ and
$-\vq^T\at +\maxu_1(-\vn^T\at,\|\at\|_2)\} <1$, i.e., if
$\vq^T\at < 1-\maxu_1(\vn^T\at,\|\at\|_2) =\tu(\vn^T\at,\|\at\|_2)$
and
$\vq^T\at > -(1-\maxu_1(-\vn^T\at,\|\at\|_2)) =\tl(\vn^T\at,\|\at\|_2)$.
\end{proof}

\section{Proofs \S\ref{sec:refine}}\label{sec:prrefine}

We make use of the following lemma from \cite{FCSS2013}.

\begin{lemma}\label{lem:diamR}
If $\RR=S(\vq,r)\cap\{\vn^T\vt\leq c\}$ is nonempty, then
$$
\diam(\RR) =
\begin{cases}
2\sqrt{r^2-(\vn^T\vq-c)^2}, & \text{if } \vq \notin \RR;\\
2r, & \text{otherwise}.
\end{cases}
$$
\end{lemma}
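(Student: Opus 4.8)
The plan is to split on whether $\vq$ lies in $\RR$, which (since $\|\vn\|_2=1$) is governed by the sign of $d:=\vn^T\vq-c$, the signed distance from $\vq$ to the bounding hyperplane along $\vn$. First I would record that nonemptiness of $\RR$ forces $d\leq r$: the point $\vq-r\vn$ minimizes $\vn^T\vt$ over $\SPH(\vq,r)$ and has $\vn^T(\vq-r\vn)=c+d-r$, which must be $\leq c$. I would also note at the outset that the lower-bound constructions below use a unit vector $\vu$ orthogonal to $\vn$, hence implicitly $\dd\geq2$; this holds throughout the paper, and the statement genuinely fails for $\dd=1$.

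For the case $\vq\in\RR$ (that is, $d\leq0$), the upper bound is immediate since $\RR\subseteq\SPH(\vq,r)$, whose diameter is $2r$; for the lower bound one checks that $\vq\pm r\vu$ both lie in $\RR$ (they are on the sphere and $\vn^T(\vq\pm r\vu)=\vn^T\vq=c+d\leq c$) and are at distance $2r$. So $\diam(\RR)=2r$ here.

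For the case $\vq\notin\RR$ (that is, $0<d\leq r$), set $\vqb=\vq-d\vn$, so $\vn^T\vqb=c$, and $\rd=\sqrt{r^2-d^2}$. The lower bound $\diam(\RR)\geq2\rd$ comes from the pair $\vqb\pm\rd\vu$: each has $\vn^T(\vqb\pm\rd\vu)=c$ and $\|\vqb\pm\rd\vu-\vq\|_2^2=d^2+\rd^2=r^2$, so both lie in $\RR$, at distance $2\rd$. The substantive part is the matching upper bound. Given $\vx,\vy\in\RR$, I would decompose orthogonally along $\vn$ (writing $\vx^\perp=\vx-(\vn^T\vx)\vn$, and similarly for $\vy,\vq$), so that $\|\vx-\vy\|_2^2=\|\vx^\perp-\vy^\perp\|_2^2+(\vn^T\vx-\vn^T\vy)^2$. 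The ball constraint on $\vx$ reads $\|\vx^\perp-\vq^\perp\|_2^2\leq r^2-(\vn^T\vx-\vn^T\vq)^2$; with $u:=\vn^T\vq-\vn^T\vx$ and $v:=\vn^T\vq-\vn^T\vy$ one has $d\leq u,v\leq r$ (the lower bounds because $\vn^T\vx,\vn^T\vy\leq c<\vn^T\vq$, the upper bounds because the right side above must be nonnegative), hence $\|\vx^\perp-\vq^\perp\|_2\leq\sqrt{r^2-u^2}$ and likewise for $\vy$. The triangle inequality then gives $\|\vx^\perp-\vy^\perp\|_2\leq\sqrt{r^2-u^2}+\sqrt{r^2-v^2}$ and $(\vn^T\vx-\vn^T\vy)^2=(u-v)^2$, so it remains to prove the scalar inequality
\[
2r^2-2uv+2\sqrt{(r^2-u^2)(r^2-v^2)}\;\leq\;4(r^2-d^2)
\]
for all $d\leq u,v\leq r$; its left side equals $\bigl(\sqrt{r^2-u^2}+\sqrt{r^2-v^2}\bigr)^2+(u-v)^2$, and it rearranges to $\sqrt{(r^2-u^2)(r^2-v^2)}\leq r^2-2d^2+uv$. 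By AM--GM the left side of the latter is at most $r^2-\tfrac{1}{2}(u^2+v^2)$, and $r^2-\tfrac{1}{2}(u^2+v^2)\leq r^2-2d^2+uv$ because $(u+v)^2\geq(2d)^2$. Hence $\|\vx-\vy\|_2\leq2\sqrt{r^2-d^2}$, closing the case.

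The only real obstacle is this upper bound: one must rule out that the component of a chord of $\RR$ along $\vn$ lets two points exceed the base-disk diameter $2\rd$. The clean route I see is to reduce, as above, to the two-variable scalar inequality and finish it with AM--GM together with $u+v\geq2d$; a tempting shortcut via orthogonal projection of $\vx,\vy$ onto the hyperplane does not work, since that projection is non-expansive, i.e.\ contracts in the wrong direction.
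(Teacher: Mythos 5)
Your proof is correct. Note that the paper itself does not prove this lemma: it is imported verbatim from \cite{FCSS2013} in Appendix F, so there is no in-paper argument to compare yours against. Your self-contained derivation is sound: the nonemptiness observation $\vn^T\vq-c\leq r$, the orthogonal decomposition of a chord into its component along $\vn$ and its component in $\vn^{\perp}$, the reduction of the upper bound to the scalar inequality $\sqrt{(r^2-u^2)(r^2-v^2)}\leq r^2-2d^2+uv$ for $d\leq u,v\leq r$, and its resolution via AM--GM together with $u+v\geq 2d$ all check out, as do the two-point lower-bound constructions in both cases. Your explicit caveat that the lower bounds require a unit vector orthogonal to $\vn$ (hence $\dd\geq 2$, with the statement failing in dimension one) is a fair and correct observation about an implicit hypothesis; in the paper's setting $\dd$ is the data dimension, so this is harmless. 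The degenerate boundary case $\vn^T\vq-c=r$, where $\RR$ is a single point, is also covered by your bounds.
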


\begin{proof} [Lemma \ref{lem:circumsphere}]
The assumption that $0<\fr_d \leq 1$ is
equivalent to $\vq \notin \DM$.
Hence, under this assumption, by Lemma \ref{lem:diamR}, and equations   \eqref{eq:frd} and \eqref{eq:rb},
the diameter of $\DM=\DM(\vq_{1},r_{1};\vn,c)$ is
$2\sqrt{r_1^2-(\vn^T\vq_1-c)^2}
=2 r_1 \sqrt{1-\fr_{d}^{2}}=2\rd$.
So the diameter of the circumsphere of $\DM$ must be at least $2\rd$.

To show that the sphere  $\SPH(\vq_{d},\rd)$ with center $\vq_d$ and radius $\rd$ is the circumsphere, we show that every point $\vp$ on the boundary of $\DM$ is contained in $\SPH(\vq_d,\rd)$.
We can write $\vp=\vq_d + \alpha \vv + \beta \vn$,
where $\vv$ is a unit norm vector in $\vn^{\perp}$
and $\alpha, \beta$ are scalars with $\beta\leq 0$.
We need to show that $\|\vp-\vq_d\|_2^2 =\alpha^2+\beta^2\leq \rd^2$.
Since $\vp$ is on the boundary of $\DM$, either
$\beta=0$ and $\alpha^2\leq \rd^2$, or $\beta<0$ and $\|\vp-\vq\|_2^2 =r^2$.
In the first case, $\|\vp-\vq_d\|_2^2
= \alpha^2 \leq \rd^2$.
In the second case,
$r^2 = \|\vp-\vq\|_2^2 =\|\vq_d-\vq +\alpha\vv+\beta\vn\|_2^2
= \|( -\fr_d r+\beta)\vn +\alpha \vv\|_2^2
= \fr_d^2 r^2 -2\fr_d r \beta +\alpha^2 +\beta^2$.
Hence
$ \alpha^2+\beta^2
= r^2(1-\fr_d^2) +2\beta \fr_d r < r^2(1-\fr_d^2) =\rd^2$.
\end{proof}

\section{Proofs \S\ref{sec:THT}}\label{sec:prTHT}
\begin{proof}[Lemma \ref{lem:dmax}]
We first solve \eqref{eq:optprobm} ($m=2$) with
$\|\at\|_2=1$ by solving the Lagrangian problem:
\begin{align}
\begin{split}\label{eq:optimization_formulation_dual}
&\max_{\mu, \sigma, \lambda \geq 0} \ \min_{\vt}\ \
\mathcal{L}(\vt, \mu, \sigma, \lambda)
	= -\vt^T\at \\
	&\quad + \mu [(\vt-\vq)^T(\vt-\vq)-r^2] +\sigma (\vn_1^T\vt-c_1)\\
	&\quad +\lambda(\vn_2^T\vt-c_2).
\end{split}
\end{align}
Solving $\partial\mathcal{L}/\partial \vt=0$
for $\vt$ and substitution into $\mu_{\RR}$ and
 \eqref{eq:optimization_formulation_dual} yields:
\begin{align}
\begin{split} \label{eq:theta_relation}
&\mu_{\RR}(\at) =\textstyle \at^T\vq+\frac{1}{(2\mu)} -\frac{\sigma \at^T\vn_1}{(2\mu)}
-\frac{\lambda \at^T\vn_2}{(2\mu)} ,\\
&\mathcal{L}(\mu, \sigma, \lambda)
= \textstyle -\vq^T\at-\frac{1}{4\mu}-\frac{\sigma^2}{4\mu}
-\frac{\lambda^2}{4\mu}-\mu r^2\\
&\textstyle \quad  +\sigma r \fr_1+\lambda r \psi_2+\frac{\sigma}{2\mu}t_1+\frac{\lambda}{2\mu}t_2-\frac{\lambda\sigma}{2\mu}\tau,
\end{split}
\end{align}
where
$\psi_1 = \fracn{(\vq^T\vn_1-c_1)}{r}$,
$\psi_2 = \fracn{(\vq^T\vn_2-c_2)}{r}$,
$t_1 = \vn_1^T\at$,
$t_2 = \vn_2^T\at$ and
$\tau = \vn_1^T\vn_2$.
Setting the derivatives of $\mathcal{L}$ w.r.t.
$\mu, \sigma$ and $\lambda$, respectively, to zero yields:
\begin{equation} \label{eq:parameter_relation}
\begin{split}
&1+\sigma^2+\lambda^2-4\mu^2 r^2 = 2\sigma t_1+2\lambda t_2 -2\lambda \sigma \tau\\
&\sigma = 2\mu r \fr_1+t_1-\lambda\tau,\quad
\lambda = 2\mu r \fr_2+t_2-\sigma \tau .
\end{split}
\end{equation}
(Case I) If $\lambda = 2\mu r \fr_2+t_2-\sigma \tau<0$, then set
$\lambda =0$. Substitution into \eqref{eq:parameter_relation} yields:
$\sigma = 2\mu r\fr_1+t_1$
and
$1+\sigma^2-4\mu^2r^2-2\sigma t_1=0$.
There are two subcases:\\
    (IA) If $t_1>-\fr_1$, then
   $\sigma =t_1+\fr_1\sqrt{\frac{1-t_1^2}{1-\fr_1^2}}$,
   $\mu =\frac{1}{2r}\sqrt{\frac{1-t_1^2}{1-\fr_1^2}}$ and
    $\lambda < 0$ $\Leftrightarrow$
    $(\fr_2-\fr_1\tau)\sqrt{\frac{1-t_1^2}{1-\fr_1^2}}+t_2-t_1\tau<0$.
(IB) If $t_1 \leq -\fr_1$, then
$\sigma =0$,
$\mu =\fracn{1}{(2r)}$
and
$\lambda < 0 \Leftrightarrow t_2<-\fr_2$.

\noindent
(Case II) Suppose $\lambda = 2\mu r \fr_2+t_2-\sigma \tau>0$.
Again there are two subcases:
(IIA) If $\sigma = 2\mu r \fr_1+t_1-\lambda\tau<0$, then set $\sigma=0$.
Substitution into \eqref{eq:parameter_relation} yields:
$\lambda = 2\mu r \fr_2+t_2$
and
$1+\lambda^2-4\mu^2r^2-2\lambda t_2=0$.
Solving gives,
\begin{equation*}
\textstyle
\lambda =t_2+\fr_2\sqrt{\frac{1-t_2^2}{1-\fr_2^2}}
\quad\textrm{and}\quad
\mu = \frac{1}{2r}\sqrt{\frac{1-t_2^2}{1-\fr_2^2}}
\end{equation*}
with
$\lambda >0$ $\Leftrightarrow$ $t_2>-\fr_2$
and
$\sigma <0$ $\Leftrightarrow$ $(\fr_1-\fr_2\tau)
	\sqrt{\frac{1-t_2^2}{1-\fr_2^2}}+t_1-t_2\tau<0$.
(IIB) If $\sigma = 2\mu r \fr_1+t_1-\lambda\tau>0$, then substituting
$\lambda = 2\mu r \fr_2+t_2-\sigma \tau$
and
$\sigma = 2\mu r \fr_1+t_1-\lambda\tau$
into \eqref{eq:parameter_relation} yields,
$(1-\tau^2)\sigma = 2\mu r (\fr_1-\fr_2\tau)+t_1-t_2\tau$
and
$(1-\tau^2)\sigma^2+2\sigma(t_2\tau-t_1)+4\mu^2r^2(\fr_2^2-1)+1-t_2^2=0$.
Solving these equations gives:
    $\mu 		= \frac{1}{2r}\Delta$,
    $\lambda	= \frac{\fr_2-\fr_1\tau}{1-\tau^2}\Delta
    		+\frac{t_2-t_1\tau}{1-\tau^2}>0$, and
    $\sigma	= \frac{\fr_1-\fr_2\tau}{1-\tau^2}\Delta
    		+ \frac{t_1-t_2\tau}{1-\tau^2}>0$,
where
$    \Delta = \sqrt{\frac{1+2t_1t_2\tau-t_1^2-t_2^2-\tau^2}{1+2\fr_1\fr_2\tau-\fr_1^2-\fr_2^2-\tau^2}}$.

Substituting the expressions for $\mu, \sigma$ and $\lambda$ under the various conditions into \eqref{eq:theta_relation} yields:

\medskip
\noindent
[(1)] $t_1<-\fr_1, t_2<-\fr_2$: \quad $\mu_{\ADM}(\at)=\vq^T\at+r$.
\medskip

\noindent
[(2)] $ t_2>-\fr_2, \frac{t_1-t_2\tau}{\sqrt{1-t_2^2}}<\frac{\fr_2\tau-\fr_1}{\sqrt{1-\fr_2^2}}$: \quad
    $$\mu_{\ADM}(\at)=\vq^T\at+r\sqrt{(1-t_2^2)(1-\fr_2^2)}-rt_2\fr_2 .$$

\noindent
[(3)] $t_1>-\fr_1, \frac{t_2-t_1\tau}{\sqrt{1-t_1^2}}<\frac{\fr_1\tau-\fr_2}{\sqrt{1-\fr_1^2}}$: \quad
$$    \mu_{\ADM}(\at)=\vq^T\at+r\sqrt{(1-t_1^2)(1-\fr_1^2)}-rt_1\fr_1 .$$

\medskip
\noindent
[(4)]
$\frac{(t_1-t_2\tau)}{\sqrt{1+2t_1t_2\tau-t_1^2-t_2^2-\tau^2}}
> \frac{(\fr_2\tau-\fr_1)}{\sqrt{1+2\fr_1\fr_2\tau-\fr_1^2-\fr_2^2-\tau^2}}$
and
$\frac{(t_2-t_1\tau)}{\sqrt{1+2t_1t_2\tau-t_1^2-t_2^2-\tau^2}}>\frac{(\fr_1\tau-\fr_2)}{\sqrt{1+2\fr_1\fr_2\tau-\fr_1^2-\fr_2^2-\tau^2}}$:
    \begin{align*}
    \begin{split}
    \mu_{\ADM}(\at) & =\vq^T\at-\frac{r}{1-\tau^2}((\fr_1-\fr_2\tau)t_1+(\fr_2-\fr_1\tau)t_2)\\
    &+\frac{r}{1-\tau^2}\sqrt{(1-\tau^2+2\fr_1\fr_2\tau-\fr_1^2-\fr_2^2)} \\
    &\times
    \sqrt{(1-\tau^2+2t_1t_2\tau-t_1^2-t_2^2)}.
    \end{split}
    \end{align*}
For general $\at$ we use
$\mu_{\ADM} (\at)
= \|\at\|_2 \mu_{\ADM}(\at/\|\at\|_2)$.
So in each of the above expressions we replace $\at$, $t_1=\vn_1^T\at$,
and $t_2=\vn_2^T\at$ by $\at/\|\at\|_2$, $t_1/\|\at\|_2$ and
$t_2/\|\at\|_2$, respectively. Then multiply each expression by $\|\at\|_2$.
This yields the result in Lemma \ref{lem:dmax}.
\end{proof}

\begin{proof}[Theorem \ref{thm:THT}]
This is almost identical to the proof of Theorem \ref{thm:dometest}
and is hence omitted.
\end{proof}

\section{Proofs \S\ref{sec:comp}}\label{sec:prcomp}

\begin{proof}[Lemma \ref{lem:conjt}]
Note
$\max\{\mu_{\RR}(\vb_{i}),\mu_{\RR}(-\vb_{i})\}
= \max_{\vt\in\RR} \max\{\vt^{T}\vb,-\vt^{T}\vb\}
= \max_{\vt\in\RR} |\vt^{T}\vb|$.
If $\RR_{1}$ or $\RR_{2}$ is empty, the result is clear. Hence assume each is nonempty.
For the lasso, if $\vb_i$ is rejected by
$T_{\RR_1} \disjct T_{\RR_2}$,
then either $\max_{\vt\in\RR_1} |\vt^T\vb_i| < 1 $ or $\max_{\vt\in\RR_2} |\vt^T\vb_i| < 1 $.
Without loss of generality assume
$\max_{\vt\in\RR_1} |\vt^T\vb_i| < 1 $.
Since $\RR_1\cap \RR_2$ is a subset of $\RR_1$,
this implies that $\max_{\vt\in\RR_1\cap\RR_2} |\vt^T\vb_i| \leq \max_{\vt\in\RR_1} |\vt^T\vb_i| < 1$,
so $\vb_i$ is also rejected by $T_{\RR_1 \cap \RR_2}$. Therefore
$T_{\RR_1} \disjct T_{\RR_2} \comp T_{\RR_1 \cap \RR_2} $.
The proof for the nn-lasso is similar.
\end{proof}

\section*{Acknowledgment}
This work partially supported by NSF grant CIF 1116208.

\bibliographystyle{IEEEtran}
{\small
\bibliography{Screening2013}
}

\begin{IEEEbiography}{Zhen James Xiang}
received the B.E. degree B.E. in 2007 from  Department of Electrical Engineering, Tsinghua University, China, graduating with honors and GPA rank 1/164. He received the M.A. degree and the Ph.D. degree in Electrical Engineering from Princeton University in 2009 and 2012 respectively.  He is currently a quantitative researcher at Citadel LLC in Chicago.
He has received several awards and honorable mentions for his scholarship, including:
Best Student Paper Honorable Mention Award, NIPS (2011); Charlotte Elizabeth Procter Honorific Fellowship of Princeton University (2011-2012);
Qualcomm Innovation Fellowship Finalist (2011);
Francis Robin Upton Fellowship, Princeton University (2007-2011);
Distinguished Graduate Award of Beijing City (2007);
Distinguished Graduate of Tsinghua University (2007);
and in 2003, a Gold Medal at the International Mathematics Olympiad, Tokyo, Japan, where he ranked 12th among 457 participants from 84 countries.
\end{IEEEbiography}
\begin{IEEEbiography}{Yun Wang}
received the B.S. degree in Electrical Engineering with highest honors from Shanghai Jiao Tong University in 2011 and the M.A. and Ph.D. degrees in Electrical Engineering from Princeton University in 2013 and 2015, respectively.  His doctoral research focused on machine learning, optimization and statistical signal processing. He joined Amazon as a machine learning scientist in the fall of 2015. His honors include the Distinguished Graduate Award of the City of Shanghai (2011) and the Anthony Ephremides Fellowship in Electrical Engineering, Princeton University (2011).
\end{IEEEbiography}
\begin{IEEEbiography}{Peter J. Ramadge}
received the B.Sc., B.E. and the M.E. degree from the University of Newcastle, Australia, and the Ph.D. degree in Electrical Engineering from the University of Toronto, Canada. He joined the faculty of Princeton University in September 1984, where he is currently Gordon Y. S. Wu Professor of Engineering, and Professor of Electrical Engineering.
He is a Fellow of the IEEE and a member of SIAM. He has received several honors and awards including: a paper selected for inclusion in IEEE book ÒControl Theory: Twenty Five Seminal Papers (1932-1981)Ó; an Outstanding Paper Award from the Control Systems Society of the IEEE; a listing in ISIHighlyCited.com; the Convocation Medal for Professional Excellence from the University of Newcastle, Australia;
an IBM Faculty Development Award;
and the University Medal from the University of Newcastle, Australia.
His current research interests are in statistical signal processing, machine learning and various applications, including: data analysis, classification, prediction, medical and fMRI data analysis, and video and image processing.
\end{IEEEbiography}

\vfill
\end{document}